\newtheorem{theorem}{Theorem}[section]
\newtheorem{lemma}[theorem]{Lemma}
\newtheorem{corollary}[theorem]{Corollary}
\newtheorem{definition}{Definition}
\newtheorem{remark}{Remark}
\definecolor{Red}{rgb}{1,0,0}
\definecolor{Blue}{rgb}{0,0,1}
\definecolor{Olive}{rgb}{0.41,0.55,0.13}
\definecolor{Green}{rgb}{0,1,0}
\definecolor{MGreen}{rgb}{0,0.8,0}
\definecolor{DGreen}{rgb}{0,0.55,0}
\definecolor{Yellow}{rgb}{1,1,0}
\definecolor{Cyan}{rgb}{0,1,1}
\definecolor{Magenta}{rgb}{1,0,1}
\definecolor{Orange}{rgb}{1,.5,0}
\definecolor{Violet}{rgb}{.5,0,.5}
\definecolor{Purple}{rgb}{.75,0,.25}
\definecolor{Brown}{rgb}{.75,.5,.25}
\definecolor{Grey}{rgb}{.5,.5,.5}
\definecolor{Pink}{rgb}{1,0,1}
\definecolor{DBrown}{rgb}{.5,.34,.16}
\definecolor{Black}{rgb}{0,0,0}
\def\l{\ell}
\def\xx{\alpha}
\def\ind{\mathbbm{1}}
\def\se{{\psi_1}}
\def\sg{{\psi_2}}
\def\N{\mathcal{N}_{[]}}
\def\F{\mathcal{F}}
\def\EE{\mathfrak{E}}
\def\tol{{\epsilon}}
\def\dphi{{\phi^{(1)}}}
\def\ddphi{{\phi^{(2)}}}
\def\ddddphi{{\phi^{(4)}}}
\def\X{\bold{X}}
\def\Q{\bold{Q}}
\def\I{\bold{I}}
\def\hmu{\hat{\mu}}
\def\hth{\hat{\beta}}
\def\th{{\beta}}
\def\tu{{\tilde{u}}}
\def\tl{{\tilde{l}}}
\def\e{{\epsilon}}
\def\hsig{{\hat{\sigma}}}
\def\T{{\mathcal T}}
\def\Hh{{\mathcal H}}
\def\reals{{\mathbb R}}
\def\<{\langle}
\def\>{\rangle}
\def\E{{\mathbb E}}
\def\P{{\mathbb P}}
\def\proj{{\mathcal P}}
\def\grad{{\nabla}}
\def\gradS{\boldsymbol{\nabla}^2}
\def\normal{{\sf N}}
\def\Sig{{\boldsymbol \Sigma}}
\def\O{{\mathcal O}}
\def\QQ{{\mathcal Q}}
\def\D{\mathfrak{D}}
\def\argmin{{\rm argmin}}
\def\trace{{\rm Tr}}
\def\SigH{\widehat{\boldsymbol {\Sigma}}}
\def\ALG{NewSt }
\def\ALGv{\text{Newton-Stein method} }
\newcommand{\eq}[1]{\begin{alignat}{3}#1\end{alignat}}
\newcommand{\eqn}[1]{\begin{alignat*}{3}#1\end{alignat*}}
\newcommand{\commentout}[1]{}
\begin{document}

\title{Newton-Stein Method: \\
{\fontsize{17}{6}\selectfont An optimization method for GLMs 
via Stein's Lemma}}

\author{
Murat A. Erdogdu\thanks{Department of Statistics,
Stanford University}
}

\date{}

\maketitle

\begin{abstract}
We consider the problem of efficiently computing the maximum likelihood estimator in 
\emph{Generalized Linear Models} (GLMs)
when the number of observations is much larger than the number of coefficients ($n\!\gg\! p\! \gg\! 1$). 
In this regime, 
optimization algorithms can 
immensely benefit from
approximate second order information.
We propose an alternative way of constructing the curvature information by formulating
it as an estimation problem and 
applying a \emph{Stein-type lemma}, which allows further improvements through sub-sampling and
eigenvalue thresholding.
Our algorithm 
enjoys fast convergence rates, resembling that of second order methods, 
with modest per-iteration cost. 
We provide its convergence analysis for the general case 
where the rows of the design matrix are samples from a sub-gaussian distribution.
We show that the convergence has two phases, a
quadratic phase followed by a linear phase. 
Finally,
we empirically demonstrate that our algorithm 
achieves the highest performance
compared to various algorithms on several datasets.

\end{abstract}


\section{Introduction}

Generalized Linear Models (GLMs) play a crucial role in 
numerous statistical and machine learning problems.
GLMs formulate the natural parameter in 
exponential families as a linear model and provide a miscellaneous 
framework for statistical methodology and 
supervised learning tasks. Celebrated examples include linear, logistic, 
multinomial regressions and applications to graphical models
\cite{nelder1972generalized,mccullagh1989generalized,koller2009probabilistic}.

In this paper, we focus on how to solve the maximum likelihood problem
efficiently in the GLM setting when 
the number of observations $n$ is much larger than the dimension 
of the coefficient vector $p$, i.e., $n \gg p \gg 1$. 
GLM optimization task is typically expressed as a minimization problem 
where the objective function is the negative log-likelihood that is denoted by $\ell(\th)$ where $\th \in \reals^p$ is the coefficient vector. Many optimization algorithms are available for such minimization problems \cite{bishop1995neural,Boyd:2004,nesterov2004introductory}. However, only a few uses the special structure of GLMs. In this paper, we consider updates that are specifically designed for GLMs, which are of the from
\eq{\label{eq::update}
\th \leftarrow \th -\gamma \Q \grad_\th \ell(\th)\, ,
}
where $\gamma$ is the step size and $\Q$ is a scaling matrix 
which provides curvature information.

For the updates of the form Eq.~(\ref{eq::update}), the performance of the algorithm is mainly determined 
by the scaling matrix $\Q$. Classical \emph{Newton's Method} (NM) and \emph{Natural Gradient} (NG) descent can be recovered by simply taking $\Q$ to be the inverse Hessian and the inverse Fisher's information at the current iterate, respectively \cite{amari1998natural,nesterov2004introductory}. Second order methods may achieve
 quadratic convergence rate, 
 yet they suffer from excessive cost of computing the scaling matrix at every iteration. 
On the other hand, if we take $\Q$ to be the identity matrix, we recover the simple \emph{Gradient Descent} (GD) method which has a linear convergence rate. Although GD's convergence rate is slow compared to 
that of second order methods such as NM, modest per-iteration cost makes it practical for large-scale optimization. 

The trade-off between the convergence rate and per-iteration cost has been extensively studied \cite{bishop1995neural,Boyd:2004,nesterov2004introductory}. In $n \gg p$ regime,
the main objective is to construct a scaling matrix $\Q$ that is computational feasible and provides 
sufficient curvature information. 
For this purpose, several Quasi-Newton methods have been proposed \cite{bishop1995neural,nesterov2004introductory}.
Updates given by Quasi-Newton methods satisfy an equation which is often called \emph{Quasi-Newton relation}.
A well-known member of this class of algorithms is the 
\emph{Broyden-Fletcher-Goldfarb-Shanno} (BFGS) algorithm 
\cite{broyden1970convergence,fletcher1970new,goldfarb1970family,shanno1970conditioning}.

In this paper, we propose an algorithm which utilizes the special structure of GLMs by relying on a Stein-type lemma \cite{stein1981estimation}. It attains fast convergence rates with low per-iteration cost. We call our algorithm \emph{Newton-Stein Method} which we abbreviate as \emph{\ALG}\!. Our contributions can be summarized as follows:
\begin{itemize}
\item We recast the problem of constructing a scaling matrix as an estimation problem and
apply a Stein-type lemma along with sub-sampling
to form a computationally feasible $\Q$.
\item Our formulation allows further improvements through sub-sampling techniques and eigenvalue thresholding.
\item Newton method's $\O(np^2+p^3)$ per-iteration cost is replaced by $\O(np+p^2)$ per-iteration cost and a one-time $\O(n|S|^2)$ cost, where $|S|$ is the sub-sample size.
\item Assuming that the rows of the design matrix are i.i.d. and have bounded support (or sub-gaussian), and denoting the iterates of Newton-Stein Method by $\{\hth^t\}_t$, we prove a bound of the form
\eq{\label{eq::boundMainResult}
\|\hth^{t+1} - \th_*\|_2
\leq \tau_1 \|\hth^t - \th_*\|_2+ \tau_2 \|\hth^t - \th_*\|_2^2,
}
where $\th_*$ is the minimizer and $\tau_1$, $\tau_2$ are the convergence coefficients. The above bound implies that the convergence starts with a quadratic phase and transitions into linear as the iterate gets closer to $\th_*$. 
\item We demonstrate the performance of \ALG on four datasets by comparing it to
commonly used algorithms.
\end{itemize}

The rest of the paper is organized as follows:
Section \ref{sec::relatedWork} surveys the related work and Section \ref{sec::notation} introduces the notations used throughout the paper. Section \ref{sec::glm} briefly discusses the GLM framework and its relevant properties.
In Section \ref{sec::algorithm}, we introduce \ALGv\!\!, develop its intuition, and discuss the computational aspects.
Section \ref{sec::theory} covers the theoretical results and in 
Section \ref{sec::parameters} we discuss how to choose the algorithm parameters. 
Section \ref{sec::experiments} provides the empirical results where we compare the proposed algorithm with several other methods on four datasets. 
Finally, in Section \ref{sec::discussion}, we conclude with a brief discussion along with a few open questions.

\subsection{Related work} \label{sec::relatedWork}

There are numerous optimization techniques that can be used to find the maximum likelihood estimator in GLMs. 
For moderate values of $n$ and $p$, classical second order methods such as NM are commonly used.
In large-scale problems, data dimensionality is the main factor while choosing the right optimization method. 
Large-scale optimization has been extensively studied through online and batch methods. Online methods use a gradient (or sub-gradient) of a single, randomly selected observation to update the current iterate \cite{robbins1951stochastic}. Their per-iteration cost is independent of $n$, but the convergence rate might be extremely slow. There are several extensions of the classical stochastic descent algorithms (SGD), providing significant improvement and/or stability \cite{bottou2010,Duchi11,schmidt2013minimizing}.

On the other hand, batch algorithms enjoy faster convergence rates, though their per-iteration cost may be prohibitive. In particular, second order methods enjoy quadratic convergence, but constructing the Hessian matrix generally requires excessive amount of computation. Many algorithms aim at formimg an approximate, cost-efficient scaling matrix. In particular, this idea lies at the core of Quasi-Newton methods \cite{bishop1995neural,nesterov2004introductory}. 

Another approach to construct an approximate Hessian makes use of sub-sampling techniques \cite{martens2010deep,byrd2011use,VinyalsAISTATS12,erdogdu2015convergence-long}. Many contemporary learning methods rely on sub-sampling as it is simple and it provides significant boost over the first order methods. Further improvements through conjugate gradient methods and Krylov sub-spaces are available. Sub-sampling can also be used to obtain an approximate solution, with certain large deviation guarantees
\cite{Dhillon-etal2013}.

There are many composite variants of the aforementioned methods, that mostly combine two or more techniques. Well-known composite algorithms are the combinations of sub-sampling and Quasi-Newton \cite{schraudolph2007stochastic,byrd2014stochastic}, 
SGD and GD \cite{friedlander2012hybrid}, 
NG and NM \cite{roux2010fast}, NG and low-rank approximation \cite{le2007topmoumoute}, sub-sampling and eigenvalue thresholding \cite{erdogdu2015convergence-long}. 

Lastly, algorithms that specialize on certain types of GLMs include
coordinate descent methods for the penalized GLMs \cite{friedman2010regularization} and trust region Newton methods \cite{lin2008trust}.

\subsection{Notation}\label{sec::notation}

Let $[n]=\{ 1,2,...,n\}$ and denote by $|S|$, the size of a set $S$. 
The gradient and the Hessian of $f$ with respect to $\th$ are denoted by $\grad_\th f$ and $\gradS_\th f$, respectively. 
The $j$-th derivative of a function $f(w)$ is denoted by $f^{(j)}(w)$.
For a vector $x$ and a symmetric matrix $\mathbf{X}$, $\|x\|_2$ and $\|\mathbf{X}\|_2$ denote the $\ell_2$ and spectral norms of $x$ and $\mathbf{X}$, respectively. 
$\|x\|_\sg$ denotes the sub-gaussian norm, which will be defined later. 
$S^{p-1}$ denotes the $p$-dimensional sphere. 
$\proj_\mathcal{C}$ denotes the Euclidean projections onto the set $\mathcal{C}$, and $B_p(R)\subset \reals^p$ denotes the 
ball of radius $R$.
For a random variable $x$ and density $f$, $x \sim f$ means that the distribution of $x$ follows the density $f$. Multivariate Gaussian density with mean $\mu \in \reals^p$ and covariance $\Sig\in\reals^{p \times p}$ is denoted as $\normal_p(\mu,\Sig)$. 
For random variables $x,y$, $d(x,y)$ and $\D(x,y)$ denote probability metrics (to be explicitly defined later) measuring the distance between the distributions of $x$ and $y$.
$\N(\cdots)$ and $T_\e$ denote the bracketing number and $\e$-net.


\section{Generalized Linear Models}\label{sec::glm}

Distribution of a random variable $y\in\reals$ belongs to an exponential family with natural parameter $\eta \in \reals$ if its density is
\eqn{
f(y | \eta ) = e^{\eta y - \phi (\eta)}h(y), 
}
where $\phi$ is the \emph{cumulant generating function} and $h$ is the \emph{carrier density}.
Let $y_1,y_2,...,y_n$ be independent observations such that $\forall i \in [n],$ $y_i \sim f(y_i | \eta_i ).$
Denoting $\eta = (\eta_1,...,\eta_n)$,  the joint likelihood can be written as
\eq{
f(y_1,y_2,...,y_n | \eta ) = \text{exp}\left\{\sum_{i=1}^n\left [ y_i\eta_i - \phi (\eta_i)\right ]\right\} \prod_{i=1}^nh(y_i).
}
We consider the problem of learning the maximum likelihood estimator in the above exponential family framework, where the vector $\eta\in \reals^{n}$ is modeled through the linear relation,
\eqn{
 \mathbf{\eta} = \X \th, 
}
\!for some design matrix $\X \in \reals^{n \times p}$ with rows $x_i\in \reals^p$, and a coefficient vector $\th\in\reals^{p}$. This formulation is known as \emph{Generalized Linear Models} (GLMs). The cumulant generating function $\phi$ determines the class of GLMs, i.e., for the ordinary least squares (OLS) $\phi(z)=z^2$ and for the logistic regression (LR) $\phi(z)=\log(1+e^z)$.

Finding the maximum likelihood estimator in the above formulation is equivalent to minimizing the negative log-likelihood function $\l(\th)$,
\eq{\label{eq::neg-loglike}
\l(\th) = \frac{1}{n}\sum_{i=1}^n\left [  \phi (\<x_i,\th\>)-y_i\<x_i , \th \> \right ],
}
where $\<x , \th \>$ is the inner product between the vectors $x$ and $\th$. 
The relation to OLS and LR can be seen much easier by plugging in the corresponding $\phi(z)$ in Eq.~(\ref{eq::neg-loglike}).
The gradient and the Hessian of $\l(\th)$ can be written as:
\eq{
\grad_\th \l (\th) = \frac{1}{n}\sum_{i=1}^n\left [  \dphi (\<x_i,\th\>)x_i - y_ix_i \right ], \ \ \ \ \ 
\gradS_\th \l (\th) = \frac{1}{n}\sum_{i=1}^n \ddphi (\<x_i,\th\>)x_i x_i^T.
}
For a sequence of scaling matrices $\{\Q^t\}_{t>0} \in \reals^{p\times p}$, we consider iterations of the form
\eqn{
\hth^{t+1} \leftarrow \hth^{t} - \gamma_t \Q^t \grad_\th \l(\hth^{t})
}
where $\gamma_t$ is the step size. 
The above iteration is our main focus, but with a new approach on how to compute the sequence of matrices $\{\Q^t\}_{t>0}$. We will formulate the problem of finding a scalable $\Q^t$ as an estimation problem and apply a \emph{Stein}-type lemma that provides us with a computationally efficient update.


\section{Newton-Stein Method}\label{sec::algorithm}

Classical Newton-Raphson update is generally used for training GLMs. However, its per-iteration cost makes it impractical for large-scale optimization. The main bottleneck is the computation of the Hessian matrix that requires $\O(np^2)$ flops which is prohibitive when $n \gg p$. Numerous methods have been proposed to achieve NM's fast convergence rate while keeping the per-iteration cost manageable. 
To this end, a popular approach is to construct a scaling matrix $\Q^t$, which approximates the true Hessian at every iteration $t$.

The task of constructing an approximate Hessian can be viewed as an estimation problem. 
Assuming that the rows of $\X$ are i.i.d. random vectors, the Hessian of GLMs with cumulant generating function $\phi$ has the following form

\eqn{
\left[\Q^{t}\right]^{-1} = \frac{1}{n}\sum_{i=1}^nx_ix_i^T\ddphi(\<x_i,\th\>)\approx \E[xx^T\ddphi(\<x,\th\>)]\,.
}
We observe that $\left[\Q^{t}\right]^{-1}$ is just a sum of i.i.d. matrices. Hence, the true Hessian is nothing but a sample mean estimator to its expectation. Another natural estimator would be the sub-sampled Hessian method suggested by \cite{martens2010deep,byrd2011use}.
Therefore, our goal is to
propose an estimator that is computationally efficient and well-approximates the true Hessian. 

\begin{algorithm*}[t]
\caption{\ALGv}
\begin{algorithmic}
    \STATE {\bfseries Input:} $\hth^0, r, \tol, \gamma$.
    	\begin{enumerate}
	\STATE Set $t=0$ and sub-sample a set of indices $S\subset [n]$ uniformly at random.
        \STATE {\bfseries Compute:} $\hsig^2 = \lambda_{r+1}(\SigH_S )$, \ \ and \ \
         \hspace{.1in}$\zeta_r(\SigH_S)= \hsig^2\I + \argmin_{\text{rank($M$) = $r$}} \big\| \SigH_S-\hsig^2\I - M \big\|_F$.
        \WHILE{$\big\|\hth^{t+1}- \hth^{t}\big\|_2 \leq \tol$}
        \vspace{.1in}
               \STATE $\hmu_2(\hth^t) = \frac{1}{n}\sum_{i=1}^n\ddphi(\<x_i,\hth^t\>), \ \ \ \ \ \ \ \ \hmu_4(\hth^t) = \frac{1}{n}\sum_{i=1}^n\ddddphi(\<x_i,\hth^t\>)$,
               \STATE
        \STATE ${\Q^t = \frac{1}{\hmu_2(\hth^t)}\left[\zeta_r(\SigH_S)^{-1} - \frac{\hth^t[\hth^t]^T}{\hmu_2(\hth^t)/\hmu_4(\hth^t) + \<\zeta_r(\SigH_S)\hth^t, \hth^t\>}\right] }$,
  \vspace{.1in}
        \STATE $\hth^{t+1} = \proj_{B_p(R)} \left(\hth^{t} - \gamma \Q^t \grad_\th \l(\hth^{t})\right)$,
     \vspace{.1in}
	\STATE $t \leftarrow t+1$.
         \ENDWHILE
         \end{enumerate}
         \STATE {\bfseries Output:} $\hth^{t}$.
\end{algorithmic}
\end{algorithm*}

We use the following Stein-type proposition to find a more efficient estimator to the expectation of the Hessian.
\begin{lemma}[Stein-type lemma]\label{lem::stein}
Assume that $x\sim \normal_p(0,\Sig)$ and $\th \in \reals^p$ is a constant vector.
Then for any function $f:\reals \to \reals$ that is twice \emph{``weakly"} differentiable, we have
\eq{\label{eq::stein}
\E \left[xx^Tf(\<x,\th\>)\right] = \E\left[f(\<x,\th\>)\right]\Sig + \E\left[f^{(2)}(\<x,\th\>)\right]\Sig\th\th^T\Sig\,.
}
\end{lemma}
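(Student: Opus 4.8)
The plan is to reduce the $p$-dimensional identity to a one-dimensional Stein identity by diagonalizing in a basis adapted to the direction $\th$. First I would whiten: write $x = \Sig^{1/2} z$ with $z \sim \normal_p(0, \identity)$, so that $\<x,\th\> = \<z, \Sig^{1/2}\th\>$. Set $v = \Sig^{1/2}\th$ and $a = \|v\|_2$; choose an orthogonal matrix $U$ whose first column is $v/a$, so that $\<z, v\> = a\, (U^Tz)_1 =: a\, w_1$ where $w = U^Tz \sim \normal_p(0,\identity)$. Then $\E[xx^T f(\<x,\th\>)] = \Sig^{1/2} U\, \E[ww^T f(a w_1)]\, U^T \Sig^{1/2}$, and since $w_1$ depends only on the first coordinate of $w$, the off-diagonal entries $\E[w_i w_j f(aw_1)]$ vanish for $i\neq j$ by independence and mean-zero, while $\E[w_i^2 f(aw_1)] = \E[f(aw_1)]$ for $i \geq 2$ and $\E[w_1^2 f(aw_1)]$ remains. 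Hence $\E[ww^T f(aw_1)] = \E[f(aw_1)]\identity + \big(\E[w_1^2 f(aw_1)] - \E[f(aw_1)]\big) e_1 e_1^T$.

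The heart of the matter is the scalar computation of $\E[w_1^2 f(a w_1)]$. Here I would invoke the classical one-dimensional Stein / Gaussian integration-by-parts identity: for $g$ weakly differentiable with the appropriate integrability, $\E[W g(W)] = \E[g'(W)]$ when $W \sim \normal(0,1)$. Applying this with $g(W) = W f(aW)$ gives $\E[W^2 f(aW)] = \E[f(aW)] + a\, \E[W f^{(1)}(aW)]$, and a second application with $g(W) = f^{(1)}(aW)/a$ — or directly $\E[W f^{(1)}(aW)] = a\,\E[f^{(2)}(aW)]$ — yields $\E[w_1^2 f(aw_1)] = \E[f(aw_1)] + a^2\, \E[f^{(2)}(aw_1)]$. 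Therefore the bracketed coefficient above equals $a^2 \E[f^{(2)}(aw_1)] = \|\Sig^{1/2}\th\|_2^2\, \E[f^{(2)}(\<x,\th\>)]$.

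Substituting back, $\E[xx^T f(\<x,\th\>)] = \E[f(\<x,\th\>)]\, \Sig^{1/2}UU^T\Sig^{1/2} + \E[f^{(2)}(\<x,\th\>)]\, a^2\, \Sig^{1/2}U e_1 e_1^T U^T \Sig^{1/2}$. Since $U$ is orthogonal, $\Sig^{1/2}UU^T\Sig^{1/2} = \Sig$; and $\Sig^{1/2}U e_1 = \Sig^{1/2} v/a = \Sig^{1/2}\Sig^{1/2}\th / a = \Sig\th/a$, so $a^2 \Sig^{1/2}Ue_1 e_1^T U^T\Sig^{1/2} = \Sig\th\th^T\Sig$. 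This is exactly Eq.~(\ref{eq::stein}).

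The routine parts are the change of variables and the orthogonal decoupling; the one genuine subtlety is justifying the one-dimensional integration by parts under only ``weak'' differentiability of $f$ — i.e., checking that the boundary terms in $\int W f(aW) \varphi(W)\, dW$ vanish and that $f^{(1)}, f^{(2)}$ are understood as weak derivatives with the requisite local integrability against the Gaussian weight. I would state the precise integrability hypothesis (e.g. $\E|f(\<x,\th\>)|<\infty$, $\E|f^{(2)}(\<x,\th\>)|<\infty$ and absolute continuity of $f$, $f^{(1)}$ on compacts) and appeal to the standard Gaussian Stein identity under exactly those conditions; this is the only place where a careful analytic argument, rather than algebra, is needed.
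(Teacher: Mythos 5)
Your proof is correct, but it takes a different route from the paper's. The paper proves Lemma \ref{lem::stein} by multivariate Gaussian integration by parts performed directly in matrix form: using the identity $xg(x)\,dx = -\Sig\, dg(x)$ for the $\normal_p(0,\Sig)$ density $g$, it integrates by parts twice inside $\int xx^T f(\<x,\th\>)g(x)\,dx$, picking up the $\E[f]\Sig$ term from the first integration and the $\E[f^{(2)}]\Sig\th\th^T\Sig$ term from the second. You instead whiten and rotate so that $\<x,\th\>$ depends on a single standard normal coordinate $w_1$, exploit independence and zero means to reduce $\E[ww^Tf(aw_1)]$ to the scalar quantity $\E[w_1^2 f(aw_1)]$, and then apply the classical one-dimensional Stein identity twice. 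Both arguments are valid and of comparable length; what yours buys is that the only analytic step needing justification under ``weak'' differentiability is the scalar identity $\E[Wg(W)]=\E[g'(W)]$, with boundary terms checked against a one-dimensional Gaussian weight, whereas the paper's computation requires a (less explicitly justified) multivariate integration by parts and keeps all regularity issues in $\reals^p$. The paper's version is more direct and basis-free, and its manipulations mirror the matrix structure of the final formula. One small point in your argument: the choice of $U$ with first column $v/a$ presupposes $a=\|\Sig^{1/2}\th\|_2>0$; when $a=0$ the statement is trivial since $\<x,\th\>=0$ a.s.\ and $\Sig\th=0$, so you should dispose of that degenerate case separately (one line).
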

The proof of Lemma \ref{lem::stein} is given in Appendix. The right hand side of Eq.(\ref{eq::stein}) is a rank-1 update to the first term. Hence, its inverse can be computed with $\O(p^2)$ cost.
Quantities that change at each iteration are the ones that depend on $\th$, i.e.,

\eqn{
\mu_2(\th) = \E[\ddphi(\<x,\th\>)] \ \ \ \text{and} \ \ \ \mu_4(\th) = \E[\ddddphi(\<x,\th\>)].
}
Note that $\mu_2(\th)$ and $\mu_4(\th)$ are scalar quantities and can be estimated by their corresponding sample means $\hmu_2(\th)$ and $\hmu_4(\th)$ (explicitly defined at Step 3 of Algorithm 1) respectively, with only $\O(np)$ computation.

\begin{figure*}[t]
\centering
  \includegraphics[width=3.1in]{./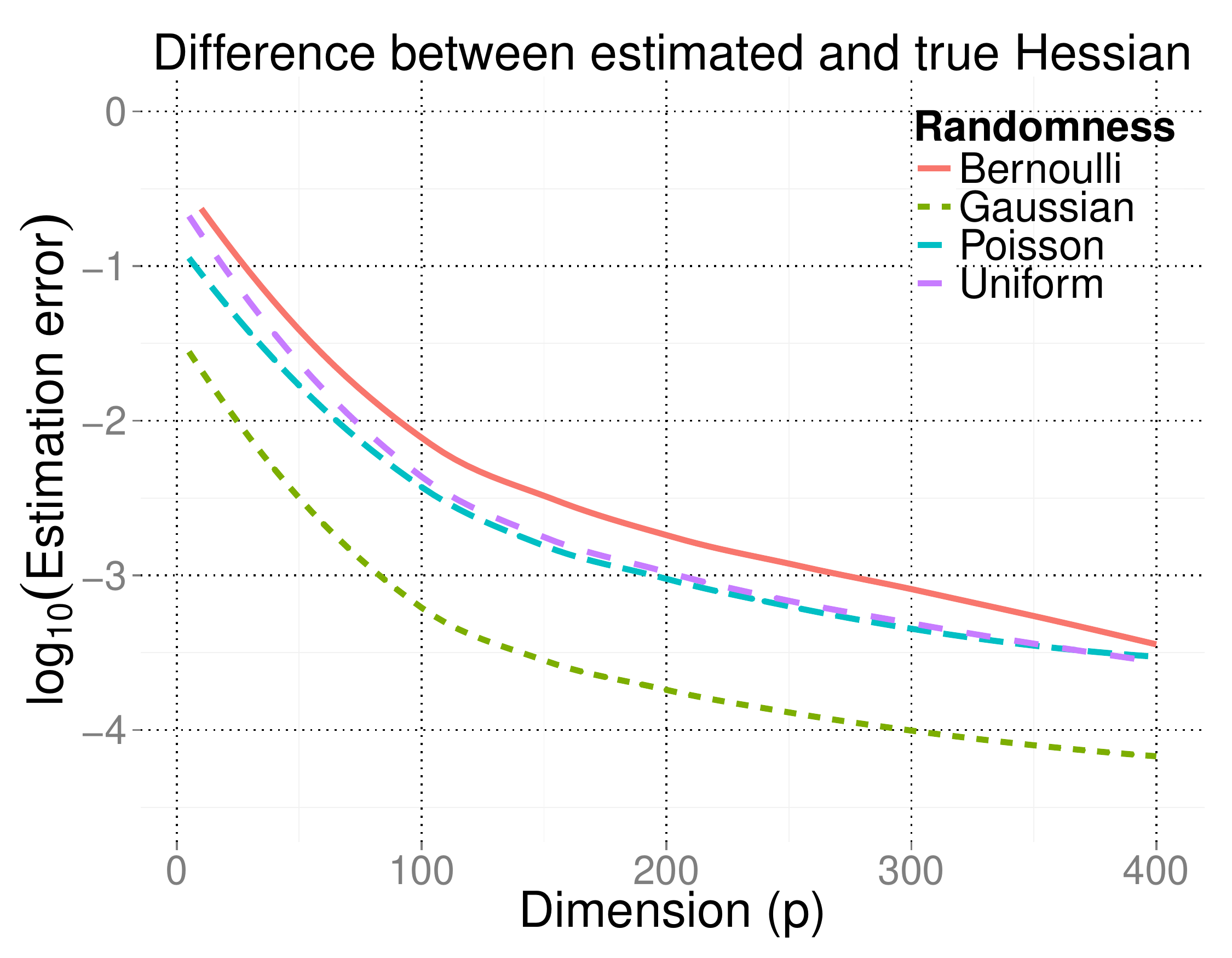}
\includegraphics[width=3.1in]{./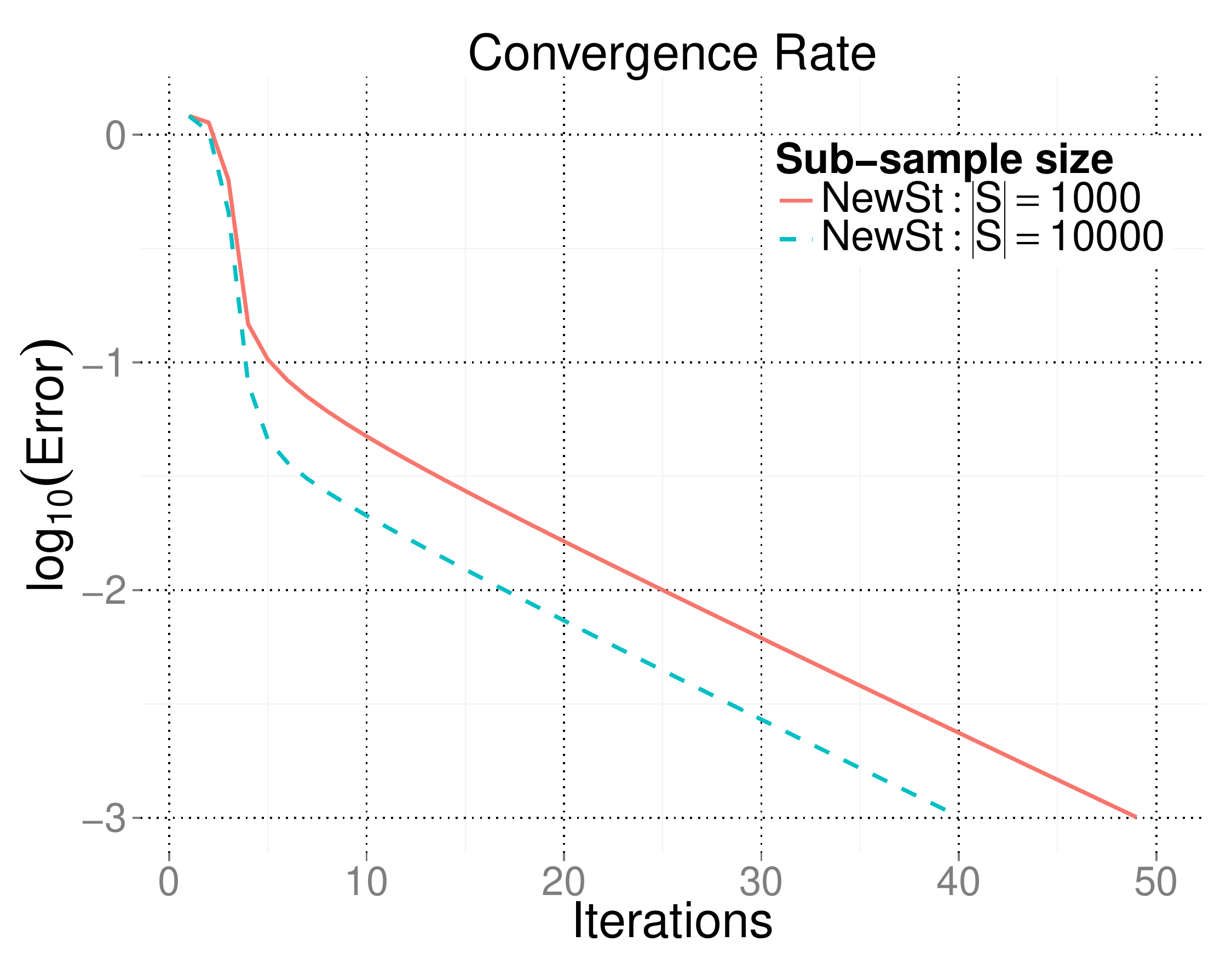}
  \caption{\label{fig::convAndCoeff}
The left plot demonstrates the accuracy of proposed Hessian estimation over different distributions. Number of observations is set to be $n=\O(p\log(p))$.
The right plot shows the phase transition in the convergence rate of \ALGv (\ALG\!).  Convergence starts with a quadratic rate and transitions into linear.
Plots are obtained using \emph{Covertype} dataset.
}
\end{figure*}

To complete the estimation task suggested by Eq.~(\ref{eq::stein}), we need an estimator for the covariance matrix $\Sig$.  A natural estimator is the sample mean where, we only use a sub-sample $S \subset [n]$ so that the cost is reduced to $\O(|S|p^2)$ from $\O(np^2)$. Sub-sampling based sample mean estimator is denoted by $\SigH_S=\sum_{i\in S}x_i x_i^T$, which is widely used in large-scale problems \cite{vershynin2010introduction}. 
We highlight the fact that Lemma \ref{lem::stein} replaces NM's $O(np^2)$ per-iteration cost with a one-time cost of $O(np^2)$. We further use sub-sampling to reduce this one-time cost to $O(|S|p^2)$.

In general, important curvature information is contained in the largest few spectral features \cite{erdogdu2015convergence-long}. 
For a given threshold $r$, we take the  largest $r$ eigenvalues of the sub-sampled covariance estimator, setting rest of them to $(r+1)$-th eigenvalue. This operation helps denoising and would only take $\O(rp^2)$ computation. Step 2 of Algorithm 1 performs this procedure.

Inverting the constructed Hessian estimator can make use of the low-rank structure several times. 
First, notice that the updates in Eq.~(\ref{eq::stein}) are based on rank-1 matrix additions. Hence, we can simply use a matrix inversion formula to derive an explicit equation (See $\Q^t$ in Step 3 of Algorithm 1). This formulation would impose another inverse operation on the covariance estimator. Since the covariance estimator is also based on rank-$r$ approximation, one can utilize the low-rank inversion formula again. We emphasize that this operation is performed once. Therefore, instead of NM's per-iteration cost of $\O(p^3)$ due to inversion, \ALGv (\ALG\!) requires $\O(p^2)$ per-iteration and a one-time cost of $\O(rp^2)$.
Assuming that \ALG and NM converge in $T_1$ and $T_2$ iterations respectively, 
the overall complexity of \ALG is $\O\left(npT_1+p^2T_1+(|S|+r)p^2\right)\approx\O\left(npT_1+p^2T_1+|S|p^2\right)$ whereas that of NM is $\O\left(np^2T_2 + p^3T_2\right)$.

Even though Proposition \ref{lem::stein} assumes that the covariates are multivariate Gaussian random vectors, in Section \ref{sec::theory}, the only assumption we make on the covariates is either bounded support or sub-gaussianity, both of which cover a wide class of random variables including Bernoulli, elliptical distributions, bounded variables etc. The left plot of Figure \ref{fig::convAndCoeff} shows that the estimation is accurate for many distributions. This is a consequence of the fact that the proposed estimator in Eq.~(\ref{eq::stein}) relies on the distribution of $x$ only through inner products of the form $\<x,v \>$, which in turn results in approximate normal distribution due to the central limit theorem. We will discuss this in details in Section \ref{sec::theory}.

The convergence rate of \ALG has two phases. Convergence starts quadratically and transitions into linear rate when it gets close to the true minimizer. The phase transition behavior can be observed through the right plot in Figure \ref{fig::convAndCoeff}. This is a consequence of the bound provided in Eq.~\ref{eq::boundMainResult}, which is the main result of our theorems stated in Section \ref{sec::theory}.


\section{Theoretical results}\label{sec::theory}

We start by introducing the terms that will appear in the theorems. Then we will provide two technical results on bounded and sub-gaussian covariates. The proofs of the theorems are technical and provided in Appendix.

\subsection{Preliminaries}

Hessian estimation described in the previous section relies on a Gaussian approximation. 
For theoretical purposes, we use the following probability metric to quantify the gap between the distribution of $x_i$'s and that of a normal vector. 

\begin{definition}
Given a family of functions $\Hh$, and random vectors $x,y \in \reals^p$, 
for $\Hh$ and any $h \in \Hh$, define 
\eqn{
d_\Hh(x,y) = \sup_{h \in \Hh}d_h(x,y) \ \ \ \ \ \text{where } \ \ \ \ \ d_h(x,y) = \big |\E\left[ h(x)\right] - \E\left[h(y) \right]\big |.
}
\end{definition}
%

Many probability metrics can be expressed as above by choosing a suitable function class $\Hh$. Examples include \emph{Total Variation} (TV), \emph{Kolmogorov} and \emph{Wasserstein} metrics \cite{gibbs2002choosing,chen2010normal}. 
%
%
Based on the second and the fourth derivatives of the cumulant generating function, we define the following function classes:
\eqn{
&\mathcal{H}_1 = \left \{ h(x) = \ddphi(\< x,\th\>) :  \th \in B_p(R)\right \},\ \ \ \ \ \
\mathcal{H}_2 = \left \{ h(x) = \ddddphi(\< x,\th\>):  \th \in B_p(R)\right \},\\
&\ \ \ \ \ \ \ \ \ \ \ \ \ \ \ \ \ \ \ \ \  \mathcal{H}_3 = \left \{h(x) = \<v,x\>^2 \ddphi(\< x,\th\>) :  \th \in B_p(R) , \| v \|_2 =1 \right \},
}
where $B_p(R)\in \reals^p$ is the ball of radius $R$.
Exact calculation of such probability metrics are often difficult. 
The general approach is to upper bound the distance by a more intuitive metric. 
In our case,  we observe that $d_{\Hh_j}(x,y)$ for $j=1,2,3$, 
can be easily upper bounded by $d_{\text{TV}}(x,y)$ 
up to a scaling constant, when the covariates have bounded support.

 We will further assume that the covariance matrix follows $r$-spiked model, i.e.,
$$
 \Sig =  \sigma^2I + \sum_{i=1}^r\theta_i u_i u_i^T,
 $$
which is commonly encountered in practice \cite{baik2006eigenvalues}.
The first $r$ eigenvalues of the covariance matrix are large and the rest are small and equal to each other. Small eigenvalues of $\Sig$ (denoted by $\sigma^2$), can be thought of as the noise component. 

\subsection{Bounded covariates}\label{sec::bounded}

We have the following per-step bound for the iterates generated by \ALG\!, when the covariates are supported on a ball. 
\begin{theorem}\label{thm::mainBounded}
Assume that the covariates $x_1,x_2,...,x_n$ are i.i.d. random vectors supported on a ball of radius $\sqrt{K}$ with
\eqn{
\E[x_i]= 0 \ \ \ \ \ \ \text{and} \ \ \ \ \ \ \ \E\big[x_ix_i^T\big]=\Sig,
}
where $\Sigma$ follows the $r$-spiked model.
Further assume that the cumulant generating function $\phi$ has bounded 2nd-5th derivatives and that $R$ is the radius of the projection $\proj_{B_p(R)}$.
For $\big\{\hth^t\big\}_{t>0}$ given by the Newton-Stein method for $\gamma=1$,
define the event 
\eq{\label{eq::stepBound1}
\mathcal{E} = \left\{\left | \mu_2(\hth^t) + \mu_4(\hth^t)\<\Sig\hth^t, \hth^t\>\right | >\xi\, , \ \ 
\th_* \in B_p(R) \right\}
}
for some positive constant $\xi$, and the optimal value $\th_*$. 
If $n,|S|$ and $p$ are sufficiently large, 
then there exist constants $c,c_1,c_2$ and $\kappa$
depending on
the radii $K,R$, $\P(\mathcal{E})$
and the bounds on $|\ddphi|$ and $|\ddddphi|$ such that 
conditioned on the event $\mathcal{E}$, with probability at least $1-c/p^2$, we have
%
\eq{\label{eq::composite}
\big\|\hth^{t+1} - \th_*\big\|_2
\leq \tau_1 \big \|\hth^t - \th_*\big\|_2  + \tau_2  \big\|\hth^t - \th_*\big\|_2^2  ,
}
where the coefficients $\tau_1$ and $\tau_2$ are deterministic constants defined as
\eq{
\tau_1 =  \kappa \D(x,z)
 +c_1\kappa \sqrt{\frac{p}{\min\left\{ p/\log(p)|S|,n/\log(n)\right\}}},
 \ \ \ \ \ \ \ \ \ \ \ \ \ 
\tau_2 =  c_2 \kappa,
}
and $\D(x,z)$ is defined as 
\eq{\label{eq::distance}
\D(x,z)=
\|\Sig\|_2 \ d_{\Hh_1}(x,z) + \|\Sig\|_2^2R^2\ d_{\Hh_2} (x,z)+d_{\Hh_3}(x,z),
}
for a multivariate Gaussian random variable $z$ with the same mean and covariance as $x_i$'s.
\end{theorem}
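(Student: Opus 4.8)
The plan is to establish the per-step bound~\eqref{eq::composite} by decomposing the error $\hth^{t+1}-\th_*$ into terms that reflect (i) the fidelity of the Newton-Stein update as an approximate Newton step and (ii) the fidelity of the Gaussian/Stein approximation underlying the construction of $\Q^t$. First I would write $\hth^{t+1}-\th_* = \proj_{B_p(R)}(\hth^t - \Q^t\grad_\th\l(\hth^t)) - \th_*$; since $\th_*\in B_p(R)$ on the event $\mathcal{E}$ and projections onto convex sets are nonexpansive, it suffices to bound $\|\hth^t - \Q^t\grad_\th\l(\hth^t) - \th_*\|_2$. Using $\grad_\th\l(\th_*) = 0$ (first-order optimality, up to the usual care about whether $\th_*$ is the population or empirical minimizer), I would insert a Taylor expansion of $\grad_\th\l$ around $\th_*$: $\grad_\th\l(\hth^t) = \gradS_\th\l(\bar\th)(\hth^t-\th_*)$ for some $\bar\th$ on the segment, or more precisely $\gradS_\th\l(\hth^t)(\hth^t-\th_*)$ plus a remainder controlled by the bound on $|\ddddphi|$ times $\|\hth^t-\th_*\|_2^2$ --- this quadratic remainder is exactly the source of the $\tau_2\|\hth^t-\th_*\|_2^2$ term. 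That leaves the linear term $(\I - \Q^t\gradS_\th\l(\hth^t))(\hth^t-\th_*)$, whose operator norm must be shown to be at most $\tau_1$.

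The core of the argument is therefore bounding $\|\I - \Q^t\,\gradS_\th\l(\hth^t)\|_2$. I would do this by a triangle-inequality chain through a sequence of idealized Hessians: the empirical Hessian $\gradS_\th\l(\hth^t) = \tfrac1n\sum_i \ddphi(\<x_i,\hth^t\>)x_ix_i^T$; its population version $\H(\hth^t) := \E[xx^T\ddphi(\<x,\hth^t\>)]$; the Stein surrogate $\H_z(\hth^t) := \mu_2(\hth^t)\Sig + \mu_4(\hth^t)\Sig\hth^t[\hth^t]^T\Sig$, which by Lemma~\ref{lem::stein} equals the population Hessian \emph{if} $x$ were Gaussian; and finally the computable estimator $[\Q^t]^{-1}$ obtained by replacing $\mu_2,\mu_4,\Sig$ with $\hmu_2,\hmu_4,\zeta_r(\SigH_S)$. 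The gap between $\H(\hth^t)$ and $\H_z(\hth^t)$ is precisely a distributional-distance term: testing against the rank-one perturbation structure, one bounds $\|\H(x) - \H(z)\|_2$ by $\|\Sig\|_2 d_{\Hh_1}(x,z) + \|\Sig\|_2^2 R^2 d_{\Hh_2}(x,z) + d_{\Hh_3}(x,z) = \D(x,z)$, which is why $\D(x,z)$ enters $\tau_1$ with the multiplier $\kappa$ (the latter absorbing $1/\mu_2$-type lower bounds and conditioning constants from $\mathcal{E}$). The sampling errors --- $\|\SigH_S - \Sig\|_2$ via a sub-gaussian/bounded-support matrix concentration inequality (Matrix Bernstein or the Vershynin covariance-estimation bound, giving the $\sqrt{p/|S|}$ scaling up to the $\log$ factor and the $r$-spiked thresholding absorbing noise), and $|\hmu_2-\mu_2|$, $|\hmu_4-\mu_4|$ via scalar Bernstein over the bounded functions $\ddphi,\ddddphi$, giving $\sqrt{\log n / n}$ --- combine into the second summand of $\tau_1$. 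The $\min\{p/\log(p)|S|, n/\log(n)\}$ in the denominator just records that the slower of the two sampling effects dominates.

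A few technical reductions need care along the way: I would fix a net argument (the $\e$-net $T_\e$ and bracketing number $\N$ mentioned in the notation) to upgrade pointwise concentration of $\ddphi(\<x_i,\th\>)$, $x_ix_i^T\ddphi(\<x_i,\th\>)$ etc.\ to uniformity over $\th\in B_p(R)$, since $\hth^t$ is random and data-dependent; this is where $d_{\Hh_j}$ being a supremum over $\th\in B_p(R)$ is convenient. I would also need the matrix-inversion (Sherman--Morrison / Woodbury) identity to pass from the additive bound $\|[\Q^t]^{-1} - \gradS_\th\l(\hth^t)\|_2 \le \delta$ to a bound on $\|\I - \Q^t\gradS_\th\l(\hth^t)\|_2 \le \|\Q^t\|_2\,\delta$, which requires a uniform upper bound on $\|\Q^t\|_2$, equivalently a uniform \emph{lower} bound on $\lmin([\Q^t]^{-1})$ --- this is exactly what the event $\mathcal{E}$ (the quantity $|\mu_2(\hth^t) + \mu_4(\hth^t)\<\Sig\hth^t,\hth^t\>| > \xi$) is engineered to provide, together with $\sigma^2 > 0$ from the spiked model. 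Finally one collects the failure probabilities of the several concentration events into the stated $1-c/p^2$ (choosing the net resolution $\e \asymp 1/p$ so that union bounds over the net cost only polynomial factors absorbed into $c$), and "$n,|S|,p$ sufficiently large" ensures all the concentration bounds are in their valid regime and the constants are finite.

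The main obstacle I expect is the uniform control over $\th\in B_p(R)$ of the distributional-distance term $\D(x,z)$ and of the empirical-to-population Hessian error \emph{simultaneously}, i.e.\ making the chaining/net argument interact cleanly with the rank-one Stein structure: one must bound $\sup_\th \|\H(x;\th) - \H(z;\th)\|_2$ not by brute force but by exploiting that both $\H(x;\th)$ and $\H(z;\th)$ are built from the same low-dimensional functionals $\<x,v\>, \<x,\th\>$, so that the only genuinely $p$-dimensional object is $\Sig$ itself (already handled by spiked-model plus sub-sampling). Keeping the constants $c,c_1,c_2,\kappa$ honestly dependent only on $K,R,\P(\mathcal{E})$ and the derivative bounds --- and not secretly on $p$ --- is the delicate bookkeeping that makes the two-phase bound~\eqref{eq::boundMainResult} meaningful.
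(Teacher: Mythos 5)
Your proposal follows essentially the same route as the paper's proof: a non-expansive projection plus the integral/Taylor form of the gradient around $\th_*$ yields the error recursion, the linear factor is bounded by $\|\Q^t\|_2\,\|[\Q^t]^{-1}-\int_0^1\gradS_\th\l\,d\xi\|_2$ with a triangle-inequality chain through the population Hessian and the Stein surrogate $\mu_2(\th)\Sig+\mu_4(\th)\Sig\th\th^T\Sig$ (giving the $\D(x,z)$ and sampling terms), uniform-in-$\th$ concentration via nets handles the data-dependent iterate, and the event $\mathcal{E}$ delivers the bound $\|\Q^t\|_2\le\kappa$. The only cosmetic deviations are that the paper obtains the quadratic term by comparing $\E[xx^T\ddphi(\<x,\hth^t\>)]$ with the path-integrated Hessian via the Lipschitz constant of $\ddphi$ (third, not fourth, derivative) rather than a Taylor remainder, and the step you label Sherman--Morrison is just the factorization $I-\Q^tH=\Q^t([\Q^t]^{-1}-H)$; neither affects correctness.
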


The bound in Eq.~(\ref{eq::composite}) holds with high probability, and the coefficients $\tau_1$ and $\tau_2$ are deterministic constants which will describe the convergence behavior of the Newton-Stein method. Observe that the coefficient $\tau_1$ is sum of two terms: $\D(x,z)$ measures how accurate the Hessian estimation is, and the second term depends on the sub-sample size and the data dimensions.

Theorem \ref{thm::mainBounded} shows that the convergence of Newton-Stein method can be upper bounded by a compositely converging sequence, that is, the squared term will dominate at first giving a quadratic rate, then the convergence will transition into a linear phase as the iterate gets close to the optimal value. The coefficients $\tau_1$ and $\tau_2$ govern the linear and quadratic terms, respectively. The effect of sub-sampling appears in the coefficient of linear term. In theory, there is a threshold for the sub-sampling size $|S|$, namely $\O( n/\log(n))$, beyond which further sub-sampling has no effect. The transition point between the quadratic and the linear phases is determined by the sub-sampling size and the properties of the data. The phase transition behavior can be observed through the right plot in Figure \ref{fig::convAndCoeff}. 


%
%
%
%
Using the above theorem, 
we state the following corollary.
\begin{corollary}\label{cor::compositeBounded}
Assume that the assumptions of Theorem \ref{thm::mainBounded} hold. For a constant $\delta \geq \P\left(\mathcal{E}^C\right)$, a tolerance $\e$ satisfying
\eqn{
\e \geq 20R\left\{c/p^2 + \delta\right\},
}
and for an iterate satisfying $\E\big[\|\hth^t-\th_*\|_2\big] >\e$, the iterates of Newton-Stein method will satisfy,
\eqn{
\E\left[\|\hth^{t+1} - \th_*\|_2 \right ]
\leq \tilde{\tau}_1 \E\left[\|\hth^t - \th_*\|_2 \right ]+ \tau_2 \E\left[\|\hth^t - \th_*\|_2^2 \right ],
}
where $\tilde{\tau}_1 = \tau_1+0.1$ and $\tau_1,\tau_2$ are as in Theorem \ref{thm::mainBounded}.
\end{corollary}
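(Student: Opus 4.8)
The plan is to derive the stated expectation inequality from the high-probability per-step bound in Theorem~\ref{thm::mainBounded} by a careful case analysis on the two ``bad'' events: the event $\mathcal{E}^C$ (on which the curvature estimate may degenerate) and the event of probability at most $c/p^2$ on which the composite bound~(\ref{eq::composite}) can fail. On the complement of both, the bound~(\ref{eq::composite}) holds deterministically. Since all iterates live in $B_p(R)$ by the projection step, we always have $\|\hth^{t+1}-\th_*\|_2 \leq 2R$, which gives us a crude but universal control on the bad events.

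First I would write, using the law of total expectation conditioned on the pair of bad events,
\eqn{
\E\big[\|\hth^{t+1}-\th_*\|_2\big]
&\leq \E\big[\|\hth^{t+1}-\th_*\|_2\, ;\, \mathcal{E}\cap\{\text{(\ref{eq::composite}) holds}\}\big]
+ 2R\big(\P(\mathcal{E}^C) + c/p^2\big)\\
&\leq \tau_1 \E\big[\|\hth^t-\th_*\|_2\big] + \tau_2 \E\big[\|\hth^t-\th_*\|_2^2\big]
+ 2R\big(\delta + c/p^2\big),
}
where in the last line I use $\P(\mathcal{E}^C)\leq\delta$, drop the indicator inside the expectations (both summands being nonnegative), and apply~(\ref{eq::composite}). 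The remaining task is to absorb the additive slack $2R(\delta + c/p^2)$ into the linear term by paying the extra $0.1$ in $\tilde\tau_1 = \tau_1 + 0.1$. Here is where the hypotheses $\e \geq 20R\{c/p^2+\delta\}$ and $\E[\|\hth^t-\th_*\|_2] > \e$ come in: they give
\eqn{
2R\big(\delta + c/p^2\big) \leq \frac{\e}{10} < \frac{1}{10}\,\E\big[\|\hth^t-\th_*\|_2\big] = 0.1\,\E\big[\|\hth^t-\th_*\|_2\big],
}
so that $2R(\delta+c/p^2)$ is bounded by $0.1\,\E[\|\hth^t-\th_*\|_2]$, which is exactly the gap between $\tau_1$ and $\tilde\tau_1$ times $\E[\|\hth^t-\th_*\|_2]$. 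Combining the two displays yields the claimed inequality.

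The only genuinely delicate point is the bookkeeping on which event the per-step bound is actually proved on: Theorem~\ref{thm::mainBounded} states that~(\ref{eq::composite}) holds with probability at least $1-c/p^2$ \emph{conditioned on} $\mathcal{E}$, so one must be slightly careful to split $\E[\cdot] = \E[\cdot\,;\,\mathcal{E}] + \E[\cdot\,;\,\mathcal{E}^C]$ first, then within $\E[\cdot\,;\,\mathcal{E}]$ split off the further failure set of conditional probability $\le c/p^2$, and make sure the resulting unconditional probability of that set is still $\le c/p^2$ (which it is, being a subset). There is also the trivial check that $\|\hth^{t+1}-\th_*\|_2\leq 2R$ requires $\th_*\in B_p(R)$, which is part of the definition of $\mathcal{E}$; on $\mathcal{E}^C$ one should instead note that $\th_*\in B_p(R)$ is assumed in the hypotheses of Theorem~\ref{thm::mainBounded} independently of $\mathcal{E}$ (the event $\mathcal{E}$ bundles it only for convenience), or else bound by $R + \|\th_*\|_2$ — either way a harmless constant. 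I expect this indicator/conditioning bookkeeping to be the main (and essentially the only) obstacle; the rest is the one-line arithmetic absorbing the slack into the $0.1$ buffer.
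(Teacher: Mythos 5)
Your proposal is correct and follows essentially the same route as the paper: split off the set where the per-step bound of Theorem \ref{thm::mainBounded} may fail (probability at most $\P(\mathcal{E}^C)+c/p^2 \leq \delta + c/p^2$), bound its contribution by $2R(\delta+c/p^2)\leq \e/10 < 0.1\,\E[\|\hth^t-\th_*\|_2]$ using the projection radius and the hypotheses on $\e$, and drop the indicator inside the expectations on the good set. The conditioning bookkeeping you flag is handled in the paper by simply introducing the event $\QQ\subset\mathcal{E}$ on which the composite bound holds, with $\P(\QQ^C)\leq \P(\mathcal{E}^C)+c/p^2$, which is exactly your decomposition.
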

The bound stated in the above corollary is an analogue of composite convergence (given in Eq.~(\ref{eq::composite})) in expectation.
%
%
Note that our results make strong
assumptions on the derivatives of the cumulant generating function $\phi$. We emphasize that these assumptions are valid for linear and logistic regressions. An example that does not fit in our scheme is \emph{Poisson regression} with $\phi(z)=e^z$. However, we observed empirically that the algorithm still provides significant improvement.

The following theorem states a sufficient condition for the convergence of composite sequence.
\begin{theorem}\label{thm::globalNumIter}
Let $\{\hth^t\}_{t\geq 0}$ be a compositely converging sequence with convergence coefficients $\tau_1$ and $\tau_2$ as in Eq.~(\ref{eq::composite}) to the true minimizer $\th_*$. Let the starting point satisfy $\big\|\hth^0 - \th_*\big\|_2= \vartheta <(1-\tau_1)/\tau_2$ and define $\Xi = \left ( \frac{\tau_1 \vartheta }{1-\tau_2\vartheta} , \vartheta\right)$.
Then the sequence of $\ell_2$-distances converges to 0.
Further, the number of iterations to reach a tolerance of $\e$ 
can be upper bounded by $\inf_{\xi\in\Xi}\mathcal{J}(\xi)$, where
\eq{\label{eq::numIterations}
\mathcal{J}(\xi) = \log_2\left( \frac{\log\left(\delta\left(\tau_1/\xi + \tau_2\right)\right)}
{\log\left(\tau_1/\xi + \tau_2\right)\vartheta}\right)
+
\frac{\log(\e / \xi)}{\log(\tau_1 + \tau_2 \xi)}\, .
}
\end{theorem}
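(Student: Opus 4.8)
The plan is to separate the analysis into the two phases that the composite recursion $\|\hth^{t+1}-\th_*\|_2 \le \tau_1\|\hth^t-\th_*\|_2 + \tau_2\|\hth^t-\th_*\|_2^2$ dictates, using the threshold $\xi$ as the boundary between them. Write $a_t = \|\hth^t - \th_*\|_2$, so $a_0 = \vartheta$ and $a_{t+1}\le \tau_1 a_t + \tau_2 a_t^2 = a_t(\tau_1 + \tau_2 a_t)$. First I would observe that the map $g(a) = a(\tau_1+\tau_2 a)$ satisfies $g(a) < a$ precisely when $a < (1-\tau_1)/\tau_2$, which holds for $a = \vartheta$ by hypothesis; hence $\{a_t\}$ is strictly decreasing and stays in $[0,\vartheta)$, and since it is bounded below it converges; the only fixed point of $g$ in $[0,\vartheta)$ is $0$, so $a_t \to 0$. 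That establishes the convergence claim.

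For the iteration-count bound, fix $\xi \in \Xi = \big((\tau_1\vartheta)/(1-\tau_2\vartheta),\,\vartheta\big)$ and split the trajectory at the first time $a_t \le \xi$. In the first (quadratic-dominated) phase, while $a_t > \xi$ I would use $a_{t+1} \le a_t(\tau_1/\xi + \tau_2)a_t$ — wait, more precisely $a_{t+1}\le (\tau_1 + \tau_2 a_t)a_t \le (\tau_1 + \tau_2\vartheta)a_t$, but to get the doubly-logarithmic count one wants $a_{t+1}\le \rho\, a_t^2/\xi$ style bound; concretely, since $a_t \le \vartheta$ one has $\tau_1 a_t \le (\tau_1/\xi) a_t^2 \cdot (\xi/a_t)\cdot\dots$ — I would instead bound $a_{t+1} \le (\tau_1/\xi + \tau_2)\,a_t^2$ using $a_t \le \vartheta$ only through $\tau_1 a_t = (\tau_1/a_t)a_t^2 \le$ hmm. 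The clean route: for $a_t > \xi$, write $a_{t+1}\le (\tau_1 + \tau_2 a_t) a_t$ and note $\tau_1 a_t < (\tau_1/\xi) a_t^2$ when $a_t > \xi$, so $a_{t+1} \le (\tau_1/\xi + \tau_2) a_t^2$. Setting $b_t = (\tau_1/\xi+\tau_2)a_t$ gives $b_{t+1}\le b_t^2$, hence $b_t \le b_0^{2^t}$, and $b_0 = (\tau_1/\xi+\tau_2)\vartheta < 1$ by the choice $\xi > (\tau_1\vartheta)/(1-\tau_2\vartheta)$. Solving $b_t \le (\tau_1/\xi+\tau_2)\xi =: \delta'$ (matching the role of the "$\delta$" appearing in $\mathcal J$) for $t$ yields the first term $\log_2\!\big(\log(\delta(\tau_1/\xi+\tau_2))/\log((\tau_1/\xi+\tau_2)\vartheta)\big)$ of Eq.~(\ref{eq::numIterations}). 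In the second (linear) phase, once $a_t \le \xi$ one has $a_{t+1}\le (\tau_1 + \tau_2 a_t)a_t \le (\tau_1 + \tau_2\xi)a_t$ with $\tau_1 + \tau_2\xi < 1$, so $a_t$ decays geometrically with ratio $\tau_1+\tau_2\xi$; the number of further steps to reach tolerance $\e$ is $\lceil \log(\e/\xi)/\log(\tau_1+\tau_2\xi)\rceil$, the second term of $\mathcal J(\xi)$. Adding the two phase-counts and taking the infimum over admissible $\xi$ gives the stated bound $\inf_{\xi\in\Xi}\mathcal J(\xi)$.

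The main obstacle is bookkeeping rather than conceptual: making the quadratic-phase estimate yield exactly the doubly-logarithmic expression requires carefully choosing which crude bound ($a_t \le \xi$ vs.\ $a_t \le \vartheta$) to apply where, so that the "$\delta$" inside the outer $\log_2$ matches the paper's notation and the base of the $2^t$ recursion is genuinely $<1$ — this is exactly what the constraint $\xi > (\tau_1\vartheta)/(1-\tau_2\vartheta)$ guarantees, and one must check the two endpoints of $\Xi$ are the natural feasibility boundaries (at $\xi\downarrow(\tau_1\vartheta)/(1-\tau_2\vartheta)$ the quadratic-phase base hits $1$; at $\xi\uparrow\vartheta$ the quadratic phase is empty). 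A secondary point requiring care is the transition step itself: when $a_t$ first drops below $\xi$ it may already be much smaller, so one should phrase both phase-bounds as inequalities valid on the relevant region and simply concatenate, avoiding any claim that $a_t \approx \xi$ at the crossover.
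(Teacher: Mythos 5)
Your proposal is correct and follows essentially the same route as the paper's proof: split the trajectory at the threshold $\xi$, bound the quadratic phase via $a_{t+1}\le(\tau_1/\xi+\tau_2)a_t^2$ (using $\tau_1 a_t\le(\tau_1/\xi)a_t^2$ when $a_t>\xi$), bound the linear phase via $a_{t+1}\le(\tau_1+\tau_2\xi)a_t$, add the two iteration counts, and take the infimum over $\xi\in\Xi$. Your treatment is in fact slightly more careful than the paper's, e.g.\ in verifying $(\tau_1/\xi+\tau_2)\vartheta<1$ from the lower endpoint of $\Xi$, in the fixed-point argument for convergence to $0$, and in noting that the $\delta$ in Eq.~(\ref{eq::numIterations}) plays the role of the quadratic-phase target tolerance $\xi$.
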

Above theorem gives an upper bound on the number of iterations until reaching a tolerance of $\e$. The first and second terms on the right hand side of Eq.~(\ref{eq::numIterations}) stem from the quadratic and linear phases, respectively. %
%
%
%
%
%
%

\subsection{Sub-gaussian covariates}
In this section, we carry our analysis to the more general case, where the covariates are sub-gaussian vectors. 
%
%
\begin{theorem}\label{thm::mainSG}
Assume that $x_1,x_2,...,x_n$ are i.i.d. sub-gaussian random vectors with sub-gaussian norm $K$ such that
\[
\E[x_i]= 0,  \ \ \ \ \ \ \ \ \ \E[\|x_i\|_2]= \mu \ \ \ \ \ \ \text{and} \ \ \ \ \ \ \ \E\big[x_ix_i^T\big]=\Sig,
\]
where $\Sig$ follows the $r$-spiked model.
Further assume that the cumulant generating function $\phi$ is uniformly bounded and has bounded 2nd-5th derivatives and that $R$ is the radius of the projection.
For $\big\{\hth^t\big\}_{t>0}$ given by \ALGv\!\! and the event $\mathcal{E}$ in Eq.~(\ref{eq::stepBound1}), if we have $n,|S|$ and $p$ sufficiently large and 
$$
n^{0.2}/\log(n)\gtrsim p,
$$
then
there exist constants $c_1,c_2,c_3,c_4$ and $\kappa$
depending on the eigenvalues of $\Sig$, 
the radius $R$, $\mu$, $\P(\mathcal{E})$
and the bounds on $\ddphi$ and $|\ddddphi|$ such that 
conditioned on the event $\mathcal{E}$, with probability at least $1-c_1e^{-c_2p}$, 
\eq{
\big\| \hth^{t+1} - \th_*\big\|_2 
\leq 
\kappa\left\{
\D(x,z)
 +c_3 \sqrt{\frac{p}{\min\left\{ |S|,n^{0.2}/\log(n)\right\}}}
+c_4 p^{1.5} \big\|\hth^t - \th_*\big\|_2 
  \right \}
  \big \| \hth^{t} - \th_*\big\|_2\nonumber
}
where $\D(x,z)$ defined as in Eq~\ref{eq::distance},
for a Gaussian random variable $z$ with the same mean and covariance as $x_i$'s.
\end{theorem}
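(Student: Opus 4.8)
The plan is to mirror the structure of the proof of Theorem \ref{thm::mainBounded}, replacing every concentration bound that exploited boundedness of the covariates with its sub-gaussian analogue and tracking the extra polynomial-in-$p$ factors these weaker tail bounds introduce. First I would write the one-step decomposition
\eqn{
\hth^{t+1} - \th_* = \proj_{B_p(R)}\!\big(\hth^t - \Q^t\grad_\th\l(\hth^t)\big) - \th_*,
}
use the non-expansiveness of the projection (valid since $\th_*\in B_p(R)$ on $\mathcal{E}$), and insert $\grad_\th\l(\th_*)=0$ plus a first-order Taylor expansion of the gradient around $\th_*$ so that
\eqn{
\hth^t - \Q^t\grad_\th\l(\hth^t) - \th_* = \big(\I - \Q^t \H^t\big)(\hth^t-\th_*) + \Q^t\,\text{(Taylor remainder)},
}
where $\H^t$ is an averaged Hessian on the segment $[\th_*,\hth^t]$. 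This already produces the final $\|\hth^t-\th_*\|_2$ factor multiplying everything; the remainder term, controlled by the bound on $|\ddddphi|$, contributes the $c_4 p^{1.5}\|\hth^t-\th_*\|_2$ piece once the operator norm of $\Q^t$ is shown to be $\O(p^{1.5})$ — this is where the sub-gaussian tails cost a power of $p$ relative to the bounded case, since $\|x_i\|_2$ concentrates around $\mu$ only up to $\O(\sqrt{p})$-type fluctuations and the rank-one correction in $\Q^t$ scales accordingly.

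Next I would bound $\|\I - \Q^t\H^t\|_2$, which splits into three sources of error: (i) the gap between the population Hessian $\E[xx^T\ddphi(\<x,\th\>)]$ and its Stein-lemma surrogate $\mu_2\Sig + \mu_4\Sig\th\th^T\Sig$ — this is exactly where Lemma \ref{lem::stein} and the Gaussian-approximation distance $\D(x,z)$ from Eq.~(\ref{eq::distance}) enter, applied through the function classes $\Hh_1,\Hh_2,\Hh_3$; (ii) the statistical error in estimating $\Sig$ by the sub-sampled $\SigH_S$ after $r$-spiked eigenvalue thresholding, giving the $\sqrt{p/|S|}$ term via a sub-gaussian covariance-concentration bound (e.g.\ the matrix-Bernstein / Vershynin-type estimate in \cite{vershynin2010introduction}), which for sub-gaussian rows holds with the stated $1-c_1e^{-c_2p}$ probability; and (iii) the error in $\hmu_2,\hmu_4$ versus $\mu_2,\mu_4$ over all $\th\in B_p(R)$, handled by a uniform (bracketing-number / $\e$-net $T_\e$) argument using the uniform boundedness of $\phi$ and its derivatives, contributing the $\sqrt{p/(n^{0.2}/\log n)}$ term. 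The condition $n^{0.2}/\log(n)\gtrsim p$ is precisely what guarantees the uniform deviation over the $p$-dimensional ball is small; I expect this to enter through a union bound over an $\e$-net of $S^{p-1}$ combined with a sub-gaussian concentration that only gives polynomial rather than exponential rates in the relevant regime, hence the unusual $n^{0.2}$ exponent. Conditioning on $\mathcal{E}$ keeps the denominator $\mu_2(\hth^t)+\mu_4(\hth^t)\<\Sig\hth^t,\hth^t\>$ bounded away from zero so that the matrix-inversion formula producing $\Q^t$ is stable, and $\kappa$ collects all the resulting reciprocal-of-$\xi$, eigenvalue-of-$\Sig$, and derivative-bound constants.

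Assembling: $\|\hth^{t+1}-\th_*\|_2 \le \big(\|\I-\Q^t\H^t\|_2 + \|\Q^t\|_2\cdot\text{remainder coefficient}\cdot\|\hth^t-\th_*\|_2\big)\|\hth^t-\th_*\|_2$, and substituting the three-part bound on $\|\I-\Q^t\H^t\|_2$ together with $\|\Q^t\|_2=\O(p^{1.5}/\xi)$ yields exactly the displayed inequality, with $\D(x,z)$, the $c_3\sqrt{p/\min\{|S|,n^{0.2}/\log n\}}$ statistical term, and the $c_4 p^{1.5}\|\hth^t-\th_*\|_2$ curvature term, all scaled by $\kappa$ and multiplied by the outer $\|\hth^t-\th_*\|_2$. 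The main obstacle I anticipate is step (iii) combined with controlling $\|\Q^t\|_2$: getting the uniform-over-$B_p(R)$ deviation of the sample moments and the covariance estimator under only a sub-gaussian assumption — rather than boundedness — forces quantitative control of bracketing numbers of $\{\ddphi(\<\cdot,\th\>)\}$ and careful use of sub-exponential Bernstein inequalities, and it is this weaker tail behavior that is responsible both for the $p^{1.5}$ blow-up in the quadratic coefficient and for the restrictive $n^{0.2}/\log(n)\gtrsim p$ sample-size requirement; verifying that these pieces fit together with matching probability $1-c_1e^{-c_2p}$ is the delicate bookkeeping at the heart of the argument.
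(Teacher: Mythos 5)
Your overall skeleton (non-expansiveness of the projection, the segment-averaged Hessian, factoring the governing matrix into $\big\|[\Q^t]^{-1}-\int_0^1\gradS_\th\l\,d\xi\big\|_2\,\|\Q^t\|_2$, and splitting the first factor into a Gaussian-approximation bias $\D(x,z)$, a covariance/sub-sampling error, and moment-estimation errors) matches the paper. But there is a genuine gap in how you produce the $p^{1.5}$ factor. You claim $\|\Q^t\|_2=\O(p^{1.5}/\xi)$ and let that supply the $c_4p^{1.5}$ coefficient; this cannot work as stated, because $\|\Q^t\|_2$ multiplies the \emph{entire} bracket, so a $p^{1.5}$ bound there would also inflate $\D(x,z)$ and the $\sqrt{p/\min\{|S|,n^{0.2}/\log n\}}$ term by $p^{1.5}$, contradicting the displayed inequality in which $\kappa$ is a dimension-free constant. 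The paper instead shows $\|\Q^t\|_2\le\kappa$ with $\kappa$ a constant: on $\mathcal{E}$ the denominator $|\hmu_2(\hth^t)+\hmu_4(\hth^t)\<\zeta_r(\SigH_S)\hth^t,\hth^t\>|$ stays above $\xi/2$, $\hmu_2(\hth^t)$ stays above $c_R/2$, and $\|\zeta_r(\SigH_S)^{-1}\|_2\le 2/\sigma^2$, all via the concentration lemmas. The $p^{1.5}$ actually comes from the comparison $\big\|\E[xx^T\ddphi(\<x,\hth^t\>)]-\E[xx^T\int_0^1\ddphi(\<x,\th_*+\xi(\hth^t-\th_*)\>)d\xi]\big\|_2\le \tfrac{L}{2}\E[\|x\|_2^3]\,\|\hth^t-\th_*\|_2$, where for sub-gaussian covariates $\E[\|x\|_2^3]=\O(K^3p^{1.5})$ (versus a constant $K^{3/2}$ in the bounded case). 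Your "$\|x_i\|_2$ fluctuates by $\sqrt p$ so the rank-one correction in $\Q^t$ scales accordingly" heuristic is the wrong mechanism and would not assemble into the stated bound.

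A second, smaller misattribution: the $\sqrt{p/(n^{0.2}/\log n)}$ term does not come from estimating $\mu_2,\mu_4$ by $\hmu_2,\hmu_4$ — those are averages of the \emph{bounded} functions $\ddphi,\ddddphi$ and concentrate uniformly at rate $\sqrt{p\log(n)/n}$. The $n^{0.2}$ bottleneck comes from a term your three-way split omits: the deviation of the empirical averaged Hessian $\frac1n\sum_i x_ix_i^T\int_0^1\ddphi\,d\xi$ from its expectation, i.e.\ uniform control of $\frac1n\sum_i\ddphi(\<x_i,\th\>)\<x_i,v\>^2$ over $\th\in B_p(R)$ and $v\in S^{p-1}$. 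Because the factor $\<x_i,v\>^2$ is unbounded under sub-gaussianity, the paper brackets with $\|x\|_2^3$-weights, truncates at level $n^{0.4}$, applies Hoeffding to the truncated variables (effective rate $n^{1-2(0.4)}=n^{0.2}$), and balances against the net cardinality — this truncation trade-off, not a "polynomial-rate" sub-gaussian bound over the sphere net, is what forces both the $n^{0.2}$ exponent and the hypothesis $n^{0.2}/\log(n)\gtrsim p$. You should add this fourth error term explicitly and redo its concentration along these lines; once $\|\Q^t\|_2\le\kappa$ and this term are in place, the rest of your bookkeeping goes through.
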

The above theorem is more restrictive than Theorem \ref{thm::mainBounded}. We require $n$ to be much larger than the dimension $p$. Also note that a factor of $p^{1.5}$ appears in the coefficient of the quadratic term. We also notice that the threshold for sub-sample size reduces to $n^{0.2}/\log(n)$. 

We have the following analogue of Corrolary \ref{cor::compositeBounded}.

\begin{corollary}\label{cor::compositeSG}
Assume that the assumptions of Theorem \ref{thm::mainSG} hold. For a constant $\delta \geq \P\left(\mathcal{E}^C\right)$, a tolerance $\e$ satisfying
\eqn{
\e \geq 20R\sqrt{c_1e^{-c_2p} + \delta},
}
and for an iterate satisfying $\E\big[\|\hth^t-\th_*\|_2\big] >\e$, the iterates of Newton-Stein method will satisfy,
\eqn{
\E\left[\|\hth^{t+1} - \th_*\|_2 \right ]
\leq \tilde{\tau}_1 \E\left[\|\hth^t - \th_*\|_2 \right ]+ \tau_2 \E\left[\|\hth^t - \th_*\|_2^2 \right ],
}
where $\tilde{\tau}_1 = \tau_1+0.1$ and $\tau_1,\tau_2$ are as in Theorem \ref{thm::mainSG}.\end{corollary}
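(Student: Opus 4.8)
The plan is to derive Corollary~\ref{cor::compositeSG} from Theorem~\ref{thm::mainSG} in exactly the same manner that Corollary~\ref{cor::compositeBounded} is derived from Theorem~\ref{thm::mainBounded}, the only bookkeeping difference being that the failure probability of the per-step bound is now $c_1 e^{-c_2 p}$ instead of $c/p^2$, which is why the tolerance condition changes from $\e \geq 20R\{c/p^2+\delta\}$ to $\e \geq 20R\sqrt{c_1 e^{-c_2 p}+\delta}$. First I would fix an iterate $t$ with $\E[\|\hth^t-\th_*\|_2] > \e$ and decompose the expectation of $\|\hth^{t+1}-\th_*\|_2$ over three events: the ``good'' event $\mathcal A$ on which both $\mathcal E$ holds and the high-probability bound of Theorem~\ref{thm::mainSG} holds; the event $\mathcal E \cap (\text{bound fails})$, which has probability at most $c_1 e^{-c_2 p}$; and $\mathcal E^C$, which has probability at most $\delta$. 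On $\mathcal A$ we simply invoke Theorem~\ref{thm::mainSG}; it gives
\eqn{
\big\|\hth^{t+1}-\th_*\big\|_2 \le \tau_1 \big\|\hth^t-\th_*\big\|_2 + \tau_2\big\|\hth^t-\th_*\big\|_2^2,
}
where $\tau_1 = \kappa\{\D(x,z)+c_3\sqrt{p/\min\{|S|,n^{0.2}/\log(n)\}}\}$ and $\tau_2 = \kappa c_4 p^{1.5}$ are read off from the statement (the $c_4 p^{1.5}\|\hth^t-\th_*\|_2$ summand inside the braces is precisely the quadratic coefficient once the outer $\|\hth^t-\th_*\|_2$ is distributed).

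On the two bad events I would bound $\|\hth^{t+1}-\th_*\|_2$ deterministically using the projection: both $\hth^{t+1}$ and $\th_*$ lie in $B_p(R)$ (the latter on $\mathcal E$, and on $\mathcal E^C$ one can still use the trivial bound since $\hth^{t+1}\in B_p(R)$ by construction and $\th_*$ is assumed to satisfy $\|\th_*\|_2 \le R$, or one absorbs this into the constant), so $\|\hth^{t+1}-\th_*\|_2 \le 2R$. Taking expectations and splitting,
\eqn{
\E\big[\|\hth^{t+1}-\th_*\|_2\big] \le \tau_1\E\big[\|\hth^t-\th_*\|_2\big] + \tau_2\E\big[\|\hth^t-\th_*\|_2^2\big] + 2R\big(c_1 e^{-c_2 p} + \delta\big).
}
Now I would use the hypothesis $\e \ge 20R\sqrt{c_1 e^{-c_2 p}+\delta}$: squaring gives $c_1 e^{-c_2 p}+\delta \le \e^2/(400R^2)$, and since these probabilities are at most $1$ we also get $\sqrt{c_1 e^{-c_2 p}+\delta}\le \e/(20R)$, hence $c_1 e^{-c_2 p}+\delta \le \e/(20R)$. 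Therefore $2R(c_1 e^{-c_2 p}+\delta) \le \e/10 < \tfrac{1}{10}\E[\|\hth^t-\th_*\|_2]$ by the standing assumption $\E[\|\hth^t-\th_*\|_2]>\e$. Folding this term into the linear term yields
\eqn{
\E\big[\|\hth^{t+1}-\th_*\|_2\big] \le (\tau_1+0.1)\,\E\big[\|\hth^t-\th_*\|_2\big] + \tau_2\,\E\big[\|\hth^t-\th_*\|_2^2\big],
}
which is the claimed bound with $\tilde\tau_1 = \tau_1+0.1$.

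This argument is essentially routine, so there is no serious obstacle; the only point requiring a little care is the handling of the event $\mathcal E^C$, where Theorem~\ref{thm::mainSG} gives no information at all. The clean way around it is the deterministic $2R$ bound afforded by the projection step $\proj_{B_p(R)}$ in Algorithm~1 together with $\th_*\in B_p(R)$ — this is exactly why the projection is built into the algorithm and why the corollary's hypothesis is phrased in terms of $\delta \ge \P(\mathcal E^C)$. A secondary subtlety is that the numerical slack ``$0.1$'' must be consistent with the constant ``$20$'' appearing in the tolerance bound (so that $2R(c_1 e^{-c_2 p}+\delta)/\E[\|\hth^t-\th_*\|_2] \le 2R \cdot \e/(20R)/\e = 0.1$); I would simply verify this arithmetic matches. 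Everything else transfers verbatim from the proof of Corollary~\ref{cor::compositeBounded}.
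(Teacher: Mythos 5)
Your proposal is correct and follows essentially the same route as the paper: split the expectation over the high-probability event on which the per-step bound of Theorem \ref{thm::mainSG} holds versus its complement (of probability at most $c_1e^{-c_2p}+\P(\mathcal{E}^C)\leq c_1e^{-c_2p}+\delta$), use the deterministic bound $\|\hth^{t+1}-\th_*\|_2\leq 2R$ from the projection on the bad set, and absorb $2R(c_1e^{-c_2p}+\delta)\leq \e/10 < \tfrac{1}{10}\E[\|\hth^t-\th_*\|_2]$ into the linear coefficient to get $\tilde\tau_1=\tau_1+0.1$. Your handling of the square root in the tolerance (via the probabilities being at most one) and your remark on $\th_*\in B_p(R)$ are consistent with, and if anything slightly more careful than, the paper's argument.
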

%
%

\subsection{Algorithm parameters}\label{sec::parameters}

\ALGv\! takes three input parameters
and for those, we suggest near-optimal choices based on our theoretical results.

\begin{itemize}
\item \textbf{Sub-sample size:} \ALG uses a subset of indices to approximate the covariance matrix $\Sig$. Corollary 5.50 of \cite{vershynin2010introduction} proves that a sample size of $\O(p)$ is sufficient for sub-gaussian covariates and that of $\O(p\log(p))$ is sufficient for arbitrary distributions supported in some ball to estimate a covariance matrix by its sample mean estimator. In the regime we consider, $n\gg p$, we suggest to use a sample size of $|S|=\O(p\log(p))$.

\item \textbf{Rank:} 
Many methods have been suggested to improve the estimation of covariance matrix and 
almost all of them rely on the concept of \emph{shrinkage}
\cite{cai2010singular,donoho2013optimal}.
Eigenvalue thresholding can be considered as a shrinkage operation which will retain only the important 
second order information. Choosing the rank threshold $r$ can be simply done on the sample mean estimator of $\Sig$. After obtaining the sub-sampled estimate of the mean, one can either plot the spectrum and choose manually or use an optimal technique from \cite{donoho2013optimal-hard}.

\item \textbf{Step size:}
Step size choices of \ALG are quite similar to Newton's method (i.e., See \cite{Boyd:2004}). The main difference comes from the eigenvalue thresholding. 
If the data follows the $r$-spiked model, the optimal step size will be close to 1 if there is no sub-sampling.
However, due to fluctuations resulting from sub-sampling, 
we suggest the following step size choice for \ALG:
\eq{\label{eq::optimalGamma}
\gamma= \frac{2}{1+\frac{\hsig^2 - \O(\sqrt{p/|S|})}{\hsig^2}}.
}
This formula yields a step size larger than 1. 
A detailed discussion can be found in Section \ref{sec::step-size} in Appendix.
\end{itemize}

\section{Experiments}\label{sec::experiments}

In this section, we validate the performance of \ALG through 
extensive numerical studies. 
We experimented on two commonly used GLM optimization problems, 
namely, \emph{Logistic Regression} (LR) and \emph{Linear Regression} (OLS).
LR minimizes Eq.~(\ref{eq::neg-loglike}) for the logistic function $\phi(z) = \log(1+e^z)$, whereas
OLS minimizes the same equation for $\phi(z) = z^2$.
In the following, we briefly describe the algorithms that are used in the experiments:
\begin{itemize}
\item \textbf{Newton's Method} (NM) uses the inverse Hessian evaluated at the current iterate, and may
achieve quadratic convergence. NM steps require $\O(np^2+p^3)$ computation which makes it impractical for large-scale datasets. 
\item \textbf{Broyden-Fletcher-Goldfarb-Shanno} (BFGS) forms a curvature matrix by cultivating the information from the iterates and the gradients at each iteration. Under certain assumptions,
the convergence rate is locally super-linear and the per-iteration cost is comparable to that of first order methods.
\item \textbf{Limited Memory BFGS} (L-BFGS) is similar to BFGS, and uses only the recent few iterates to construct the curvature matrix, gaining significant performance in terms of memory usage. 
\item \textbf{Gradient Descent} (GD) update is proportional to the negative of the full gradient evaluated at the current iterate. Under smoothness assumptions, GD achieves a linear convergence rate, with $\O(np)$ per-iteration cost.
\item \textbf{Accelerated Gradient Descent} (AGD) is proposed by Nesterov \cite{nesterov1983method}, which improves over the gradient descent by using a momentum term.
Performance of AGD strongly depends of the smoothness of the function.

%
%
\end{itemize}

For all the algorithms, we use a constant step size that provides the fastest convergence. 
Sub-sample size $|S|$, rank $r$ and the constant step-size $\gamma$ for \ALG is selected by following the guidelines described in Section \ref{sec::parameters}. The rank threshold $r$ (which is an input to the algorithm) is specified on the title each plot. 

\subsection{Simulations with synthetic sub-gaussian datasets}

Synthetic datasets, S3 and S20 are
generated through a multivariate Gaussian distribution where the covariance matrix follows $r$-spiked model, i.e., $r=3$ for S3 and $r=20$ for S20. 
To generate the covariance matrix, we first generate a random orthogonal matrix, say $\mathbf{M}$.
Next, we generate a diagonal matrix $\mathbf{\Lambda}$ that contains the eigenvalues, i.e., the first $r$ diagonal entries are
chosen to be large, and rest of them are equal to 1. Then, we let $\Sig = \mathbf{M}\mathbf{\Lambda} \mathbf{M}^T$. For dimensions of the datasets, see Table \ref{tab::datasets}.
We also emphasize that the
data dimensions are chosen so that Newton's method still does well.

\begin{figure*}[t]
\centering
 \includegraphics[width=5in]{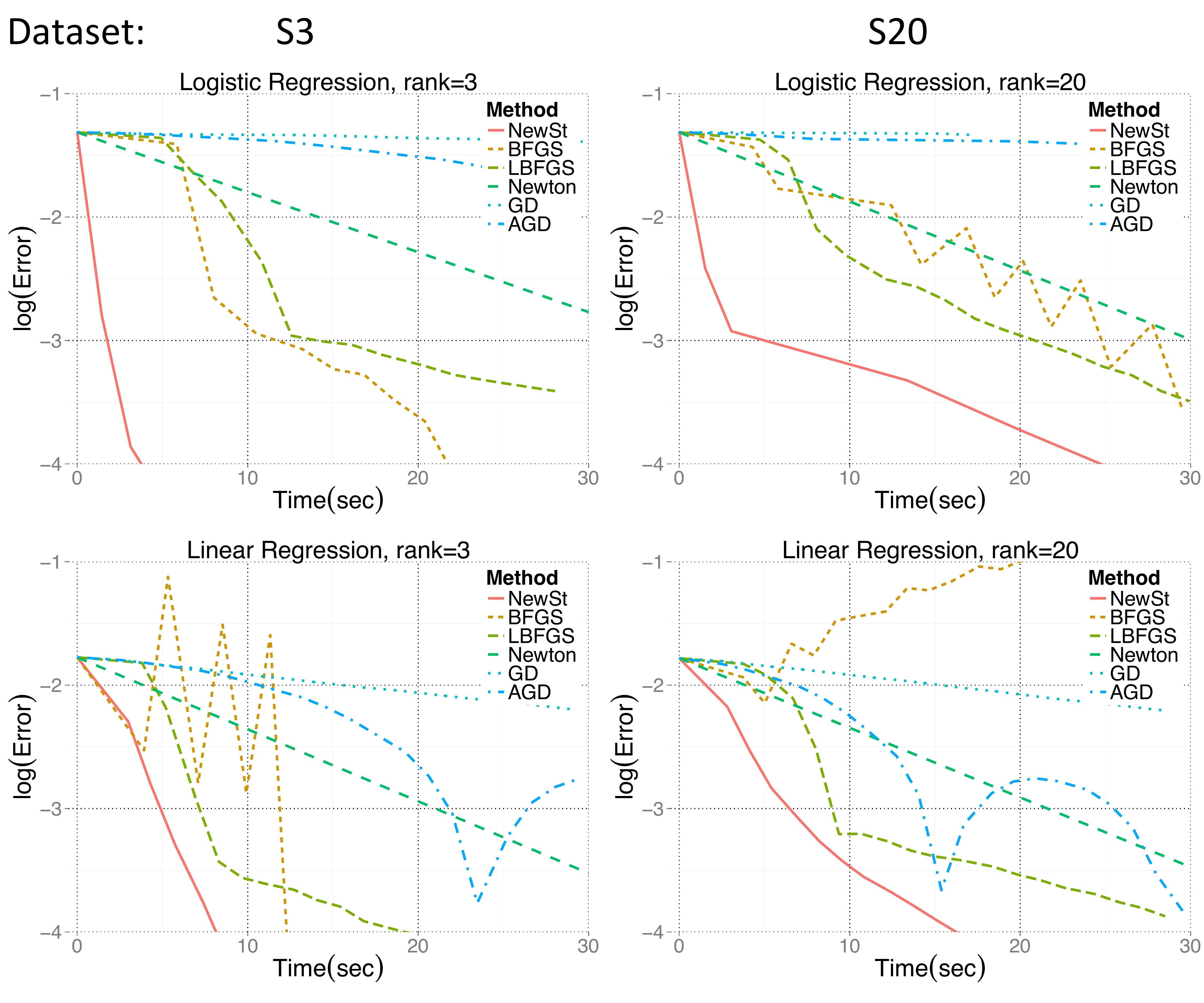}
  \caption{\label{fig::plotSyn}
Performance of various optimization methods on synthetic datasets.
Red straight line represents the proposed method \ALG\!. Algorithm parameters including the rank threshold is selected by the guidelines described in Section \ref{sec::parameters}.
}
\end{figure*}

The simulation results are summarized in Figure \ref{fig::plotSyn}. Further details regarding the experiments can be found in Table \ref{tab::details} in Appendix \ref{sec::expDetail}.  We observe that \ALG provides a significant improvement over the classical techniques. 

Observe that the convergence rate of \ALG has a clear phase transition point in the top left plot in Figure \ref{fig::plotSyn}. As argued earlier, this point depends on various factors including sub-sampling size $ |S|$ and data dimensions $n,p$, the rank threshold $r$ and structure of the covariance matrix. The prediction of the phase transition point is an interesting line of research. However, our convergence guarantees are conservative and cannot be used for this purpose.

\subsection{Experiments with Real datasets}
We experimented on two real datasets where the datasets are downloaded from UCI repository \cite{lichman2013}. Both datasets satisfy $n \gg p$, but we highlight the difference between the proportions of dimensions $n/p$. See Table \ref{tab::datasets} for details.

\begin{figure*}[t]
\centering
 \includegraphics[width=5in]{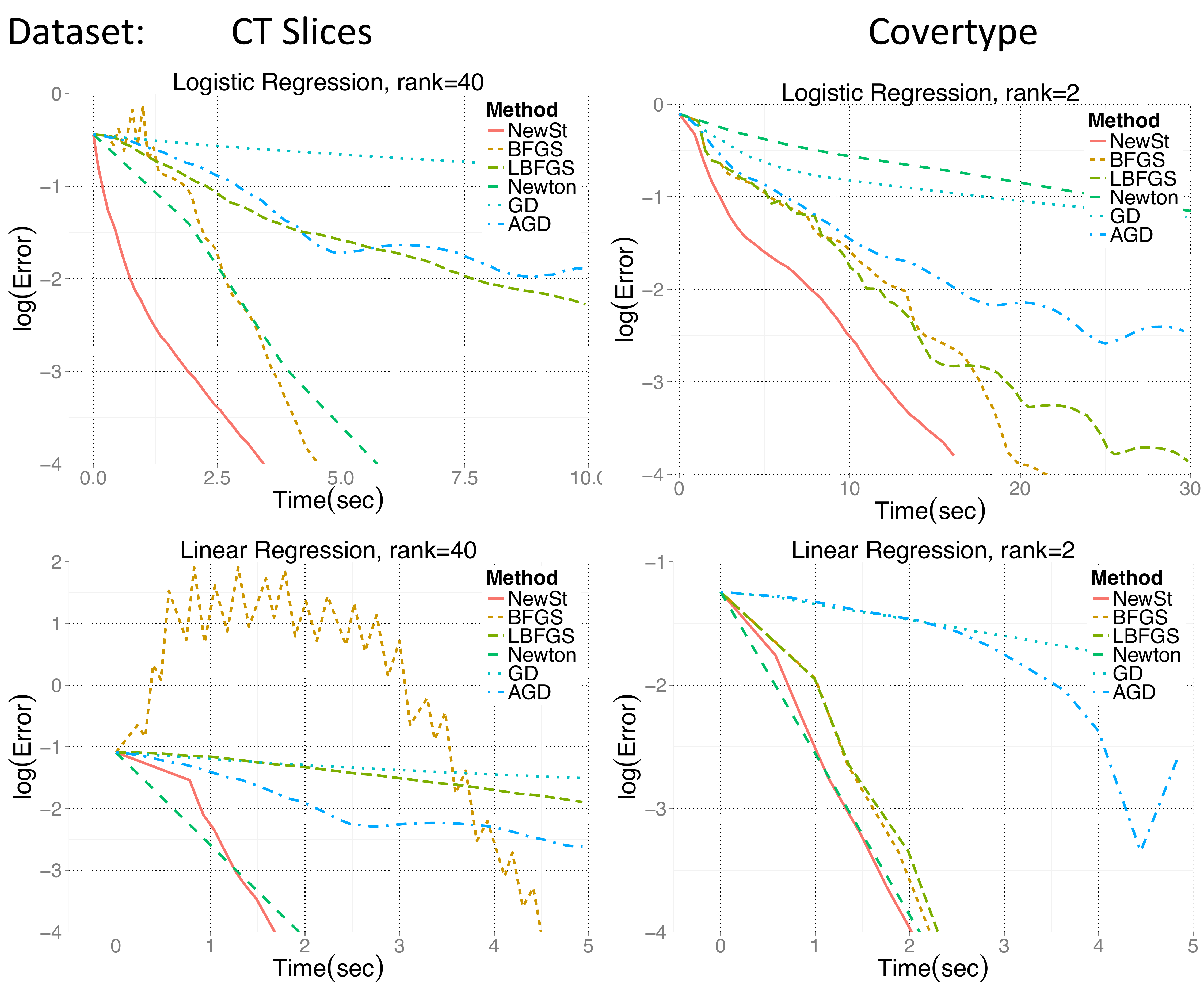}
  \caption{\label{fig::plotReal}
Performance of various optimization methods on real datasets.
Red straight line represents the proposed method \ALG\!. Algorithm parameters including the rank threshold is selected by the guidelines described in Section \ref{sec::parameters}.
}
\end{figure*}

We observe that \ALGv\!\! performs better than classical methods on real datasets as well. More specifically,
the methods that come closer to \ALG is Newton's method for moderate $n$ and $p$ and BFGS when $n$ is large. 

The optimal step-size for \ALG will typically be larger than 1 which is mainly due to eigenvalue thresholding operation. This feature is desirable if one is able to obtain a large step-size that provides convergence. In such cases, the convergence is likely to be faster, yet more unstable compared to the smaller step size choices.
We observed that similar to other second order algorithms, \ALG is also susceptible to the step size selection. If the data is not well-conditioned, and the sub-sample size is not sufficiently large, algorithm might have poor performance. 
This is mainly because the sub-sampling operation is performed only once at the beginning.
Therefore, it might be good in practice to sub-sample once in every few iterations.

\begin{table}[H]
\centering
\begin{tabular}{|l||l|l|l|}
\hline
Dataset    & $n$ & $p$ &   Reference, UCI repo \cite{lichman2013}\\
\hline
CT slices		&53500	&386		&\cite{graf20112d}\\
Covertype 	&581012 	&54		&\cite{blackard1999comparative}\\
S3  	&500000	&300		& 3-spiked model, \cite{donoho2013optimal}\\
S20  	&500000	&300		& 20-spiked model, \cite{donoho2013optimal}\\
\hline
\end{tabular}
\caption{\label{tab::datasets}
Datasets used in the experiments.
}
\end{table}

\section{Discussion}\label{sec::discussion}

In this paper, we proposed an efficient algorithm for training GLMs. We called our algorithm Newton-Stein Method (\ALG\!) as it takes a Newton update at each iteration relying on a Stein-type lemma. The algorithm requires a one time $\O(|S|p^2)$ cost to estimate the covariance structure and $\O(np)$ per-iteration cost to form the update equations. 
We observe that the convergence of \ALG has a phase transition from quadratic rate to linear. This observation is justified theoretically along with several other guarantees for the sub-gaussian covariates such as per-step convergence bounds, conditions for convergence, etc. Parameter selection guidelines of \ALG\! are based on our theoretical results. Our experiments show that \ALG\! provides significant improvement over several optimization methods.

Relaxing some of the theoretical constraints is an interesting line of research. In particular, strong assumptions on the cumulant generating functions might be loosened. 
Another interesting direction is to determine when the phase transition point occurs, 
which would provide a better understanding of the effects of sub-sampling and rank threshold.

\section*{Acknowledgements}
The author is grateful to Mohsen Bayati and Andrea Montanari for stimulating conversations on the topic of this work. The author would like to thank Bhaswar B. Bhattacharya and Qingyuan Zhao
for carefully reading this article and providing valuable feedback.

\bibliographystyle{amsalpha}
\bibliography{bib}

\newpage
\appendix


\section{Proof of Stein-type lemma}
\begin{proof}[Proof of Lemma \ref{lem::stein}]
The proof will follow from integration by parts over multivariate variables. 
Let $g(x)$ be the density of $x$, i.e.,
\eqn{
g(x) = (2\pi)^{-p/2}|\Sig|^{-1/2}\exp\left\{ -\frac{1}{2}\< \Sig^{-1}x,x\>\right\},
}
and $xg(x)dx = -\Sig dg(x)$.
We write
\eqn{
\E [xx^Tf(\<x,\th\>)] =& \int xx^Tf(\<x,\th\>) g(x)dx\\
=& \Sig \int -f(\<x,\th\>) dg(x)x^T,\\
=& \Sig\left \{ \int f(\<x,\th\>) g(x)dx + \int\th x^T\dot{f}(\<x,\th\>)g(x)dx \right \},\\
=& \Sig\left \{ \E [f(\<x,\th\>)] + \int\th \th^T\ddot{f}(\<x,\th\>)g(x)dx\Sig \right \},\\
=& \Sig\left \{ \E [f(\<x,\th\>)] + \th \th^T\E[\ddot{f}(\<x,\th\>)]\Sig \right \},\\
=& \E[f(\<x,\th\>)]\Sig + \E[\ddot{f}(\<x,\th\>)]\Sig\th\th^T\Sig ,
}
which is the desired result.
\end{proof}

\section{Preliminary concentration inequalities}
In this section, we provide concentration results that will be useful proving the main theorem. We start with some simple definitions on a special class of random variables.

\begin{definition}[Sub-gaussian]
For a given constant $K$, a random variable $x\in \reals$ is called \emph{sub-gaussian} if it satisfies
\[
\E [|x|^m]^{1/m} \leq K \sqrt{m}, \ \ \ \   m\geq1.
\]
Smallest such $K$ is the sub-gaussian norm of $x$ and it is denoted by $\|x\|_\sg$. Similarly, a random vector $y \in \reals^p$ is a \emph{sub-gaussian vector} if there exists a constant $K'>0$
such that
\eqn{
\sup_{v \in S^{p-1}}\| \< y, v\>\|_\sg \leq K'.
}
\end{definition}
\begin{definition}[Sub-exponential]
For a given constant $K$, a random variable $x\in \reals$  is called \emph{sub-exponential} if it satisfies
\[
\E [|x|^m]^{1/m} \leq K {m},\ \ \ \   m\geq1,
\]
Smallest such $K$ is the sub-exponential norm of $x$ and it is denoted by $\|x\|_\se$. Similarly, a random vector $y \in \reals^p$ is a \emph{sub-exponential vector} if there exists a constant $K'>0$
such that
\eqn{
\sup_{v \in S^{p-1}}\| \< y, v\>\|_\se \leq K'.
}
\end{definition}

We state the following Lemmas from \cite{vershynin2010introduction} for the convenience of the reader (i.e., See Theorem 5.39 and the following remark for sub-gaussian distributions, and Theorem 5.44 for distributions with arbitrary support):
\begin{lemma}[\cite{vershynin2010introduction}]\label{lem::concentration1}
Let $S$ be an index set and $x_i\in \reals^p$ for $i\in S$ be i.i.d. sub-gaussian random vectors with 
\eqn{
\E[x_i] = 0, \ \ \ \ \ \ \ \E[x_ix_i^T] = \Sig \ \ \ \ \ \  \| x_i\|_\sg \leq K.
}
There exists absolute constants $c,C$ depending only on the sub-gaussian norm $K$ such that with probability $1-2e^{-ct^2}$,
$$\left\|\SigH_S -\Sig\right\|_2 \leq \max\left(\delta,\delta^2\right)\ \ \ \text{where}\ \ \ \delta = C\sqrt{\frac{p}{|S|}} + \frac{t}{\sqrt{|S|}}.$$
\end{lemma}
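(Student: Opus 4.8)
The plan is to prove this by the standard $\epsilon$-net argument combined with a Bernstein-type tail bound for sums of independent sub-exponential random variables, following \cite{vershynin2010introduction}. Write $N=|S|$ and view $\SigH_S=\frac1N\sum_{i\in S}x_ix_i^T$, so that $\SigH_S-\Sig$ is symmetric and $\|\SigH_S-\Sig\|_2=\sup_{v\in S^{p-1}}\big|\langle(\SigH_S-\Sig)v,v\rangle\big|$. First I would discretize: fix a $\tfrac14$-net $\mathcal{N}$ of $S^{p-1}$, for which a volumetric estimate gives $|\mathcal{N}|\le 9^p$, together with the standard approximation fact that for a symmetric matrix $A$ one has $\|A\|_2\le 2\max_{v\in\mathcal{N}}\big|\langle Av,v\rangle\big|$. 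It then suffices to bound the quadratic form $\langle(\SigH_S-\Sig)v,v\rangle$ for each fixed $v\in\mathcal{N}$ and take a union bound.

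Next, fix a unit vector $v$ and set $Z_i=\langle x_i,v\rangle^2$, so that $\langle(\SigH_S-\Sig)v,v\rangle=\frac1N\sum_{i\in S}(Z_i-\E Z_i)$. The hypothesis $\|x_i\|_\sg\le K$ makes $\langle x_i,v\rangle$ a centered scalar with $\|\langle x_i,v\rangle\|_\sg\le K$; squaring, $Z_i$ is sub-exponential with $\|Z_i\|_\se\le 2K^2$, hence $Z_i-\E Z_i$ is centered with $\|Z_i-\E Z_i\|_\se\le c_0K^2$ for an absolute $c_0$. Applying Bernstein's inequality for independent centered sub-exponential variables then gives, for every $u\ge0$,
\eqn{
\P\Big[\big|\tfrac1N\sum_{i\in S}(Z_i-\E Z_i)\big|\ge u\Big]\le 2\exp\!\Big(-c_1 N\min\big(\tfrac{u^2}{K^4},\tfrac{u}{K^2}\big)\Big).
}

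Finally, I would choose the deviation level $u=\tfrac12\max(\delta,\delta^2)$ with $\delta=C\sqrt{p/N}+t/\sqrt N$ and union bound over $\mathcal{N}$:
\eqn{
\P\big[\|\SigH_S-\Sig\|_2\ge \max(\delta,\delta^2)\big]\le 9^p\cdot 2\exp\!\Big(-c_1 N\min\big(\tfrac{u^2}{K^4},\tfrac{u}{K^2}\big)\Big).
}
A short case split on whether $\delta\le1$ or $\delta>1$ shows that in both regimes $\min(u^2/K^4,u/K^2)\ge c_2\delta^2\ge c_2(C^2p/N+t^2/N)$ with $c_2$ depending only on $K$, so the right-hand side is at most $2\exp(p\ln 9-c_2C^2p-c_2t^2)$; taking the absolute constant $C$ large enough (depending only on $K$) that $c_2C^2\ge \ln 9$ absorbs the $9^p$ factor and yields probability at least $1-2e^{-ct^2}$. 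I expect the main obstacle to be two-fold: carrying the sub-exponential-norm bookkeeping cleanly through the squaring and centering step so the constants genuinely depend only on $K$, and getting the case analysis in the last step exactly right — it is precisely the need to cover both the Gaussian-tail regime ($u$ small, Bernstein exponent $\sim u^2$) and the exponential-tail regime ($u$ large, exponent $\sim u$) with a single quantity that forces the $\max(\delta,\delta^2)$ shape of the bound.
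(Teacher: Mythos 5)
Your proposal is correct: the paper does not prove this lemma but imports it verbatim from Vershynin (Theorem 5.39 and the accompanying covariance-estimation remark), and your $\tfrac14$-net argument with the $9^p$ covering bound, the sub-gaussian-to-sub-exponential step for $\langle x_i,v\rangle^2$, Bernstein's inequality, and the case split on $\delta\lessgtr 1$ that produces the $\max(\delta,\delta^2)$ shape is exactly the proof given in that reference, with the constants $c,C$ correctly depending only on $K$. Nothing needs to be added.
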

\begin{remark}
We are interested in the case where $\delta<1$, hence the right hand side becomes $\max\left(\delta,\delta^2\right)=\delta$. In most cases, we will simply let $t=\sqrt{p}$ and obtain a bound of order $\sqrt{p/|S|}$ on the right hand side. For this, we need $|S| = \O(C^2p)$ which is a reasonable assumption in the regime we consider. 
\end{remark}

The following lemma will be helpful to show a similar concentration result for the matrix $\zeta_r(\SigH_S)$:

\begin{lemma}\label{lem::concentration2}
Let the assumptions in Lemma \ref{lem::concentration1} hold. Further, assume that $\Sig$ follows $r$-spiked model. If $|S|$ is sufficiently large, then there exists absolute constants $c,C$ depending only on the sub-gaussian norm $K$ such that with probability $1-2e^{-cp}$,
$$\left\|\zeta_r(\SigH_S)-\SigH_S\right\|_2 \leq C\sqrt{\frac{p}{|S|}}.$$
\end{lemma}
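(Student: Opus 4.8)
The plan is to exploit the fact that $\zeta_r(\cdot)$ is the ``keep the top-$r$ eigenspace, flatten the rest to $\lambda_{r+1}$'' operator, combined with the concentration bound from Lemma 4.1 and the $r$-spiked structure of $\Sig$. First I would diagonalize $\SigH_S = \sum_{i=1}^p \hat\lambda_i \hat u_i \hat u_i^T$ with $\hat\lambda_1 \geq \cdots \geq \hat\lambda_p$, so that by construction $\zeta_r(\SigH_S) = \hat\lambda_{r+1}\identity + \sum_{i=1}^r (\hat\lambda_i - \hat\lambda_{r+1})\hat u_i\hat u_i^T$, and hence
\eq{
\zeta_r(\SigH_S) - \SigH_S = \sum_{i=r+1}^{p}(\hat\lambda_{r+1}-\hat\lambda_i)\,\hat u_i\hat u_i^T \nonumber .
}
Because these are orthogonal rank-one pieces, $\|\zeta_r(\SigH_S)-\SigH_S\|_2 = \max_{i\geq r+1}|\hat\lambda_{r+1}-\hat\lambda_i| = \hat\lambda_{r+1}-\hat\lambda_p$ (the $\hat\lambda_i$ are nonincreasing). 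So the task reduces to showing $\hat\lambda_{r+1}-\hat\lambda_p \leq C\sqrt{p/|S|}$.

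Next I would bring in Lemma 4.1: on the event $\|\SigH_S-\Sig\|_2 \leq \delta$ with $\delta = C'\sqrt{p/|S|}$ (taking $t=\sqrt{p}$, valid for $|S|$ large so that $\delta<1$), Weyl's inequality gives $|\hat\lambda_i - \lambda_i(\Sig)| \leq \delta$ for every $i$. Under the $r$-spiked model $\Sig = \sigma^2\identity + \sum_{i=1}^r\theta_i u_iu_i^T$, the ordered eigenvalues of $\Sig$ satisfy $\lambda_{r+1}(\Sig) = \lambda_{r+2}(\Sig) = \cdots = \lambda_p(\Sig) = \sigma^2$. Therefore
\eq{
\hat\lambda_{r+1} - \hat\lambda_p \leq (\lambda_{r+1}(\Sig)+\delta) - (\lambda_p(\Sig)-\delta) = 2\delta = 2C'\sqrt{\tfrac{p}{|S|}} \nonumber ,
}
which is the claimed bound with $C = 2C'$. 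The probability statement $1-2e^{-cp}$ comes directly from the probability $1-2e^{-ct^2} = 1-2e^{-cp}$ in Lemma 4.1 with the choice $t=\sqrt p$.

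The only genuinely delicate point — and the one I would state carefully rather than wave at — is the bookkeeping that ``sufficiently large $|S|$'' guarantees $\delta < 1$ so that Lemma 4.1 yields $\|\SigH_S-\Sig\|_2\leq\delta$ rather than $\delta^2$, and also (if one wants the cleaner identity $\|\zeta_r(\SigH_S)-\SigH_S\|_2 = \hat\lambda_{r+1}-\hat\lambda_p$) that $\delta$ is small enough relative to the spectral gap $\theta_r$ that the top-$r$ empirical eigenvalues stay above $\hat\lambda_{r+1}$, so the operator $\zeta_r$ is in fact truncating the correct eigenspace; but even without the gap assumption, the displayed inequality $\hat\lambda_{r+1}-\hat\lambda_p \le 2\delta$ holds by Weyl alone, so this is a cosmetic rather than essential obstacle. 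Everything else is a one-line consequence of Weyl's inequality plus the flat tail of the $r$-spiked spectrum.
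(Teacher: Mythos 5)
Your proof is correct and follows the paper's own argument: both reduce $\big\|\zeta_r(\SigH_S)-\SigH_S\big\|_2$ to $\lambda_{r+1}(\SigH_S)-\lambda_p(\SigH_S)$, bound this by $2\|\SigH_S-\Sig\|_2$ via Weyl's inequality and the flat tail of the $r$-spiked spectrum, and invoke Lemma \ref{lem::concentration1} with $t=\sqrt{p}$. Your additional remarks (the $\delta<1$ bookkeeping and the eigenspace-truncation caveat for $\zeta_r$) are careful elaborations of points the paper leaves implicit, not a different route.
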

\begin{proof}
By the Weyl's inequality for the eigenvalues, we have
\eqn{
\left\|\zeta_r(\SigH_S)-\SigH_S\right\|_2 =& \lambda_{r+1}(\SigH_S) - \lambda_p(\SigH_S)
\leq 2 \|\SigH_S - \Sig\|_2.
}
Hence the result follows from the previous lemma for $t=\sqrt{p}$.
\end{proof}
Lemmas \ref{lem::concentration1} and \ref{lem::concentration2} are straightforward concentration results for the random matrices with i.i.d.  sub-gaussian rows. We state their analogues for the the covariates sampled from arbitrary distributions with bounded support.

\begin{lemma}[\cite{vershynin2010introduction}]\label{lem::concentration1bounded}
Let $S$ be an index set and $x_i\in \reals^p$ for $i\in S$ be i.i.d. random vectors with 
\eqn{
\E[x_i] = 0, \ \ \ \ \ \ \ \E[x_ix_i^T] = \Sig, \ \ \ \ \ \  \| x_i\|_2 \leq \sqrt{K}\ \text{a.s.}
}
Then, for some absolute constant $C$,
with probability $1-pe^{-Ct^2}$, we have
$$\big\|\SigH_S -\Sig\big\|_2 \leq \max\left(\|\Sig\|_2^{1/2}\delta,\delta^2\right)\ \ \ \text{where}\ \ \ \delta = t\sqrt{\frac{K}{|S|}}.$$
\end{lemma}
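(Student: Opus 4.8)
The plan is to follow the standard covering-net plus scalar-Bernstein route; this is essentially Theorem 5.44 of \cite{vershynin2010introduction}. Since $\SigH_S-\Sig$ is symmetric, $\|\SigH_S-\Sig\|_2=\sup_{v\in S^{p-1}}\big|\<(\SigH_S-\Sig)v,v\>\big|$. First I would fix a $\tfrac14$-net $\mathcal N$ of $S^{p-1}$ with $|\mathcal N|\le 9^p$ and invoke the standard comparison $\|\SigH_S-\Sig\|_2\le 2\max_{v\in\mathcal N}\big|\<(\SigH_S-\Sig)v,v\>\big|$, so that it suffices to bound the scalar deviation for one fixed unit vector and then union-bound over $\mathcal N$.

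Fix $v\in S^{p-1}$ and put $Z_i=\<x_i,v\>^2-\E[\<x_i,v\>^2]$, so that $\<(\SigH_S-\Sig)v,v\>=\tfrac{1}{|S|}\sum_{i\in S}Z_i$ (reading $\SigH_S$ as the normalized sample second-moment matrix $\tfrac{1}{|S|}\sum_{i\in S}x_ix_i^T$). The $Z_i$ are i.i.d., mean zero, bounded by $|Z_i|\le\<x_i,v\>^2\le\|x_i\|_2^2\le K$ almost surely by Cauchy--Schwarz, with variance controlled by $\E[Z_i^2]\le\E[\<x_i,v\>^4]\le K\,\E[\<x_i,v\>^2]=K\,v^T\Sig v\le K\|\Sig\|_2$. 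Bernstein's inequality for bounded i.i.d.\ summands then gives, for an absolute constant $c$ and every $s>0$,
\[
\P\!\left(\Big|\tfrac{1}{|S|}\sum_{i\in S}Z_i\Big|>s\right)\le 2\exp\!\left(-c\,|S|\min\!\Big\{\tfrac{s^2}{K\|\Sig\|_2},\,\tfrac{s}{K}\Big\}\right).
\]
Taking $s=\tfrac12\max\{\|\Sig\|_2^{1/2}\delta,\delta^2\}$ with $\delta=t\sqrt{K/|S|}$, a short computation shows that $|S|$ times the $\min$ is $\gtrsim t^2$ no matter which branch is active: if $s\ge\tfrac12\|\Sig\|_2^{1/2}\delta$ the first branch gives $|S|\,s^2/(K\|\Sig\|_2)\gtrsim t^2$, while if $s\ge\tfrac12\delta^2$ the second branch gives $|S|\,s/K\gtrsim t^2$. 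Hence $\P(|\<(\SigH_S-\Sig)v,v\>|>s)\le 2e^{-c't^2}$ for fixed $v$; a union bound over $\mathcal N$ together with the net comparison yields $\P(\|\SigH_S-\Sig\|_2>\max\{\|\Sig\|_2^{1/2}\delta,\delta^2\})\le 2\cdot 9^p e^{-c't^2}$, and absorbing $9^p$ into the exponent (valid once $t\gtrsim\sqrt p$, the regime of interest) and renaming constants gives the stated probability $1-pe^{-Ct^2}$.

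There is no real obstacle here: this is a textbook sub-sampled-covariance concentration bound and the argument essentially reproduces \cite{vershynin2010introduction}. The only points needing care are (i) checking that the $\max$ defining $\delta$ is precisely what makes both branches of Bernstein contribute the same $e^{-\Theta(t^2)}$ tail, so that no spurious dependence on $\|\Sig\|_2$ or $K$ enters, and (ii) the constant bookkeeping needed to pass from $9^p e^{-c't^2}$ to the clean $pe^{-Ct^2}$ form; both are routine.
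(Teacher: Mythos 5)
Your scalar part is fine (the Bernstein step and the branch-checking with $s=\tfrac12\max\{\|\Sig\|_2^{1/2}\delta,\delta^2\}$ both work out), but the final step hides a genuine gap. The net-plus-scalar-Bernstein route gives a failure probability of order $9^p e^{-c t^2}$, and a factor $9^p$ cannot be ``absorbed into the exponent'' except when $t\gtrsim\sqrt{p}$. The lemma, however, asserts the much stronger dimensional prefactor $p$, i.e.\ probability $1-pe^{-Ct^2}$, and this is exactly the regime in which the paper uses it: the remark immediately following the lemma takes $t=\sqrt{3\log(p)/C}$ (order $\sqrt{\log p}$, far below $\sqrt p$) to get probability $1-1/p^2$ with a sub-sample size of only $|S|=\O(K\log(p)/\|\Sig\|_2)$. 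With your bound, that choice of $t$ is vacuous, and one would instead need $|S|\gtrsim Kp$, which would degrade the $\sqrt{\log(p)/|S|}$ rates in Lemma \ref{lem::concentration2bounded} and the $\min\{p/\log(p)|S|,\,n/\log(n)\}$ term in Theorem \ref{thm::mainBounded}. So your argument proves a weaker statement than the one needed, not the lemma as stated.

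The paper itself does not reprove this result; it cites Theorem 5.44 of \cite{vershynin2010introduction}, whose proof is not a covering-net argument but an application of the non-commutative (matrix) Bernstein inequality to the sum of the independent rank-one matrices $x_ix_i^T-\Sig$, each bounded in operator norm by roughly $K$ and with matrix variance controlled by $K\|\Sig\|_2$. It is precisely the matrix Bernstein inequality that produces the dimension factor $p$ (from the trace/dimension term) in place of the cardinality $9^p$ of a net, and hence the $\log p$ rather than linear-in-$p$ sample-size requirement. To repair your proof, replace the net-plus-union-bound step with matrix Bernstein (e.g.\ Theorem 5.29 in \cite{vershynin2010introduction} or Tropp's version); the variance and boundedness computations you already did transfer directly.
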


\begin{remark}\label{rem::concentration2}
We will choose $t=\sqrt{3\log(p)/C}$ which will provide us with a probability of $1-1/p^2$. Therefore, if the sample size is sufficiently large, i.e., $$|S| \geq \frac{3K\log(p)}{C\|\Sig\|_2} = \O(K\log(p)/\|\Sig\|_2),$$
we can estimate the true covariance matrix quite well for arbitrary distributions with bounded support. In particular, with probability $1-1/p^2$, we obtain
\eqn{
\big\|\SigH_S -\Sig\big\|_2 \leq c \sqrt{\frac{\log(p)}{|S|}},
}
where $c=\sqrt{3K\|\Sig\|_2/C}$.
\end{remark}

\begin{lemma}\label{lem::concentration2bounded}
Let the assumptions in Lemma \ref{lem::concentration1bounded} hold. Further, assume that $\Sig$ follows $r$-spiked model. If $|S|$ is sufficiently large, for $c=2\sqrt{3K\|\Sig\|_2/C}$, with probability $1-1/p^2$, we have
$$\left\|\zeta_r(\SigH_S)-\SigH_S\right\|_2 \leq c\sqrt{\frac{\log(p)}{|S|}},$$
where $C$ is an absolute constant.
\end{lemma}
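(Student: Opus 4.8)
The plan is to mirror the proof of Lemma \ref{lem::concentration2} verbatim, simply swapping the sub-gaussian concentration input (Lemma \ref{lem::concentration1}) for its bounded-support counterpart (Lemma \ref{lem::concentration1bounded}, in the form of Remark \ref{rem::concentration2}). First I would unpack the definition of $\zeta_r$ from Step 2 of Algorithm 1: writing $\hsig^2 = \lambda_{r+1}(\SigH_S)$, the matrix $\zeta_r(\SigH_S) = \hsig^2\I + M_\star$, where $M_\star$ is the best rank-$r$ Frobenius approximation of $\SigH_S - \hsig^2\I$, i.e. the truncation of $\SigH_S - \hsig^2\I$ to its top $r$ eigenpairs. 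Hence $\zeta_r(\SigH_S)$ has the same top $r$ eigenvectors and eigenvalues as $\SigH_S$, while all of its remaining eigenvalues equal $\lambda_{r+1}(\SigH_S)$. Consequently $\zeta_r(\SigH_S) - \SigH_S$ is symmetric, vanishes on the top-$r$ eigenspace, and on the complement has eigenvalues $\lambda_{r+1}(\SigH_S) - \lambda_j(\SigH_S) \le 0$ for $j \ge r+1$; therefore $\|\zeta_r(\SigH_S) - \SigH_S\|_2 = \lambda_{r+1}(\SigH_S) - \lambda_p(\SigH_S)$.

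Next I would bound this spectral gap by $2\|\SigH_S - \Sig\|_2$. Since $\Sig$ follows the $r$-spiked model, $\lambda_{r+1}(\Sig) = \cdots = \lambda_p(\Sig) = \sigma^2$, so Weyl's inequality gives $\lambda_{r+1}(\SigH_S) \le \sigma^2 + \|\SigH_S - \Sig\|_2$ and $\lambda_p(\SigH_S) \ge \sigma^2 - \|\SigH_S - \Sig\|_2$; subtracting yields $\lambda_{r+1}(\SigH_S) - \lambda_p(\SigH_S) \le 2\|\SigH_S - \Sig\|_2$. This is the one place where the $r$-spiked assumption is essential: without the flat tail of the spectrum of $\Sig$, the bound would pick up the population gap $\lambda_{r+1}(\Sig) - \lambda_p(\Sig)$, which need not be small.

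Finally I would invoke Remark \ref{rem::concentration2}, i.e. Lemma \ref{lem::concentration1bounded} with $t = \sqrt{3\log(p)/C}$: provided $|S| \ge 3K\log(p)/(C\|\Sig\|_2)$ — which is exactly the ``$|S|$ sufficiently large'' hypothesis, and precisely the condition under which $\max(\|\Sig\|_2^{1/2}\delta, \delta^2)$ reduces to $\|\Sig\|_2^{1/2}\delta$ — one has $\|\SigH_S - \Sig\|_2 \le \sqrt{3K\|\Sig\|_2/C}\,\sqrt{\log(p)/|S|}$ with probability at least $1 - 1/p^2$. Chaining the three steps gives $\|\zeta_r(\SigH_S) - \SigH_S\|_2 \le 2\sqrt{3K\|\Sig\|_2/C}\,\sqrt{\log(p)/|S|} = c\sqrt{\log(p)/|S|}$ on that event, which is the claim. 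There is no genuine obstacle beyond bookkeeping; the only two points needing care are the ones flagged above: the reduction of the truncation error to a single eigenvalue gap, and the telescoping of Weyl's inequality that relies on the $r$-spiked structure of $\Sig$.
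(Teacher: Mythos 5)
Your proposal is correct and follows essentially the same route as the paper: the paper proves this lemma by repeating the argument of Lemma \ref{lem::concentration2}, namely identifying $\left\|\zeta_r(\SigH_S)-\SigH_S\right\|_2$ with the eigenvalue gap $\lambda_{r+1}(\SigH_S)-\lambda_p(\SigH_S)$, bounding it by $2\big\|\SigH_S-\Sig\big\|_2$ via Weyl's inequality under the $r$-spiked model, and then invoking Lemma \ref{lem::concentration1bounded} with $t=\sqrt{3\log(p)/C}$ as in Remark \ref{rem::concentration2}. Your write-up merely spells out the spectral-truncation and Weyl steps that the paper leaves implicit, and arrives at the same constant $c=2\sqrt{3K\|\Sig\|_2/C}$ and probability $1-1/p^2$.
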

Proof of Lemma \ref{lem::concentration2bounded} is the same as that of Lemma \ref{lem::concentration2}.
Before proceeding, we note that the bounds given in Lemmas \ref{lem::concentration2} and \ref{lem::concentration2bounded} also applies to $\big\|\zeta_r(\SigH_S)-\Sig\big\|_2$.

In the following, we will focus on the empirical processes and obtain more technical bounds for the approximate Hessian. To that extent, we provide some basic definitions that will be useful later in the proofs. For a more detailed discussion on the machinery used throughout next section, we refer interested reader to \cite{van2000asymptotic}.\\

\begin{definition}
On a metric space $(X,d)$, for $\epsilon >0$, $T_\epsilon \subset X$ is called an $\epsilon$-net over $X$ if $\forall x \in X$, $\exists t \in T_\epsilon$ such that $d(x,t)\leq \epsilon$.
\end{definition}

In the following, we will use $L_1$ distance between two functions $f$ and $g$, namely $d(f,g) = \int |f-g|$. Note that the same distance definition can be carried to random variables as they are simply real measurable functions.
\begin{definition}
Given a function class $\F$, and any two functions $l$ and $u$ (not necessarily in $\F$), the bracket $[l,u]$ is the set of all $f \in \F$ such that $l \leq f \leq u$. A bracket satisfying $l\leq u$ and $\int |u-l| \leq \epsilon$ is called an $\epsilon$-bracket in $L_1$. The bracketing number $\N (\epsilon, \F, L_1)$ is the minimum number of different $\epsilon$-brackets needed to cover $\F$.
\end{definition}

The preliminary tools presented in this section will be utilized to 
obtain the concentration results in Section \ref{sec::main-lemmas}.


\section{Main lemmas}\label{sec::main-lemmas}
\subsection{Concentration of covariates with bounded support}

\begin{lemma}\label{lem::concentrationBounded1}
Let $x_i\in \reals^p$, for $i=1,2,...,n$, be i.i.d. random vectors supported on a ball of radius $\sqrt{K}$, with mean $0$, and covariance matrix $\Sig$. Also let $f:\reals \to \reals$ be a uniformly bounded function such that for some $B >0 $, we have $\|f \|_\infty < B$ and $f$ is Lipschitz continuous with constant $L$.  Then, there exist constants $c_1,c_2,c_3$ such that
\eqn{
\P \left( \sup_{\th \in B_n(R)} \left | \frac{1}{n}\sum_{i=1}^n f(\<x_i,\th \>) - \E[f(\< x,\th\>)]  \right | > c_1\sqrt{\frac{ p\log(n)}{n}} \right) \leq c_2e^{-c_3p},
}
where the constants depend only on the bound $B$ and radii $R$ and $\sqrt{K}$.
\end{lemma}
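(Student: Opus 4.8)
The plan is to control the supremum of the empirical process $\th \mapsto \frac1n\sum_i f(\<x_i,\th\>) - \E[f(\<x,\th\>)]$ over the ball $B_p(R)$ by a standard two-step argument: first a symmetrization / bounded-differences bound on the expected supremum, then a concentration-of-measure bound around that expectation. First I would fix notation: write $Z_n(\th) = \frac1n\sum_i f(\<x_i,\th\>) - \E[f(\<x,\th\>)]$ and $g_n = \sup_{\th \in B_p(R)} |Z_n(\th)|$. Since $\|f\|_\infty < B$, each summand $f(\<x_i,\th\>)$ lies in an interval of length $2B$, so changing one $x_i$ changes $g_n$ by at most $2B/n$; McDiarmid's bounded-differences inequality then gives $\P(g_n > \E[g_n] + s) \leq \exp(-ns^2/(2B^2))$. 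Choosing $s = c\sqrt{p\log(n)/n}$ makes this probability at most $e^{-c'p}$ for a suitable constant, which is the tail shape we want; it remains to show $\E[g_n] \lesssim \sqrt{p\log(n)/n}$ so that the threshold in the statement absorbs both $\E[g_n]$ and $s$.

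To bound $\E[g_n]$, I would use symmetrization (introducing i.i.d. Rademacher signs $\varepsilon_i$) to reduce to bounding $\E \sup_{\th}|\frac1n\sum_i \varepsilon_i f(\<x_i,\th\>)|$, and then exploit that the function class $\mathcal{G} = \{x \mapsto f(\<x,\th\>) : \th \in B_p(R)\}$ is a Lipschitz transform of the class of linear functionals $\{x\mapsto \<x,\th\>:\th\in B_p(R)\}$. Because $x$ is supported in a ball of radius $\sqrt{K}$, the inner products range over $[-\sqrt{K}R, \sqrt{K}R]$, and $f$ restricted there is $L$-Lipschitz and bounded by $B$; so $\mathcal{G}$ has a finite uniform (or bracketing) entropy: an $\epsilon$-net of $B_p(R)$ of size $(3R/\epsilon)^p$ induces, via Lipschitzness of $f$ and Cauchy–Schwarz, an $(L\sqrt{K}\epsilon)$-net of $\mathcal{G}$ in sup-norm. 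Feeding this covering number, of log-order $p\log(1/\epsilon)$, into a chaining / Dudley-type bound (or simply a union bound over the net plus a Lipschitz bound on the oscillation between net points) yields $\E[g_n] \leq c_1\sqrt{p\log(n)/n}$, where the $\log(n)$ factor arises from taking the net scale $\epsilon \asymp 1/n$ so that the discretization error is negligible. Combining this with the McDiarmid step and adjusting constants gives the claimed bound with probability at least $1 - c_2 e^{-c_3 p}$.

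The main obstacle I expect is getting the entropy bookkeeping to come out with exactly the $\sqrt{p\log(n)/n}$ rate rather than, say, an extra $\sqrt{\log p}$ or a worse power of $p$: one has to be careful that the chaining integral $\int_0^{\mathrm{diam}} \sqrt{\log N(\epsilon,\mathcal{G})}\,d\epsilon$ contributes only an $O(\sqrt{p})$ factor (it does, since $\sqrt{\log N(\epsilon)} \asymp \sqrt{p}\sqrt{\log(1/\epsilon)}$ integrates to $O(\sqrt p)$), and that truncating the net at scale $1/n$ is what produces the single $\sqrt{\log n}$ and no more. A secondary technical point is that $B_p(R)$ is $p$-dimensional so $\E[g_n]$ genuinely scales like $\sqrt{p/n}$ (not $1/\sqrt n$); this is fine and is in fact the content of the bound, but it means the symmetrization-plus-entropy route is essential and one cannot get away with a naive pointwise bound and union bound over a fixed finite set. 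All constants depend only on $B$, $R$, and $\sqrt{K}$ (through the range $\sqrt K R$ of the inner products and the Lipschitz/boundedness constants of $f$ on that range), as claimed; $L$ enters only through these constants and the statement's constants may be taken to subsume it.
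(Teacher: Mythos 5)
Your plan is sound and would yield the stated bound, but it routes the probabilistic work differently from the paper. The paper's proof is a single-scale argument applied directly to the tail probability: it uses the Lipschitz bound $|f(\<x,\th\>)-f(\<x,\th'\>)|\le L\sqrt{K}\,\|\th-\th'\|_2$ to pass from the supremum over $B_p(R)$ to a maximum over a $\Delta$-net with $\Delta=\e/(4L\sqrt{K})$, applies Hoeffding's inequality pointwise (using $\|f\|_\infty\le B$), takes a union bound over the net of size $(R\sqrt{p}/\Delta)^p$, and then solves for $\e$ (via the paper's Lemma \ref{lem::epsilon}) so that the net's entropy $p\log(R\sqrt{p}/\Delta)$ is beaten by the Hoeffding exponent $n\e^2/(2B^2)$, which is exactly where the $\sqrt{p\log(n)/n}$ rate and the $e^{-c_3 p}$ tail come from. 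You instead split the work into (i) McDiarmid's inequality to concentrate $g_n=\sup_\th|Z_n(\th)|$ around $\E[g_n]$ with bounded differences $2B/n$, and (ii) symmetrization plus an entropy bound (Dudley chaining, or a single-scale net with Massart's finite-class lemma) to show $\E[g_n]\lesssim\sqrt{p\log(n)/n}$; the covering-number input ($\e$-net of $B_p(R)$ transferred to the function class through the same Lipschitz/Cauchy--Schwarz step) is the shared ingredient. What your route buys: the expectation and fluctuation analyses are decoupled, and with genuine chaining you could even sharpen $\E[g_n]$ to $O(\sqrt{p/n})$, with the $\sqrt{\log n}$ then needed only in the McDiarmid deviation $s$ to force the $e^{-c_3p}$ tail; what the paper's route buys is a shorter, fully self-contained argument using only Hoeffding and a union bound, with the constants computed explicitly. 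One cosmetic point you already flag correctly: as in the paper's own proof, the constants in fact also involve the Lipschitz constant $L$ (it enters through the net scale, inside a logarithm), even though the lemma statement advertises dependence only on $B$, $R$, $\sqrt{K}$; subsuming $L$ into $c_1$ is consistent with what the paper does.
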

\begin{proof}[Proof of Lemma \ref{lem::concentrationBounded1}]
We start by using the Lipschitz property of the function $f$, i.e., $\forall \th, \th' \in B_p(R)$, 
\eqn{
\| f(\<x,\th\>)-f(\<x,\th'\>)\|_2 \leq& L \|x\|_2 \|\th-\th'\|_2,\\
\leq&  L \sqrt{K}\|\th-\th'\|_2.
}

Now let $T_\Delta$ be a $\Delta$-net over $B_p(R)$. Then $\forall \th \in B_p(R)$, $\exists \th'\in T_\Delta$ such that right hand side of the above inequality is smaller than $\Delta L\sqrt{K}$.
Then, we can write
\eq{\label{eq::supMaxBound}
 \left | \frac{1}{n}\sum_{i=1}^n f(\<x_i,\th \>) - \E[f(\< x,\th\>)]  \right |
 \leq  \left | \frac{1}{n}\sum_{i=1}^n f(\<x_i,\th' \>) - \E[f(\< x,\th'\>)]  \right | + 2\Delta L\sqrt{K}.
}

By choosing $$\Delta = \frac{\epsilon}{4L\sqrt{K}},$$
and taking supremum over the corresponding $\th$ sets on both sides, we obtain the following inequality
\eqn{
 \sup_{\th \in B_n(R)} \left | \frac{1}{n}\sum_{i=1}^n f(\<x_i,\th \>) - \E[f(\< x,\th\>)]  \right | 
 \leq 
 \max_{\th \in \T_\Delta} \left | \frac{1}{n}\sum_{i=1}^n f(\<x_i,\th \>) - \E[f(\< x,\th\>)]  \right | +\frac{\e}{2}.
}

Now, since we have $\|f\|_\infty \leq B$ and for a fixed $\th$ and $i=1,2,...,n$, the random variables $f(\<x_i,\th\>)$ are i.i.d., by the Hoeffding's concentration inequality, we have
\eqn{
\P\left(\left | \frac{1}{n}\sum_{i=1}^n f(\<x_i,\th \>) - \E[f(\< x,\th\>)]  \right |>\e/2\right)
\leq 2\exp\left( -\frac{n\e^2}{2B^2}\right).
}
Combining Eq.~(\ref{eq::supMaxBound}), the above result together with the union bound, we easily obtain
\eqn{
\P\left( \sup_{\th \in B_n(R)}\left | \frac{1}{n}\sum_{i=1}^n f(\<x_i,\th \>) - \E[f(\< x,\th\>)]  \right | > \e\right)\leq&
\P\left(\max_{\th \in \T_\Delta} \left | \frac{1}{n}\sum_{i=1}^n f(\<x_i,\th \>) - \E[f(\< x,\th\>)]  \right | > \e/2\right) \\\leq&
2|\T_\Delta|\exp\left( -\frac{n\e^2}{2B^2}\right),
}
where $\Delta = {\epsilon}/{4L\sqrt{K}}$.

Next, we apply Lemma \ref{lem::sphere} and obtain that
$$|\T_\Delta| \leq \left(\frac{R\sqrt{p}}{\Delta}\right)^p = \left(\frac{R\sqrt{p}}{ {\epsilon}/{4L\sqrt{K}}}\right)^p. $$

We require that the bound on the probability gets an exponential decay with rate $\O(p)$. Using Lemma \ref{lem::epsilon} with $a=2B^2p/n$ and $b = 4LR\sqrt{Kp}$, we obtain that $\e$ should be
\eqn{
\e = \sqrt{
\frac{B^2p}{n}
\log\left( \frac{16L^2R^2Kn}{B^2}\right)
}
= \O\left(\sqrt{\frac{p\log(n)}{n}} \right),
}
which completes the proof.
\end{proof}

In the following, we state the concentration results on functions of the form 
$$x\to f(\<x,\th \>)\<x,v\>^2.$$ Functions of this type form the summands of the Hessian matrix in GLMs.
\begin{lemma}\label{lem::concentrationBounded2}
Let $x_i\in \reals^p$, for $i=1,...,n$, be i.i.d. random vectors supported on a ball of radius $\sqrt{K}$, with mean $0$, covariance matrix $\Sig$. Also let $f:\reals \to \reals$ be a uniformly bounded function such that for some $B >0 $, we have $\|f \|_\infty < B$ and $f$ is Lipschitz continuous with constant $L$.  Then, for $v \in S^{p-1}$, there exist constants $c_1,c_2,c_3$ such that
\eqn{
\P \left( \sup_{\th \in B_p(R)} \left | \frac{1}{n}\sum_{i=1}^n f(\<x_i,\th \>)\<x_i,v\>^2 - \E[f(\< x,\th\>)\<x,v\>^2]  \right | > c_1\sqrt{\frac{p\log\left(n \right)}{n} } \right) \leq  c_2e^{-c_3p} ,\\
}
where the constants depend only on the bound $B$ and radii $R$ and $\sqrt{K}$.
\end{lemma}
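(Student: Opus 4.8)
The plan is to repeat, essentially verbatim, the $\epsilon$-net argument used in the proof of Lemma~\ref{lem::concentrationBounded1}, now applied to the summand $g_\th(x) := f(\<x,\th\>)\<x,v\>^2$. Two elementary observations make the transfer routine. First, since $\|v\|_2=1$ and $\|x_i\|_2\le\sqrt{K}$ almost surely, we have $\<x_i,v\>^2\le K$, so $g_\th$ is uniformly bounded with $\|g_\th\|_\infty \le BK$ for every $\th\in B_p(R)$; this is what feeds into Hoeffding's inequality. Second, the factor $\<x,v\>^2$ does not depend on $\th$, so the Lipschitz-in-$\th$ estimate of the previous proof survives with a slightly worse constant:
\eqn{
\big|g_\th(x)-g_{\th'}(x)\big| \;=\; \<x,v\>^2\,\big|f(\<x,\th\>)-f(\<x,\th'\>)\big| \;\le\; K\,L\,\|x\|_2\,\|\th-\th'\|_2 \;\le\; L\,K^{3/2}\,\|\th-\th'\|_2 .
}
In short, the only bookkeeping change from Lemma~\ref{lem::concentrationBounded1} is $B\mapsto BK$ in the Hoeffding step and $L\sqrt{K}\mapsto LK^{3/2}$ in the discretization step.

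Concretely, I would take $T_\Delta$ a $\Delta$-net over $B_p(R)$, use the display above to pass from the supremum over $B_p(R)$ to the maximum over $T_\Delta$ at the cost of an additive error $2\Delta LK^{3/2}$, and set $\Delta = \e/(4LK^{3/2})$ so that this error equals $\e/2$. For each fixed $\th\in T_\Delta$ the variables $g_\th(x_i)$ are i.i.d.\ and bounded by $BK$, so Hoeffding's inequality gives $\P\big(|\tfrac1n\sum_{i=1}^n g_\th(x_i)-\E[g_\th(x)]|>\e/2\big)\le 2\exp(-cn\e^2/(B^2K^2))$ for an absolute constant $c$. A union bound over $|T_\Delta|\le (R\sqrt{p}/\Delta)^p$ (Lemma~\ref{lem::sphere}) then yields
\eqn{
\P\Big(\sup_{\th\in B_p(R)}\Big|\tfrac1n\sum_{i=1}^n g_\th(x_i)-\E[g_\th(x)]\Big|>\e\Big) \;\le\; 2\Big(\tfrac{4LRK^{3/2}\sqrt{p}}{\e}\Big)^{p}\exp\!\Big(-\tfrac{cn\e^2}{B^2K^2}\Big),
}
and invoking Lemma~\ref{lem::epsilon} exactly as before — with the obvious substitutions in its parameters $a$ and $b$ — shows that $\e=\Theta\big(\sqrt{p\log(n)/n}\big)$ makes the right-hand side at most $c_2e^{-c_3p}$, the constants depending only on $B$, $L$, $R$, $K$.

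I do not expect a genuine obstacle: the proof is structurally identical to that of Lemma~\ref{lem::concentrationBounded1}, and the only points requiring a moment's care are (i) the uniform bound $\<x,v\>^2\le K$, which is what keeps $g_\th$ bounded and hence amenable to Hoeffding, and (ii) propagating the constants so the final rate is $\sqrt{p\log(n)/n}$ rather than something worse. If one wanted to shave the $\sqrt{\log n}$ factor, the crude net-plus-union-bound step could be replaced by a Dudley-type chaining argument, but that refinement is not needed here.
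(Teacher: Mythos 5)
Your proposal is correct and follows essentially the same route as the paper's proof: an $\epsilon$-net over $B_p(R)$ with $\Delta=\e/(4LK^{3/2})$, Hoeffding's inequality for the summands bounded by $BK$, a union bound via Lemma~\ref{lem::sphere}, and Lemma~\ref{lem::epsilon} (with $a=\O(B^2K^2p/n)$, $b=\O(LRK^{3/2}\sqrt{p})$) to conclude $\e=\O(\sqrt{p\log(n)/n})$. The only cosmetic difference is that you bound $\<x,v\>^2\le K$ separately before using the Lipschitz property, while the paper bounds the product directly by $L\|x\|_2^3\|\th-\th'\|_2\le LK^{3/2}\|\th-\th'\|_2$, which yields the same constants.
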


\begin{proof}[Proof of Lemma \ref{lem::concentrationBounded2}]
As in the proof of Lemma \ref{lem::concentrationBounded1}, we start by using the Lipschitz property of the function $f$, i.e., $\forall \th,\th'\in B_p(R)$,
\eqn{
\| f(\<x,\th\>)\<x,v\>^2-f(\<x,\th'\>)\<x,v\>^2\|_2 \leq& L \|x\|^3_2 \|\th-\th'\|_2,\\
\leq&  L K^{1.5}\|\th-\th'\|_2.
}

For a net $T_\Delta$, $\forall \th \in B_p(R)$, $\exists \th'\in T_\Delta$ such that right hand side of the above inequality is smaller than $\Delta L\sqrt{K}$.
Then, we can write
\eq{\label{eq::supMaxBound2}
 \left | \frac{1}{n}\sum_{i=1}^n f(\<x_i,\th \>)\<x_i,v\>^2 - \E[f(\< x,\th\>)\<x,v\>^2]  \right |\nonumber
 \leq&  \left | \frac{1}{n}\sum_{i=1}^n f(\<x_i,\th' \>)\<x_i,v\>^2 - \E[f(\< x,\th'\>)\<x,v\>^2]  \right | \\
 &+ 2\Delta LK^{1.5}.
}

This time, we choose $$\Delta = \frac{\epsilon}{4LK^{1.5}},$$
and take the supremum over the corresponding feasible $\th$-sets on both sides,
\eqn{
& \sup_{\th \in B_n(R)} \left | \frac{1}{n}\sum_{i=1}^n f(\<x_i,\th \>)\<x_i,v\>^2 - \E[f(\< x,\th\>)\<x,v\>^2]  \right | \\
 &\leq 
 \max_{\th \in \T_\Delta} \left | \frac{1}{n}\sum_{i=1}^n f(\<x_i,\th \>)\<x_i,v\>^2 - \E[f(\< x,\th\>)\<x,v\>^2]  \right | +\frac{\e}{2}.
}

Now, since we have $\|f\|_\infty \leq B$ and for fixed $\th$ and $v$, $i=1,2,...,n$, $f(\<x_i,\th\>)\<x_i,v\>^2$ are i.i.d. random variables. By the Hoeffding's concentration inequality, we have
\eqn{
\P\left(\left | \frac{1}{n}\sum_{i=1}^n f(\<x_i,\th \>)\<x_i,v\>^2 - \E[f(\< x,\th\>)\<x,v\>^2]  \right |>\e/2\right)
\leq 2\exp\left( -\frac{n\e^2}{2B^2K^2}\right).
}
Using Eq.~(\ref{eq::supMaxBound2}) and the above result combined with the union bound, we easily obtain
\eqn{
&\P\left( \sup_{\th \in B_n(R)} \left | \frac{1}{n}\sum_{i=1}^n f(\<x_i,\th \>)\<x_i,v\>^2 - \E[f(\< x,\th\>)\<x,v\>^2]  \right | > \e\right)\\
&\leq
\P\left(\max_{\th \in \T_\Delta} \left | \frac{1}{n}\sum_{i=1}^n f(\<x_i,\th \>)\<x_i,v\>^2 - \E[f(\< x,\th\>)\<x,v\>^2]  \right | > \e/2\right) \\
&\leq
2|\T_\Delta|\exp\left( -\frac{n\e^2}{2B^2K^2}\right),
}
where $\Delta = {\epsilon}/{4LK^{1.5}}$.
Using Lemma \ref{lem::sphere}, we have
$$|\T_\Delta| \leq \left(\frac{R\sqrt{p}}{\Delta}\right)^p = \left(\frac{R\sqrt{p}}{ {\epsilon}/{4LK^{1.5}}}\right)^p. $$

As before, we require that the right  hand side of above inequality gets a decay with rate $\O(p)$. Using Lemma \ref{lem::epsilon} with $a=2B^2K^2p/n$ and $b = 4LRK^{1.5}\sqrt{p}$, we obtain that $\e$ should be
\eqn{
\e = \sqrt{
\frac{B^2K^2p}{n}
\log\left( \frac{16L^2R^2K^3n}{B^2}\right)
}
=\O\left(\sqrt{\frac{p\log(n)}{n}} \right),
}
which completes the proof.
\end{proof}

\subsection{Concentration of sub-gaussian covariates}
In this section, we derive the analogues of the Lemmas \ref{lem::concentrationBounded1} and \ref{lem::concentrationBounded2} for sub-gaussian covariates. Note that the Lemmas in this section are more general in the sense that they also apply to the case where covariates have bounded support. However, the concentration coefficients are different (worse) compared to previous section.

\begin{lemma}\label{lem::concentration3}
Let $x_i\in \reals^p$, for $i=1,...,n$, be i.i.d. sub-gaussian random vectors with mean $0$, covariance matrix $\Sig$ and sub-gaussian norm $K$. Also let $f:\reals \to \reals$ be a uniformly bounded function such that for some $B >0 $, we have $\|f \|_\infty < B$ and $f$ is Lipschitz continuous with constant $L$.  Then, there exists absolute constants $c_1,c_2,c_3$ such that
\eqn{
\P \left( \sup_{\th \in B_n(R)} \left | \frac{1}{n}\sum_{i=1}^n f(\<x_i,\th \>) - \E[f(\< x,\th\>)]  \right | > c_1\sqrt{\frac{ p\log(n)}{n}} \right) \leq c_2e^{-c_3p},
}
where the constants depend only on the eigenvalues of $\Sig$, bound $B$ and radius $R$ and sub-gaussian norm $K$.
\end{lemma}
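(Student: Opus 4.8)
The plan is to mimic the argument used in the proof of Lemma \ref{lem::concentrationBounded1}, but replace the two places where bounded support was exploited—the almost-sure bound $\|x_i\|_2 \le \sqrt K$ used in the Lipschitz/discretization step, and the use of Hoeffding's inequality for the pointwise concentration—by estimates that only use sub-gaussianity. First I would discretize the parameter ball. For $\th,\th' \in B_p(R)$, the Lipschitz property of $f$ gives $|f(\<x,\th\>) - f(\<x,\th'\>)| \le L\|x\|_2\|\th-\th'\|_2$. The new difficulty is that $\|x\|_2$ is no longer bounded; instead I would work on the high-probability event that $\max_{i\le n}\|x_i\|_2 \lesssim K\sqrt{p\log n}$, which holds with probability at least $1-n\cdot e^{-cp}$ (or $1-1/\mathrm{poly}(n)$) by the standard sub-gaussian tail bound for $\|x_i\|_2$ combined with a union bound over $i\in[n]$. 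On this event the discretization error incurred by passing to a $\Delta$-net $\T_\Delta$ over $B_p(R)$ is at most $2\Delta L K\sqrt{p\log n}$, so choosing $\Delta$ proportional to $\e/(LK\sqrt{p\log n})$ controls the net error by $\e/2$, at the cost of only a logarithmic inflation of $|\T_\Delta|$ (by Lemma \ref{lem::sphere}, $|\T_\Delta| \le (R\sqrt p/\Delta)^p$, so $\log|\T_\Delta| = \O(p\log(n))$).

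For the pointwise bound at a fixed $\th\in\T_\Delta$, since $\|f\|_\infty \le B$ the summands $f(\<x_i,\th\>)$ are i.i.d.\ bounded random variables, so Hoeffding still applies exactly as before and gives $\P(|\frac1n\sum_i f(\<x_i,\th\>) - \E f| > \e/2) \le 2\exp(-n\e^2/(2B^2))$. Then a union bound over $\T_\Delta$ yields
\eqn{
\P\!\left( \sup_{\th \in B_p(R)} \Big| \tfrac{1}{n}\textstyle\sum_{i=1}^n f(\<x_i,\th\>) - \E[f(\<x,\th\>)] \Big| > \e\right)
\le 2|\T_\Delta|\,e^{-n\e^2/(2B^2)} + n\,e^{-cp}.
}
To make the first term decay like $e^{-c_3 p}$ I would invoke Lemma \ref{lem::epsilon} with $a = \O(B^2 p /n)$ and $b = \O(LRK\sqrt{p\log n})$, which forces $\e = \O\!\big(\sqrt{p\log(n)/n}\big)$—the same rate as in the bounded case, but now with constants depending on $K$ and the spectrum of $\Sig$ rather than on $\sqrt K$. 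The second, additive term $n e^{-cp}$ is absorbed into $c_2 e^{-c_3 p}$ under the standing assumption that $p$ is sufficiently large relative to $\log n$ (indeed this is exactly where hypotheses such as $n^{0.2}/\log n \gtrsim p$ from Theorem \ref{thm::mainSG} enter).

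The main obstacle—and the only real departure from the bounded-support proof—is handling the unbounded norm $\|x_i\|_2$ in the chaining/discretization step without paying more than a logarithmic factor. The clean way is the event $\{\max_i \|x_i\|_2 \lesssim K\sqrt{p\log n}\}$ as above; a slightly sharper but more laborious alternative would be to bound the expected supremum of the empirical process directly via a generic-chaining / Dudley-type argument using that $x\mapsto f(\<x,\th\>)$ is a sub-gaussian process in $\th$ (so that the increments have sub-gaussian tails with parameter $\propto L\sqrt{\Sig}$), and then apply a Talagrand-type deviation inequality. I would take the first route, since it is self-contained given the tools already assembled in the appendix and loses only the extra $\log n$ that is already present in the target rate.
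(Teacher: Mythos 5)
Your proposal diverges from the paper in exactly the step where it matters, and the divergence creates a genuine gap in the probability accounting. You control the discretization error by conditioning on the event $\{\max_{i\le n}\|x_i\|_2 \lesssim K\sqrt{p\log n}\}$, and you assert this event has probability at least $1-n e^{-cp}$. It does not: the only handle on $\|x_i\|_2$ available here is $\|\,\|x_i\|_2\|_{\sg}\lesssim K\sqrt{p}$ (Eq.~(\ref{eq::subGsubE})), so at threshold $CK\sqrt{p\log n}$ the per-sample tail is $\exp(-cC^2\log n)=n^{-cC^2}$, i.e.\ only polynomially small in $n$, and the union bound leaves an additive $\mathrm{poly}(1/n)$ term. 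That term cannot be absorbed into $c_2e^{-c_3p}$ whenever $p\gg \log n$, which is precisely the regime the lemma must cover (Theorem~\ref{thm::mainSG} allows $p$ as large as $n^{0.2}/\log n$). The condition you invoke to justify absorption is also the wrong way around: $n^{0.2}/\log n\gtrsim p$ is an upper bound on $p$, whereas absorbing a $\mathrm{poly}(1/n)$ term into $e^{-c_3p}$ would need $p\lesssim\log n$. You could repair the truncation route by cutting at level $\asymp Kp$ (then the per-sample failure really is $e^{-cp}$, and the finer net only costs a logarithm), but you would still be left with an additive $ne^{-cp}$, i.e.\ you would be proving the lemma only under the extra hypothesis $\log n\lesssim p$, which the statement does not assume.

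The paper sidesteps this entirely with an $L_1$-bracketing device rather than truncation: the discretization error is written as $\pm\,\e\|x\|_2/(4\E[\|x\|_2])$ and folded into bracket functions $u_\th,l_\th$. Because $\E[\|x\|_2]\asymp\sqrt{p}$ matches the scale of $\|\,\|x\|_2\|_{\sg}$, the brackets are sub-gaussian with norm $O(B+K)$ independent of $n,p$, so the sub-gaussian Hoeffding inequality (Lemma~\ref{lem::bracketCon}) applies directly at each net point and the union bound gives $4|T_\Delta|e^{-Cn\e^2}$ with no extra additive term and no side condition relating $n$ and $p$. Your pointwise Hoeffding step for the bounded summands $f(\<x_i,\th\>)$ and your net-size/Lemma~\ref{lem::epsilon} calibration are fine and mirror the paper; the missing idea is handling the unbounded $\|x\|_2$ in the increment bound without a truncation event — either adopt the bracketing trick or explicitly add the assumption your truncation requires.
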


\begin{proof}[Proof of Lemma \ref{lem::concentration3}]
We start by defining the brackets of the form
\eqn{
l_\th(x) =& f(\< x,\th\>) - \epsilon\frac{\| x\|_2}{4\E\left[\| x\|_2\right]},\\
u_\th(x)=& f(\< x,\th\>) + \epsilon\frac{\| x\|_2}{4\E\left[\| x\|_2\right]}.
}
Observe that the size of bracket $[\l_\th , u_\th]$ is $\epsilon/2$.
Now let $T_\Delta$ be a $\Delta$-net over $B_p(R)$ where we use $\Delta = \epsilon /(4L \E\left[\| x\|_2\right])$. Then $\forall \th \in B_p(R)$, $\exists \th'\in T_\Delta$ such that $f(\< \cdot , \th \>)$ belongs to the bracket $[\l_{\th'} , u_{\th'}]$. This can be seen by writing out the Lipschitz property of the function $f$. That is,
\eqn{
\| f(\<x,\th\>)-f(\<x,\th'\>)\|_2 \leq& L \|x\|_2 \|\th-\th'\|_2,\\
\leq& \Delta L \|x\|_2,
}
where the first inequality follows from Cauchy-Schwartz.
Therefore, we conclude that
\eqn{
\N (\epsilon/2, \F, L_1) \leq |T_\Delta|
}
for the function class $\F = \{f(\<\cdot,\th\>) : \th \in B_p(R) \}$.
We further have $\forall \th \in B_p(R)$, $\exists \th' \in T_\Delta$ such that
\eqn{
 \frac{1}{n}\sum_{i=1}^n f(\<x_i,\th \>) - \E[f(\< x,\th\>)] \leq& \  \frac{1}{n}\sum_{i=1}^n u_{\th'}(x_i) - \E[u_{\th'}(x)] +\frac{\epsilon}{2}, \\
  \frac{1}{n}\sum_{i=1}^n f(\<x_i,\th \>) - \E[f(\< x,\th\>)] \geq& \ \frac{1}{n}\sum_{i=1}^n l_{\th'}(x_i) - \E[l_{\th'}(x)] -\frac{\epsilon}{2}.
}
Using the above inequalities, we have, $\forall \th \in B_p(R)$, $\exists \th' \in T_\Delta$
\eqn{
 &\left\{\left [ \frac{1}{n}\sum_{i=1}^n u_{\th'}(x_i) - \E[u_{\th'}(x)]  \right ] > \e/2 \right\} \cup
 \left\{\left [ -\frac{1}{n}\sum_{i=1}^n l_{\th'}(x_i) + \E[l_{\th'}(x)]  \right ] > \e/2 \right\}  \supset \\
 &\left\{\left | \frac{1}{n}\sum_{i=1}^n f(\<x_i,\th \>) - \E[f(\< x,\th\>)]  \right | > \e \right\}.
} 
By the union bound, we obtain
\eq{\label{eq::boundUnion1}
&\P \left(\max_{\th \in \T_\Delta} \left [ \frac{1}{n}\sum_{i=1}^n u_\th(x_i) - \E[u_\th(x)]  \right ] > \e/2 \right) +
\P \left( \max_{\th \in \T_\Delta}\left [- \frac{1}{n}\sum_{i=1}^n l_\th(x_i) + \E[l_\th(x)]  \right ] > \e/2\right)
\geq\nonumber \\ 
&\P \left( \sup_{\th \in B_p(R)} \left | \frac{1}{n}\sum_{i=1}^n f(\<x_i,\th \>) - \E[f(\< x,\th\>)]  \right | > \e \right).
}
In order to complete the proof, we need concentration inequalities for $u_\th$ and $l_\th$. We state the following lemma.

\begin{lemma}\label{lem::bracketCon}
There exists a constant $C$ depending on the eigenvalues of $\Sig$ and $B$ such that, for each $\th\in B_n(R)$ and for some $0<\epsilon <1$, we have
\eqn{
&\P \left( \left | \frac{1}{n}\sum_{i=1}^n u_\th(x_i) - \E[u_\th(x)]  \right | > \epsilon/2 \right)\leq
2e^{-C n\epsilon^2}, \\
&\P \left( \left | \frac{1}{n}\sum_{i=1}^n l_\th(x_i) - \E[l_\th(x)]  \right | > \epsilon/2\right)\leq
2e^{-C n\epsilon^2},
}
where 
\eqn{
C = \frac{c}{\left ( B+ \frac{\sqrt{2}K}{4\mu/\sqrt{p}}\right)^2}
}
for an absolute constant $c$. 
\end{lemma}
\begin{remark}
Note that $\mu =\E[\| x \|_2]  = \O(\sqrt{p})$ and dividing it by $\sqrt{p}$ would give a constant independent of $n$ and $p$.
\end{remark}
\begin{proof}[Proof of Lemma \ref{lem::bracketCon}]
By the relation between sub-gaussian and sub-exponential norms, we have
\eq{\label{eq::subGsubE}
 \| \|x \|_2 \|^2_\sg \leq  \| \|x \|^2_2 \|_\se \leq& \sum_{i=1}^p \|x_i^2\|_\se, \\ \nonumber
 \leq &2\sum_{i=1}^p \|x_i\|^2_\sg, \\\nonumber
 \leq& 2K^2 p.
}
Therefore $\| x \|_2 - \E[\| x \|_2]$ is a centered sub-gaussian random variable with sub-gaussian norm bounded above by $2K\sqrt{2p}$. We have,
\eqn{
\E[\| x \|_2]  = \mu.
}
Note that $\mu$ is actually of order $\sqrt{p}$. Assuming that the left hand side of the above equality is equal to $\sqrt{p}K'$ for some constant $K'>0$, we can conclude that
the random variable $u_\th(x)=f(\< x,\th\>) + \epsilon\frac{\| x\|_2}{4\E\left[\| x\|_2\right]}$ is also sub-gaussian with
\eqn{
\|u_\th(x)\|_\sg \leq& B +  \frac{\epsilon}{4\E\left[\| x\|_2\right]} \| \|x\|_2\|_\sg \\
& \leq B +  \frac{\epsilon}{4\sqrt{p}K'} K\sqrt{2p} \\
& \leq B +  C'
}
where $C'=\sqrt{2}K/4K'$ is a constant and we also assumed $\epsilon <1$.
Now, define the function
\eqn{
g_\th(x) = u_\th(x) - \E[u_\th(x)].
}
Note that $g_\th(x)$ is a centered sub-gaussian random variable with sub-gaussian norm
\eqn{
\|g_\th(x)\|_\sg \leq 2B + 2C'.
}
Then, by the Hoeffding-type inequality for the sub-gaussian random variables, we obtain
\eqn{
\P \left ( \left |\frac{1}{n}\sum_{i=1}^n g_\th(x_i)\right | > \epsilon/2\right)  \leq &
2e^{-c n\epsilon^2/(B+C')^2}
}
where $c$ is an absolute constant. 
The same argument follows for $l_\th(x)$.
\end{proof}

Using the above lemma with the union bound over the set $T_\Delta$, we can write
\eqn{
&\P \left( \sup_{\th \in B_n(R)} \left | \frac{1}{n}\sum_{i=1}^n f(\<x_i,\th \>) - \E[f(\< x,\th\>)]  \right | > \epsilon \right) \leq 4 |T_\Delta| e^{-C n\epsilon^2} .
}
Since we can also write, by Lemma \ref{lem::sphere}
\eqn{
|T_\Delta|\leq \left ( \frac{R\sqrt{p}}{\Delta}\right)^p
\leq& \left ( \frac{4RL\E[\|x\|_2]\sqrt{p}}{\epsilon}\right)^p,\\
\leq& \left ( \frac{4\sqrt{2}RLKp}{\epsilon}\right)^p,\\
}
and we observe that, for the constant $c'=4\sqrt{2}RLK$,
\eqn{
\P \left( \sup_{\th \in B_n(R)} \left | \frac{1}{n}\sum_{i=1}^n f(\<x_i,\th \>) - \E[f(\< x_1,\th\>)]  \right | > \epsilon \right)
&\leq 4\left ( \frac{4\sqrt{2}RLKp}{\epsilon}\right)^p e^{-C n\epsilon^2}, \\
&= 4 \exp\left\{p\log(c'p/\epsilon)-C n\epsilon^2\right\}.
}
We will obtain an exponential decay of order $p$ on the right hand side. For some constant $h$ depending on $n$ and $p$, if we choose $\epsilon=hp$, we need
\eqn{
h^2 \geq \frac{1}{Cnp} \log(c'/h).
}
By the Lemma \ref{lem::epsilon}, choosing $h^2 = \log(2c'^2Cnp)/(2Cnp)$, we satisfy the above requirement. Note that for $n$ large enough, the condition of the lemma is easily satisfied. Hence, for 
\eqn{
\epsilon^2 = \frac{ p\log(2c'^2Cnp)}{2Cn} = \O\left(\frac{p\log(n)}{n}\right), 
}
we obtain that there exists constants $c_1, c_2,c_3$ such that
\eqn{
\P \left( \sup_{\th \in B_n(R)} \left | \frac{1}{n}\sum_{i=1}^n f(\<x_i,\th \>) - \E[f(\< x_1,\th\>)]  \right | > c_1\sqrt{\frac{ p\log(n)}{n}} \right) \leq c_2e^{-c_3p},
}
where 
\eqn{
c_1 =& \frac{3\left ( B+ \frac{\sqrt{2}K}{4\sqrt{\trace(\Sig)/p - 16K^2}}\right)^2}{2c},\\
c_2 =& 4,\\
c_3 =&\frac{1}{2}\log(7) \leq \frac{1}{2} \log(\log(64R^2L^2K^2C) + 6\log(p)) .
}
when $p > e$ and $64R^2L^2K^2C>e$.

\end{proof}

In the following, we state the concentration results on the unbounded functions of the form 
$$x\to f(\<x,\th \>)\<x,v\>^2.$$ Functions of this type form the summands of the Hessian matrix in GLMs.
\begin{lemma}\label{lem::concentration4}
Let $x_i$, for $i=1,...,n$, be i.i.d sub-gaussian random variables with mean 0, covariance matrix $\Sig$ and sub-gaussian norm $K$.
Also let $f:\reals \to \reals$ be a uniformly bounded function such that for some $B >0 $, we have $\|f \|_\infty < B$ and $f$ is Lipschitz continuous with constant $L$. Further, let $v\in \reals^p$ such that $\|v\|_2=1$. Then, for $n,p$ sufficiently large satisfying $$n^{0.2}/\log(n)\gtrsim p$$, there exists absolute constants $c_1,c_2$ depending on $L,B$, $R$ and the eigenvalues of $\Sig$ such that, we have
\eqn{
\P \left( \sup_{\th \in B_p(R)} \left | \frac{1}{n}\sum_{i=1}^n f(\<x_i,\th \>)\<x_i,v\>^2 - \E[f(\< x,\th\>)\<x,v\>^2]  \right | > c_1\sqrt{\frac{p}{n^{0.2}} \log\left(n \right)} \right) \leq  c_2e^{-p} .\\
}
\end{lemma}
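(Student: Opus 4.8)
The plan is to follow the template of the proofs of Lemmas~\ref{lem::concentrationBounded2} and~\ref{lem::concentration3}, with one extra truncation step that copes with the fact that the summands $g_\th(x):=f(\<x,\th\>)\<x,v\>^2$ are now \emph{unbounded}. First I would pass to the good event $\mathcal A=\{\max_{i\le n}\<x_i,v\>^2\le M\}$ for a truncation level $M$ to be chosen polynomial in $n$ (it will turn out that $M$ of order $n^{0.4}$ gives the stated rate). Since $\<x_i,v\>$ is sub-gaussian with norm $\le K$, the variable $\<x_i,v\>^2$ is sub-exponential with $\psi_1$-norm $O(K^2)$, so $\P(\mathcal A^c)\le n\,e^{-cM/K^2}$, which is $\le e^{-p}$ as soon as $M\gtrsim K^2(p+\log n)$. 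On $\mathcal A$ the summands are bounded, $|g_\th(x_i)|\le BM$, and replacing $g_\th$ by its truncated version shifts the mean by at most $B\,\E[\<x,v\>^2\ind\{\<x,v\>^2>M\}]=O(e^{-cM/K^2})$, negligible against the target deviation. This reduces the problem to a bounded-summand concentration, exactly as in Lemma~\ref{lem::concentrationBounded2}, but now with the much larger bound $BM$ in place of $BK$.

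Next, as before, I would discretize $\sup_{\th\in B_p(R)}$ over a $\Delta$-net $T_\Delta$: by the Lipschitz property, $|g_\th(x)-g_{\th'}(x)|\le L\|x\|_2\<x,v\>^2\|\th-\th'\|_2$, and on $\mathcal A$ (intersected with a further high-probability event on which $\max_i\|x_i\|_2=O(\sqrt{p\log n})$, say, obtained from the bound $\|\,\|x\|_2\,\|_{\psi_2}=O(K\sqrt p)$ of the type recorded in Eq.~(\ref{eq::subGsubE})) this envelope is bounded, forcing $\Delta$ of order $\epsilon/(L\cdot\mathrm{poly}(p,\log n)\cdot M)$ and hence $|T_\Delta|\le(R\sqrt p/\Delta)^p=(\mathrm{poly}(R,L,p,\log n)/\epsilon)^{cp}$ by the covering bound (Lemma~\ref{lem::sphere}). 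For a fixed net point, Hoeffding's inequality (as used for $f(\<x_i,\th\>)\<x_i,v\>^2$ in Lemma~\ref{lem::concentrationBounded2}, only now with bound $BM$) gives a deviation probability $2\exp(-cn\epsilon^2/(BM)^2)$, and a union bound over $T_\Delta$, together with the $\epsilon/2$ Lipschitz slack from the net (cf.\ Eq.~(\ref{eq::supMaxBound2})), yields
\[
\P\!\left(\sup_{\th\in B_p(R)}\Big|\tfrac1n\sum_{i=1}^n g_\th(x_i)-\E g_\th\Big|>\epsilon\right)\le 2|T_\Delta|\,e^{-cn\epsilon^2/(BM)^2}+\P(\mathcal A^c).
\]
Choosing $M\asymp n^{0.4}$ makes the exponent $c\,\epsilon^2 n^{0.2}/B^2$, and requiring it to dominate $\log|T_\Delta|+p=O(p\log n)$ --- which is exactly where the hypothesis $n^{0.2}/\log(n)\gtrsim p$ enters --- gives, via the elementary inequality lemma (Lemma~\ref{lem::epsilon}), $\epsilon=O\!\big(\sqrt{(p/n^{0.2})\log n}\big)$ with total failure probability $c_2e^{-p}$, which is the claim.

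The main obstacle is precisely this truncation trade-off: unlike the bounded-support case (Lemma~\ref{lem::concentrationBounded2}), where $\<x_i,v\>^2\le K$ a.s.\ and one keeps the full sample size $n$ in the rate, here the square of a sub-gaussian is only sub-exponential, so to remain within a Hoeffding-type argument one must truncate at a polynomial level $M$; the value of $M$ is pinned down by balancing the $e^{-cM}$ truncation bias/tail against the $n/M^2$ Hoeffding exponent, and this is what degrades the effective sample size from $n$ to $n^{0.2}$ and forces $p\lesssim n^{0.2}/\log n$. A secondary but routine nuisance is verifying the high-probability bounds on $\max_i\|x_i\|_2$ and on the envelope average $\tfrac1n\sum_i\|x_i\|_2\<x_i,v\>^2$; both follow from the sub-gaussian moment estimates already in play and the $r$-spiked identity $\trace(\Sig)=O(p)$, and only affect logarithmic factors.
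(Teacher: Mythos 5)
Your proposal is correct and delivers the stated rate, but it implements the key reduction differently from the paper, so a comparison is worth recording. The paper controls the supremum over $\theta$ by $L_1$-bracketing: it forms brackets $u_\theta,l_\theta$ by adding the random envelope $\epsilon\|x\|_2^3/(4\E[\|x\|_2^3])$ to the summand, and then proves concentration of the (still unbounded) bracket functions themselves by truncating them at level $n^{0.4}$, applying Hoeffding to the truncated part, and explicitly controlling the truncated-mean bias $\nu$ via Lemma~\ref{lem::darthvader} and the integral estimate Lemma~\ref{lem::integralBound} (this is Lemma~\ref{lem::bracketCon2}). You instead truncate the data: you condition once on the event $\{\max_{i\le n}\langle x_i,v\rangle^2\le M\}$ with $M\asymp n^{0.4}$ (plus a high-probability bound on $\max_i\|x_i\|_2$ so that the Lipschitz slack over the net becomes deterministic), pay $\P(\text{complement})\le n e^{-cM/K^2}\le e^{-p}$ once, and handle the bias by a single sub-exponential tail expectation, after which a plain net-plus-Hoeffding argument as in Lemma~\ref{lem::concentrationBounded2} applies with bound $BM$. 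Both proofs hinge on the identical quantitative balance --- net of size $e^{O(p\log n)}$, Hoeffding exponent of order $n\epsilon^2/M^2$, tail decay $e^{-cM/K^2}$, optimized at $M\asymp n^{0.4}$ --- which is precisely what degrades the effective sample size to $n^{0.2}$ and forces $p\lesssim n^{0.2}/\log n$, so the resulting rates and failure probabilities coincide. Your route is more elementary (no bracketing numbers, no truncated-mean integral estimates), at the cost of the extra conditioning event for $\max_i\|x_i\|_2$, which the paper's envelope brackets absorb automatically. Two small slips, both harmless because they enter only through logarithms or exponentially small terms: the truncation bias is $O\big((M+K^2)e^{-cM/K^2}\big)$ rather than $O(e^{-cM/K^2})$, and in general $\max_i\|x_i\|_2=O\big(K\sqrt{p(p+\log n)}\big)$ with probability $1-e^{-p}$, not $O(\sqrt{p\log n})$.
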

\begin{proof}[Proof of Lemma \ref{lem::concentration4}]
We define the brackets of the form
\eq{\label{eq::brackets}
l_\th(x) =& f(\< x,\th\>)\<x,v \>^2 - \e\frac{\| x\|^3_2}{4\E\left[\| x\|^3_2\right]},\nonumber\\
u_\th(x)=& f(\< x,\th\>)\<x,v \>^2 + \e\frac{\| x\|^3_2}{4\E\left[\| x\|^3_2\right]},
}
and we observe that the bracket $[\l_\th , u_\th]$ has size $\e/2$ in $L_1$, that is,
\eqn{
\E\left [|u_\th(x)-l_\th(x)| \right] = \e/2.
}

Next, for the following constant $$\Delta = \frac{\e}{4L \E\left[\| x\|^3_2\right]},$$ we define a $\Delta$-net over $B_p(R)$ and call it $\T_\Delta$. Then, $\forall \th \in B_p(R)$, $\exists \th'\in \T_\Delta$ such that $f(\< \cdot , \th \>)\<\cdot,v \>^2$ belongs to the bracket $[\l_{\th'} , u_{\th'}]$. This can be seen by writing the Lipschitz continuity of the function $f$, i.e.,
\eqn{
\left| f(\<x,\th\>)\<x,v \>^2-f(\<x,\th'\>)\<x,v \>^2\right| =& \<x,v \>^2\left| \left\{f(\<x,\th\>)-f(\<x,\th'\>)\right\}\right|, \\
 \leq& L \|x\|^2_2\  \|v\|_2^2 \   \left|\<x,\th-\th'\>\right|,\\
 \leq& L \|x\|^3_2 \|\th-\th'\|_2,\\
\leq& \Delta L \|x\|^3_2,
}
where we used Cauchy-Schwartz to obtain the above inequalities.
Hence, we may conclude that for the bracketing functions in \ref{eq::brackets}, bracketing number of the function class $$\F = \{  f(\<\cdot,\th\>)\<\cdot,v \>^2 : \th \in B_p(R)\}$$ is bounded above by the covering number of the ball of radius $R$ for the given scale $\Delta = \e /(4L \E\left[\| x\|^3_2\right])$, i.e.,
\eqn{
\N (\e/2, \F, L_1) \leq |\T_\Delta|.
}

Next, we will upper bound the target probability using the bracketing functions $u_\th, l_\th$. We have $\forall \th \in B_p(R)$, $\exists \th' \in \T_\Delta$ such that
\eqn{
 \frac{1}{n}\sum_{i=1}^n f(\<x_i,\th \>)\<x_i,v \>^2 - \E[f(\< x,\th\>)\<x,v \>^2] \leq& \  \frac{1}{n}\sum_{i=1}^n u_{\th'}(x_i) - \E[u_{\th'}(x)] +\frac{\e}{2}, \\
  \frac{1}{n}\sum_{i=1}^n f(\<x_i,\th \>)\<x_i,v \>^2 - \E[f(\< x,\th\>)\<x,v \>^2] \geq& \ \frac{1}{n}\sum_{i=1}^n l_{\th'}(x_i) - \E[l_{\th'}(x)] -\frac{\e}{2}.
}
Using the above inequalities, $\forall \th$, $\exists \th'$, we can write
\eqn{
 &\left\{\left [ \frac{1}{n}\sum_{i=1}^n u_{\th'}(x_i) - \E[u_{\th'}(x)]  \right ] > \e/2 \right\} \cup
 \left\{\left [ -\frac{1}{n}\sum_{i=1}^n l_{\th'}(x_i) + \E[l_{\th'}(x)]  \right ] > \e/2 \right\}  \supset \\
 &\left\{\left | \frac{1}{n}\sum_{i=1}^n f(\<x_i,\th \>)\<x_i,v \>^2 - \E[f(\< x,\th\>)\<x,v \>^2]  \right | > \e \right\}.
}
Hence, by the union bound, we obtain
\eq{\label{eq::boundUnion1}
&\P \left(\max_{\th \in \T_\Delta} \left [ \frac{1}{n}\sum_{i=1}^n u_\th(x_i) - \E[u_\th(x)]  \right ] > \e/2 \right) +
\P \left( \max_{\th \in \T_\Delta}\left [- \frac{1}{n}\sum_{i=1}^n l_\th(x_i) + \E[l_\th(x)]  \right ] > \e/2\right)
\geq\nonumber \\ 
&\P \left( \sup_{\th \in B_p(R)} \left | \frac{1}{n}\sum_{i=1}^n f(\<x_i,\th \>)\<x_i,v\>^2 - \E[f(\< x,\th\>)\<x,v\>^2]  \right | > \e \right).
}
In order to complete the proof, we need one-sided concentration inequalities for $u_\th$ and $l_\th$. Handling these functions is somewhat tedious since $\|x\|_2^3$ terms do not concentrate nicely. We state the following lemma.

\begin{lemma}\label{lem::bracketCon2}
For a given $\xx,\e >0$, and $n$ sufficiently large such that, $\nu(n^\xx,p,\e,B,K,\Sig) < \e/4$ where
\eqn{
\nu(n^\xx,p,\e,B,K,\Sig)
\eqqcolon & \left (n^\xx+\frac{6B^2K^2 p}{c}  \right) \exp \left(-c\frac{n^\xx}{6B^2K^2 p} \right)\\
&+\left \{n^\xx+  \frac{4K^2p}{c\trace{(\Sig)}}n^{\xx/3}\e^{2/3}
  + \frac{6K^4p^2}{c^2\trace(\Sig)^2 }\e^{4/3}n^{-\xx/3}
\right \}\exp\left( -c\frac{\trace(\Sig)(n^\xx /\e)^{2 /3}}{2K^2 p}\right)
.
}
Then, there exists constants $c',c'',c'''$ depending on the eigenvalues of $\Sig$, $B$ and $K$ such that $\forall \th$, we have,
\eqn{
&\P \left(  \frac{1}{n}\sum_{i=1}^n u_\th(x_i) - \E[u_\th(x)]   > \e/2 \right)\leq
\exp \left(-c'n^\xx /p \right)+
\exp\left( -c''n^{2\xx /3}\e^{-2/3}\right)
+\exp\left( -c''' n^{1-2\xx}\e^2\right) ,  \\
&\P \left(  -\frac{1}{n}\sum_{i=1}^n l_\th(x_i) + \E[l_\th(x)]   > \e/2\right)\leq
\exp \left(-c'n^\xx /p \right)+
\exp\left( -c''n^{2\xx /3}\e^{-2/3}\right)
+\exp\left( -c''' n^{1-2\xx}\e^2\right) .
}
\end{lemma}
\begin{proof}[Proof of Lemma \ref{lem::bracketCon2}]
For the sake of simplicity, we define the functions
\eqn{
\tu_\th (w) =& \ u_\th(w) - \E[u_\th(x)],\\
\tl_\th (w) =& \ l_\th(w) - \E[l_\th(x)].
}
We will show the concentration of the upper bracket, $\tu$. Proof for the lower bracket $\tl$, will follow from similar steps.
We write,
\eq{\label{eq::bracketBounds}
\P \left(  \frac{1}{n}\sum_{i=1}^n \tu_\th (x_i)   > \e/2\right) \leq &
\P \left( \frac{1}{n}\sum_{i=1}^n \tu_\th (x_i)  > \e/2 , \ \max_{1\leq i \leq n} |\tu_\th(x_i)| < n^{\xx}\right) \nonumber\\
&+ \P \left( \max_{1\leq i \leq n} |\tu_\th(x_i)| \geq n^{\xx}\right).
}

We need to bound the right hand side of the above equation. For the second term, since $\tu_\th(x_i)$'s are i.i.d. centered random variables,  we have
\eqn{
\P \left( \max_{1\leq i \leq n} |\tu_\th(x_i)| \geq n^{\xx}\right) =& 1-\P \left( \max_{1\leq i \leq n} |\tu_\th(x_i)| < n^{\xx}\right),\\
=& 1-\P \left(|\tu_\th(x)| < n^{\xx}\right)^n,\\
=& 1-\left(1-\P \left(|\tu_\th(x)| \geq n^{\xx}\right)\right)^n,\\
\leq& n\P \left(|\tu_\th(x)| \geq n^{\xx}\right).\\
}
Also, note that
\eqn{
|\tu_\th (x)| \leq& B\|x\|_2^2 + \e\frac{\| x\|^3_2}{4\E\left[\| x\|^3_2\right]} + \E[u_\th(x)], \\
\leq& B\|x\|_2^2 + \e\frac{\| x\|^3_2}{4\E\left[\| x\|^3_2\right]} + B\trace(\Sig) + \e/4.
}
Therefore, if $t>3B\trace(\Sig)$ and for $\epsilon$ small, we can write
\eq{\label{eq::absConstant1}
\left\{|\tu_\th(x)| > t \right\} \subset \left\{ B\|x\|_2^2 > t/3\right\} \cup
\left\{ \e\frac{\| x\|^3_2}{4\E\left[\| x\|^3_2\right]} > t/3\right\}.
}
Since $x$ is a sub-gaussian random variable with $\|x\|_\sg=K$, we have
\eqn{
K=\sup_{w\in S^{p-1}}\|\<w,x\>\|_\sg 
=  \|x\|_\sg .
}
Using this and the relation between sub-gaussian and sub-exponential norms as in Eq.~(\ref{eq::subGsubE}), we have
$
 \| \|x \|_2 \|^2_\sg 
 \leq 2K^2 p.
$
This provides the following tail bound for $\| x \|_2$,
\eq{\label{eq::absConstant}
\P \left (\| x \|_2 > s \right) \leq \exp\left( -\frac{cs^2}{2pK^2}\right),
}
where $c$ is an absolute constant. 
Using the above tail bound, we can write,
\eqn{
\P\left(\|x\|_2^2 > \frac{1}{3B}t\right) \leq 2 \exp \left(-c\frac{t}{6BK^2 p} \right).
}

For the next term in Eq. (\ref{eq::absConstant1}), we need a lower bound for $\E\left[\| x\|^3_2\right]$. We use a modified version of the H{\"o}lder's inequality and obtain
\eqn{
&\E\left[\|x\|^3_2\right] \geq \E\left[\|x\|^2_2\right]^{3/2}
=\trace(\Sig)^{3/2}.
}
Using the above inequality, we can write
\eqn{
\P\left( \e\frac{\| x\|^3_2}{4\E\left[\| x\|^3_2\right]} > t/3\right)
\leq&\P\left( \| x\|^3_2 > \frac{4}{3\e} \trace(\Sig)^{3/2}t\right),\\
=&\P\left( \| x\|_2 > \left(\frac{4t}{3\e}\right)^{1/3}\trace(\Sig)^{1/2}\right),\\
\leq& \exp\left( -c\frac{\trace(\Sig)(t/\e)^{2 /3}}{2K^2 p}\right).
}
where $c$ is the same absolute constant as in Eq. (\ref{eq::absConstant}). 

Now for $\xx >0 $ such that $t=n^\xx > 3B\trace(\Sig)$ (we will justify this assumption for a particular choice of $\xx$ later), 
we combine the above results,
\eq{\label{eq::boundUtilde}
\P\left(|\tu_\th(x)| > t \right) \leq 
2 \exp \left(-c\frac{t}{6BK^2 p} \right)
+2\exp\left( -c\frac{\trace(\Sig)(t/\e)^{2 /3}}{2K^2 p}\right).
}

Next, we focus on the first term in Eq.(\ref{eq::bracketBounds}). Let $\mu = \E[\tu_\th (x)\ind_{\{|\tu_\th(x)|<n^{\xx}\}}]$, and write
\eqn{
\P \left(  \frac{1}{n}\sum_{i=1}^n \tu_\th (x_i)  > \e/2 \ ; \ \max_{1\leq i \leq n} |\tu_\th(x_i)| < n^{\xx}\right)
\leq& \P \left(  \frac{1}{n}\sum_{i=1}^n \tu_\th (x_i)\ind_{\{|\tu_\th(x_i)|<n^{\xx}\}}   > \e/2 \right), \\
=& \P \left(  \frac{1}{n}\sum_{i=1}^n \tu_\th (x_i)\ind_{\{|\tu_\th(x_i)|<n^{\xx}\}} - \mu > \e/2 - \mu \right),\\
\leq& \exp\left\{ - \frac{n^{1-2\xx}}{2}\left(\frac{\e}{2}-\mu\right)^2\right\},
}
where we used the Hoeffding's concentration inequality for the bounded random variables. Further, note that
\eqn{
0=\E[\tu_\th(x)] = \mu + \E\left[ \tu_\th(x)\ind_{\{|\tu_\th(x)|>n^\xx\}}\right].
}
By Lemma \ref{lem::darthvader}, we can write
\eqn{
|\mu|= \left|\E\left[ \tu_\th(x)\ind_{\{|\tu_\th(x)|>n^\xx\}}\right]\right| \leq& 
n^\xx \P(|\tu_\th(x)|>n^\xx) + \int^\infty_{n^\xx}\P(|\tu_\th(x)|>t)dt.
}
The first term on the right hand side can be easily bounded by using Eq.(\ref{eq::boundUtilde}), i.e.,
\eqn{
n^\xx \P(|\tu_\th(x)|>n^\xx) \leq 
n^\xx \exp \left(-c\frac{n^\xx }{6BK^2 p} \right)
+n^\xx\exp\left( -c\frac{\trace(\Sig)(n^\xx /\e)^{2 /3}}{2K^2 p}\right).
}
For the second term, using Eq.(\ref{eq::boundUtilde}) once again, we obtain

\eqn{
\int^\infty_{n^\xx}\P(|\tu_\th(x)|>t)dt \leq &
\int^\infty_{n^\xx}  \exp \left(-c't /p \right)dt+\int^\infty_{n^\xx}\exp\left( - c\frac{\trace(\Sig)(t/\e)^{2 /3}}{4K^2 p}\right)dt,\\
=&\frac{6BK^2 p}{c}  \exp \left(-c\frac{n^\xx}{6BK^2 p} \right)
+\int^\infty_{n^\xx}\exp\left( -c\frac{\trace(\Sig)(t/\e)^{2 /3}}{2K^2 p}\right)dt.
}
Next, we apply Lemma \ref{lem::integralBound} to bound the second term on the right hand side. That is, we have
\eqn{
\int^\infty_{n^\xx}\exp\left( -c''(t/\e)^{2 /3}\right)dt 
\leq
\left \{ \frac{3K^2p}{c\trace{(\Sig)}}n^{\xx/3}\e^{2/3}
  + \frac{6K^4p^2}{c^2\trace(\Sig)^2 }\e^{4/3}n^{-\xx/3}
\right \}\exp\left( -c\frac{\trace(\Sig)(n^\xx /\e)^{2 /3}}{2K^2 p}\right).
}
Combining the above results, we can write
\eqn{
|\mu| \leq &
\left (n^\xx+\frac{6B^2K^2 p}{c}  \right) \exp \left(-c\frac{n^\xx}{6B^2K^2 p} \right)\\
&+\left \{n^\xx+  \frac{4K^2p}{c\trace{(\Sig)}}n^{\xx/3}\e^{2/3}
  + \frac{6K^4p^2}{c^2\trace(\Sig)^2 }\e^{4/3}n^{-\xx/3}
\right \}\exp\left( -c\frac{\trace(\Sig)(n^\xx /\e)^{2 /3}}{2K^2 p}\right),\\
\eqqcolon & \nu(n^\xx,p,\e,B,K,\Sig).
}
Notice that, the upper bound on $|\mu|$, namely $\nu(n^\xx,p,\e,B,K,\Sig)$, is close to 0 when $n$ is large. This is because exponentially decaying functions will dominate the coefficients. We assume that $n$ is sufficiently large that the upper bound for $|\mu|$ is less than $\e/4$. In particular, we will choose $\xx = 0.4$ later in the proof.

Applying this bounds on Eq.(\ref{eq::bracketBounds}), we obtain
\eqn{
\P \left(  \frac{1}{n}\sum_{i=1}^n \tu_\th (x_i)   > \e/2\right) \leq &
 \exp \left(-c\frac{n^\xx}{6B^2K^2 p} \right)
+\exp\left( -c\frac{\trace(\Sig)(n^\xx/\e)^{2 /3}}{2K^2 p}\right)
+\exp\left( - \frac{n^{1-2\xx}}{8}\e^2\right), \\
= & \exp \left(-c'n^\xx /p \right)+
\exp\left( -c''n^{2\xx /3}\e^{-2/3}\right)
+\exp\left( -c''' n^{1-2\xx}\e^2\right),
}
where
\eqn{
c'=& \frac{c}{6B^2K^2},\\
c''=& \frac{c\trace(\Sig)/p}{2K^2}\geq \frac{c\sigma^2}{2K^2},\\
c'''=&\frac{1}{8} .
}
Hence, the proof is completed for the upper bracket.

The proof for the lower brackets $l_\th(x)$ follows from exactly the same steps and omitted here. 
\end{proof}
Applying the above lemma on Eq.(\ref{eq::boundUnion1}), for $\xx>0$, we obtain
\eq{\label{eq::decayExp}
\P \left( \sup_{\th \in B_n(R)} \left | \frac{1}{n}\sum_{i=1}^n f(\<x_i,\th \>)\<x_i,v\>^2 - \E[f(\< x,\th\>)\<x,v\>^2]  \right | > \e \right)& 
\leq  2 |T_\Delta|\exp \left(-c'n^\xx /p \right)\nonumber\\
&+
2|T_\Delta|\exp\left( -c''n^{2\xx /3}\e^{-2/3}\right)\nonumber\\
&+2|T_\Delta|\exp\left( -c''' n^{1-2\xx}\e^2\right).
}
Observe that we can write, by Lemma \ref{lem::sphere}
\eqn{
|T_\Delta|\leq& \left ( \frac{R\sqrt{p}}{\Delta}\right)^p
= \left ( \frac{4\sqrt{p}RL\E[\|x\|_2^3]}{\e}\right)^p.
}
Also, recall that $\|x\|_2$ was a sub-gaussian random variable with $\|\|x\|_2\|_\sg \leq K\sqrt{2p}$. Using the definition of sub-gaussian norm, we have
\eqn{
\frac{1}{\sqrt{3}}\,\E[\| x \|_2^3]^{1/3} \leq &\|\|x\|_2\|_\sg  \leq \sqrt{2p}\,K,\\
\implies& \E[\| x \|_2^3] \leq 15K^3p^{3/2}.
}
Therefore,  we have $\E[\| x \|_2^3] = \O (p^{3/2})$ (recall that we had a lower bound of the same order). We define a constant $K'$, and as $\epsilon$ is small, we have
\eqn{
|T_\Delta|\leq
& \left ( \frac{60RLK^3 p^{2}}{\e}\right)^p\\
 =&\left ( \frac{K' p^{2}}{\e}\right)^p,
}
where we let $K' = 60RLK^3$. We will show that each term on the right hand side of 
Eq.(\ref{eq::decayExp}) decays exponentially with a rate of order $p$. For the first term, for $s>0$, we write

\eq{\label{eq::expDecay1}
 |T_\Delta|\exp \left(-c'n^\xx /p \right) = &\exp \left(-c'n^\xx /p + p\log(K') + 2 p\log(p) + p\log(\e^{-1}) \right),
 \nonumber
 \\
  \leq &\exp \left(-c'n^\xx /p +  2 p\log(K'p/\e) \right).
}

Similarly for the second and third terms, we write
\eq{
|T_\Delta|\exp\left( -c''n^{2\xx /3}\e^{-2/3}\right) \leq& 
\exp\left( -c''n^{2\xx /3}\e^{-2/3}+ 2 p\log(K'p/\e) \right),\label{eq::expDecay2}\\
|T_\Delta|\exp\left( -c''' n^{1-2\xx}\e^2\right) \leq &
\exp\left( -c''' n^{1-2\xx}\e^2+ 2 p\log(K'p/\e) \right).
\nonumber
}
We will seek values for $\e$ and $\xx$ to obtain an exponential decay with rate $p$ on the right sides of Eq.(\ref{eq::expDecay1},\ref{eq::expDecay2}). That is, we need
\eq{\label{eq::expDecay3}
c'n^\xx /p \geq &2 p\log(K''p/\e),\\
c''n^{2\xx /3} \geq &2 p\log(K''p/\e)\e^{2/3},\nonumber\\
c''' n^{1-2\xx}\e^2 \geq &2 p\log(K''p/\e),\nonumber
}
where $K'' = e K'$.


We apply Lemma \ref{lem::epsilon} for the last inequality in Eq. (\ref{eq::expDecay3}). That is,
\eqn{
\e^2 =& \frac{p}{c'''n^{1-2\alpha}} \log\left(c'''{K''} n^{1-2\alpha} \right),\\
=&\O\left( \frac{p}{n^{1-2\alpha}} \log\left(n \right)  \right).
}
where we assume that 
\eqn{
n^{1-2\alpha}p > \frac{e}{c''' {K''}^2}.
}
The above statement holds for $\alpha<1/2$ and $n$ large enough, as the right hand side is an absolute constant. 

In the following, we choose $\alpha = 0.4$ and use the assumption that
\eqn{
n^{0.2}/\log(n)\gtrsim p,
} 
which provides $\e < 1$. Note that this choice of $\xx$ also justifies the assumption used to derive Eq.~(\ref{eq::boundUtilde}).
One can easily check that $\xx=0.4$ implies that the first and the second statements in Eq. (\ref{eq::expDecay3}) are satisfied.

It remains to check whether $\nu(n^\xx,p,\e,B,K,\Sig) < \e/4$ (in Lemma \ref{lem::bracketCon2}) for this particular choice of $\xx$ and $\e$. It suffices to consider only the dominant terms in the definition of $\nu$. 
We use the assumption on $n,p$ and write
\eqn{
\nu(n^{0.2},p,\e,B,K,\Sig) \lesssim&
n^{0.2} \exp\left (-\frac{cn^{0.4}}{6B^2K^2p} \right)+
n^{0.2} \exp\left (-\frac{c\trace(\Sig)/p\ n^{0.8/3}}{2K^2} \right),\\
\lesssim&
n^{0.2} \exp\left (-\frac{c}{6B^2K^2}n^{0.2} \right)+
n^{0.2} \exp\left (-\frac{c\sigma^2}{2K^2}n^{0.8/3} \right).\\
}
For $n$ sufficiently large, this quantity is always smaller than $\e/4$.
Hence, for some constant $c_1,c_2$, we obtain
\eqn{
\P \left( \sup_{\th \in B_p(R)} \left | \frac{1}{n}\sum_{i=1}^n f(\<x_i,\th \>)\<x_i,v\>^2 - \E[f(\< x,\th\>)\<x,v\>^2]  \right | > c_1\sqrt{\frac{p}{n^{0.2}} \log\left(n \right) }\right) \leq  c_2e^{-p} ,
}
where 
\eqn{
c_1 &= 16,\\
c_2 &= 6.
}
\end{proof}

\section{Proof of main theorem}
We will provide the proofs of Theorems \ref{thm::mainBounded} and \ref{thm::mainSG} in parallel. Matrix concentration results in this section are mostly based on the covering net argument provided in \cite{vershynin2010introduction}. 
Similar results for matrix forms can also be obtained through different techniques such as \emph{chaining} as well (See i.e. \cite{dicker2015flexible}).
On the set $\mathcal{E}$, we write,
\eqn{
 \hth^{t} -\th_*- \gamma \Q^t \grad_\th\l(\hth^{t})
=& \ \hth^{t} -\th_*- \gamma \Q^t \int_0^1\gradS_\th \l(\th_* + \xi(\hth^{t}-\th_*))d\xi (\hth^{t} -\th_*), \\
=& \ \left( I - \gamma \Q^t \int_0^1\gradS_\th \l(\th_* + \xi(\hth^{t}-\th_*))d\xi\right) (\hth^{t} -\th_*)\,.
}
Since the projection $\proj_{B_p(R)}$ in step 3 of \ALG can only decrease the $\ell_2$ distance, we obtain
\eq{\label{eq::ineq-govern}
\| \hth^{t+1} -\th_* \|_2 \leq&
 \left\| I - \gamma \Q^t \int_0^1\gradS_\th \l (\th_* + \xi(\hth^{t}-\th_*))d\xi\right\|_2 \|\hth^{t} -\th_*\|_2 .
}
%

The governing term (with $\gamma=1$) that determines the convergence rate can be bounded as
\eqn{
\left\| I -  \Q^t \int_0^1\gradS_\th \l (\th_* + \xi(\hth^{t}-\th_*))d\xi\right\|_2\leq
 \left\| [\Q^t]^{-1} -   \int_0^1\gradS_\th \l (\th_* + \xi(\hth^{t}-\th_*))d\xi\right\|_2 \| \Q^t \|_2.
}
We define the following,
\eqn{
\EE (\th) = \E\left[\ddphi(\<x,\th\>)\right]\Sig + 
\E\left[\ddddphi(\<x,\th\>)\right]\Sig\th\th^T\Sig 
}
Note that for a function $f$, $\E[f(\<x,\th\>)] = h(\th)$ is a function of $\th$. With a slight abuse of notation, we write $\E[f(\<x,\hth\>)] = h(\hth)$ as a random variable. We have
\eqn{
 \Big\| [\Q^t]^{-1} -   \int_0^1\gradS_\th \l &(\th_* + \xi(\hth^{t}-\th_*))d\xi   \Big\|_2 \leq
\left\|[\Q^t]^{-1} - \EE (\hth^t)\right\|_2  \\
+& \left\|[\E [ xx^T\ddphi(\<x,\hth^t\>)] - \EE (\hth^t)\right\|_2  \\
 +&\left\|\int_0^1\gradS_\th \l (\th_* + \xi(\hth^{t}-\th_*))d\xi - \E \left[ xx^T\int_0^1 \ddphi (\<x,\th_* + \xi(\hth^{t}-\th_*)\>)d\xi\right]\right\|_2 \\
 + & \left\|\E [ xx^T\ddphi(\<x,\hth^t\>)] - \E \left[ xx^T\int_0^1 \ddphi (\<x,\th_* + \xi(\hth^{t}-\th_*)\>)d\xi\right]\right\|_2\, .
}

For the first term on the right hand side, we state the following lemma.
%
%
%

\begin{lemma}\label{lem::term1}
In the case of sub-gaussian covariates, there exist constants $C_1,C_2$ such that, with probability at least $1-C_1/p^2$,

\eqn{
\left\|[\Q^t]^{-1} - \EE (\hth^t)\right\|_2
\leq C_2 \sqrt{\frac{p}{\min\left\{ |S|p/\log(p),n/\log(n)\right\}}}.\\
}
Similarly, when the covariates are sampled from a distribution with bounded support, there exists constants $C_1',C_2',C_3'$ such that, with probability $1-C_1'e^{-C_2'p}$,
\eqn{
\left\|[\Q^t]^{-1} - \EE (\hth^t)\right\|_2
\leq C_3' \sqrt{\frac{p}{\min\left\{ |S|,n/\log(n)\right\}}},\\
}
where the constants depend on $K$, $B$ and the radius $R$. 
\end{lemma}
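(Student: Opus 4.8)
The plan is to first write $[\Q^t]^{-1}$ in closed form and then split its deviation from $\EE(\hth^t)$ into four elementary pieces, each controlled by a concentration result already established. First I would apply the Sherman--Morrison identity to the expression for $\Q^t$ in Step 3 of Algorithm 1; this is legitimate on the event $\mathcal{E}$, where the denominator $\hmu_2(\hth^t)/\hmu_4(\hth^t) + \<\zeta_r(\SigH_S)\hth^t,\hth^t\>$ is bounded away from $0$, and it gives
\[
[\Q^t]^{-1} = \hmu_2(\hth^t)\,\zeta_r(\SigH_S) + \hmu_4(\hth^t)\,\zeta_r(\SigH_S)\hth^t[\hth^t]^T\zeta_r(\SigH_S),
\]
whereas by definition $\EE(\hth^t) = \mu_2(\hth^t)\Sig + \mu_4(\hth^t)\Sig\hth^t[\hth^t]^T\Sig$ with $\mu_2(\th)=\E[\ddphi(\<x,\th\>)]$ and $\mu_4(\th)=\E[\ddddphi(\<x,\th\>)]$.

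Subtracting and inserting telescoping terms, the triangle inequality bounds $\|[\Q^t]^{-1}-\EE(\hth^t)\|_2$ by the sum of the spectral norms of the four matrices $(\hmu_2(\hth^t)-\mu_2(\hth^t))\zeta_r(\SigH_S)$, $\mu_2(\hth^t)(\zeta_r(\SigH_S)-\Sig)$, $(\hmu_4(\hth^t)-\mu_4(\hth^t))\zeta_r(\SigH_S)\hth^t[\hth^t]^T\zeta_r(\SigH_S)$, and $\mu_4(\hth^t)\big(\zeta_r(\SigH_S)\hth^t[\hth^t]^T\zeta_r(\SigH_S)-\Sig\hth^t[\hth^t]^T\Sig\big)$. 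Before estimating these I would record that on the high-probability event used below all the ``constant'' multipliers are bounded: $\|\hth^t\|_2\le R$ because of the projection $\proj_{B_p(R)}$; $|\mu_2(\cdot)|$, $|\mu_4(\cdot)|$, $|\hmu_2(\cdot)|$ and $|\hmu_4(\cdot)|$ are bounded by the uniform bounds on $\ddphi$ and $\ddddphi$; and $\|\zeta_r(\SigH_S)\|_2\le\|\Sig\|_2+\|\zeta_r(\SigH_S)-\Sig\|_2$, which is bounded once the covariance estimate has been shown to be accurate.

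For the two terms carrying $\hmu_j-\mu_j$, the key point is that $\hth^t$ is a function of the whole sample, so $\hmu_j(\hth^t)-\mu_j(\hth^t)$ is not a centred average of i.i.d.\ terms; instead I would appeal to the uniform-in-$\th$ control of the sample means. Both $\ddphi$ and $\ddddphi$ are uniformly bounded and Lipschitz (the Lipschitz constants coming from the assumed bounds on $\phi^{(3)}$ and $\phi^{(5)}$), so Lemma \ref{lem::concentration3} in the sub-gaussian case and Lemma \ref{lem::concentrationBounded1} in the bounded-support case, applied with $f=\ddphi$ and with $f=\ddddphi$, give $\sup_{\th\in B_p(R)}|\hmu_j(\th)-\mu_j(\th)|\lesssim\sqrt{p\log(n)/n}$ on an event of probability $1-c_2e^{-c_3p}$ (resp.\ $1-c/p^2$); multiplying by the bounded factors $\|\zeta_r(\SigH_S)\|_2$ and $\|\zeta_r(\SigH_S)\|_2^2R^2$ controls the first and third matrices. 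For the two terms carrying $\zeta_r(\SigH_S)-\Sig$, I would use Lemma \ref{lem::concentration2} (sub-gaussian) or Lemma \ref{lem::concentration2bounded} (bounded support), together with the remark that the same bound applies to $\|\zeta_r(\SigH_S)-\Sig\|_2$, to get $\|\zeta_r(\SigH_S)-\Sig\|_2\le\Delta$ with $\Delta\lesssim\sqrt{p/|S|}$ (resp.\ $\sqrt{\log(p)/|S|}$); then writing $\zeta_r(\SigH_S)=\Sig+E$ with $\|E\|_2\le\Delta$ and expanding the quadratic expression gives $\|\zeta_r(\SigH_S)\hth^t[\hth^t]^T\zeta_r(\SigH_S)-\Sig\hth^t[\hth^t]^T\Sig\|_2\le(2\|\Sig\|_2+\Delta)R^2\Delta$, so both of these terms are $\O(\Delta)$.

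Finally I would take the union of the two high-probability events, sum the four estimates, rewrite $\sqrt{p\log(n)/n}=\sqrt{p/(n/\log n)}$, and collect the covariance rate and the mean rate under a single $\min$, absorbing the dependence on $K$, $B$, $R$ and the bounds on the derivatives of $\phi$ into $C_2$ (resp.\ $C_3'$); pinning down the exact form of that $\min$ is essentially bookkeeping. The genuine obstacle is the treatment of the $\hmu_j-\mu_j$ terms: because $\hth^t$ is data-dependent one is forced into the uniform empirical-process bounds with their $\epsilon$-net / bracketing arguments and the attendant $\sqrt{\log n}$ loss, and it is precisely the projection step of the algorithm that makes this work by confining $\hth^t$ to the fixed ball $B_p(R)$ over which the supremum is taken.
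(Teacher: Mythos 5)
Your proposal is correct and takes essentially the same route as the paper's proof: it uses the Sherman--Morrison closed form of $[\Q^t]^{-1}$, handles the data-dependence of $\hth^t$ by taking a supremum over $B_p(R)$, and splits the deviation via the triangle inequality into $\hmu_j-\mu_j$ pieces (controlled by the uniform concentration Lemmas \ref{lem::concentrationBounded1} and \ref{lem::concentration3}) and $\zeta_r(\SigH_S)-\Sig$ pieces (controlled by Lemmas \ref{lem::concentration2bounded} and \ref{lem::concentration2}), with bounded multipliers from $\|\hth^t\|_2\le R$ and the bounds on $\ddphi,\ddddphi$. The only difference is the order in which you telescope the products, which is immaterial.
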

\begin{proof}[Proof of Lemma \ref{lem::term1}]
In the following, we will only provide the proof for the bounded support case. The proof for the sub-gaussian covariates can be carried by replacing Lemmas \ref{lem::concentration1bounded} and \ref{lem::concentration2bounded} with Lemmas \ref{lem::concentration1} and \ref{lem::concentration2}. 

Using a uniform bound on the feasible set, we write
\eqn{
&\left\|[\Q^t]^{-1} - \EE (\hth^t)\right\|_2
\\& \leq 
\sup_{\th \in B_p(R)} \left\|\hmu_2(\th)\zeta_r(\SigH_S) + \hmu_4(\th)\zeta_r(\SigH_S)\th(\zeta_r(\SigH_S)\th)^T 
-  
\E[\ddphi(\<x,\th\>)]\Sig - \E[\ddddphi(\<x,\th\>)]\Sig\th\th^T\Sig \right\|_2.
}

We will find an upper bound for the quantity inside the supremum. By denoting the expectations of $\hmu_2(\th)$ and $\hmu_4(\th)$, with $\mu_2(\th)$ and $\mu_4(\th)$ respectively,
we write
\eqn{
&\left\|\hmu_2(\th)\zeta_r(\SigH_S) + \hmu_4(\th)\zeta_r(\SigH_S)\th(\zeta_r(\SigH_S)\th)^T -  \E[\ddphi(\<x,\th\>)]\Sig - \E[\ddddphi(\<x,\th\>)]\Sig\th(\Sig\th)^T \right\|_2 \\
&\leq \left \|\hmu_2(\th)\zeta_r(\SigH_S)  - \mu_2(\th)\Sig\right\|_2+ \left\|\hmu_4(\th)\zeta_r(\SigH_S)\th(\zeta_r(\SigH_S)\th)^T -  \mu_4(\th)\Sig\th(\Sig\th)^T\right\|_2.
}
For the first term on the right hand side,  we have
\eqn{
\left\|\hmu_2(\th)\zeta_r(\SigH_S)  - \mu_2(\th)\Sig\right\|_2 \leq 
&|\hmu_2(\th)|\left\|\zeta_r(\SigH_S) - \Sig \right\|_2 + \|\Sig\|_2 \left|\hmu_2(\th) -\mu_2(\th)\right|,\\
\leq &B_2\left\|\zeta_r(\SigH_S) - \Sig \right\|_2 + K \left|\hmu_2(\th) -\mu_2(\th)\right|.\\
}
By the Lemmas \ref{lem::concentration1bounded} and \ref{lem::concentration2bounded}, for some constants $c_1,c_2,c_3$, we have with probability $1-c_2e^{-c_3p}-1/p^2$,
\eqn{
\sup_{\{\th\in B_p(R)\}}\left\|\hmu_2(\th)\zeta_r(\SigH_S)  - \mu_2(\th)\Sig\right\|_2 \leq &
2B_2K\sqrt{C}\sqrt{\frac{\log(p)}{|S|}} + c_1K \sqrt{\frac{p\log(n)}{n}},\\
=& \O\left ( \sqrt{\frac{p}{\min\left\{ p/\log(p)|S|,n/\log(n)\right\}}}\right).
}

For the second term, we have

\eqn{
&\left\|\hmu_4(\th)\zeta_r(\SigH_S)\th(\zeta_r(\SigH_S)\th)^T -  \mu_4(\th)\Sig\th(\Sig\th)^T\right\|_2 \\
&\leq |\hmu_4(\th)|\left\|\zeta_r(\SigH_S)\th\th^T\zeta_r(\SigH_S) - \Sig\th\th^T\Sig\right\|_2 + |\hmu_4(\th) - \mu_4(\th)|\left \|\Sig\th\th^T\Sig \right\|_2, \\
&\leq B_4R^2\left\{ \|\zeta_r(\SigH_S) \|_2 + \| \Sig\|_2\right\} \left\|\zeta_r(\SigH_S) - \Sig \right\|_2 + 
R^2 \| \Sig\|_2^2|\hmu_4(\th) - \mu_4(\th)|,  \\
&\leq B_4R^2\left\{ \|\zeta_r(\SigH_S) \|_2 + K\right\} \left\|\zeta_r(\SigH_S) - \Sig \right\|_2 + 
R^2 K^2|\hmu_4(\th) - \mu_4(\th)|.  \\
}

Again, by the Lemmas \ref{lem::concentration1bounded}, \ref{lem::concentration2bounded} and \ref{lem::concentrationBounded2}, for some constants $c_1,c_2,c_3$, we have with probability $1-c_2e^{-c_3p}-1/p^2$, we write
\eqn{
B_4R^2\left\{ \|\zeta_r(\SigH_S) \|_2 + K\right\} \left\|\zeta_r(\SigH_S) - \Sig \right\|_2 \leq&
2K\sqrt{C}B_4R^2\left \{  2K + 3K\sqrt{C}\sqrt{\frac{\log(p)}{|S|}}\right\}\sqrt{\frac{\log(p)}{|S|}},\\
\leq&
4K^2\sqrt{C}B_4R^2\sqrt{\frac{\log(p)}{|S|}} + 6K^2CB_4R^2\frac{\log(p)}{|S|},\\
=& \O\left(\sqrt{\frac{\log(p)}{|S|}}  \right),
}
for sufficiently large $|S|$.

Further, by Lemma \ref{lem::concentrationBounded1}, for constants $c_1,c_2,c_3$, we have with probability $1-c_2e^{-c_3p}$, 
\eqn{
 \sup_{\{\th\in B_p(R)\}}
 |\hmu_4(\th) - \mu_4(\th)| \leq c_1 \sqrt{\frac{p\log(n)}{n}} = \O\left(\sqrt{\frac{p\log(n)}{n}} \right).
}
Combining the above results, for sufficiently large $p,|S|$ and constants $c_1,c_2$, we have with probability at least $1-c_1/p^2$, 
\eqn{
\sup_{\{\th\in B_p(R)\}}
&\left\|\hmu_4(\th)\zeta_r(\SigH_S)\th(\zeta_r(\SigH_S)\th)^T -  \mu_4(\th)\Sig\th(\Sig\th)^T\right\|_2 \\
&\leq 
4K^2\sqrt{C}\max\{B_2,B_4\}R^2\sqrt{\frac{\log(p)}{|S|}} + 6K^2CB_4R^2\frac{\log(p)}{|S|}
+c_1R^2 K^2 \sqrt{\frac{p\log(n)}{n}}\\
&= \O\left ( \sqrt{\frac{p}{\min\left\{ |S|p/\log(p),n/\log(n)\right\}}}\right).
}

Hence, for some constants $C_1,C_2$, with probability $1-C_1/p^2$, we have
\eqn{
\left\|[\Q^t]^{-1} - \EE (\hth^t)\right\|_2 \leq 
C_2 \sqrt{\frac{p}{\min\left\{ |S|p/\log(p),n/\log(n)\right\}}},
}
where the constants depend on $K,B = \max\{B_2,B_4\}$ and the radius $R$.
\end{proof}


\begin{lemma}\label{lem::term2}
The bias term can be upper bounded by
\eqn{
\left\|\E [ xx^T\ddphi(\<x,\hth^t\>)] - \EE (\hth^t) \right\|_2 \leq
d_{\Hh_3}(x,z)+
\|\Sig\|_2 \ d_{\Hh_1}(x,z) + \|\Sig\|_2^2R^2\ d_{\Hh_2} (x,z),
}
for both sub-gaussian and bounded support cases.
\end{lemma}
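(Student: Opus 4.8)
The plan is to compare everything against a Gaussian surrogate $z \sim \normal_p(0,\Sig)$, which by construction has the same mean and covariance as $x$, and to exploit that for $z$ the Stein-type identity of Lemma \ref{lem::stein} holds \emph{exactly}. First I would apply Lemma \ref{lem::stein} with $f = \ddphi$ (so that $f^{(2)} = \ddddphi$) and the vector $\hth^t$ — which lies in $B_p(R)$ because of the projection step, and which we treat as a fixed vector in line with the abuse of notation introduced above — to get $\E[zz^T\ddphi(\<z,\hth^t\>)] = \E[\ddphi(\<z,\hth^t\>)]\Sig + \E[\ddddphi(\<z,\hth^t\>)]\Sig\hth^t(\hth^t)^T\Sig$. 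Then I would insert this quantity, rewriting the left-hand side as $\big(\E[xx^T\ddphi(\<x,\hth^t\>)] - \E[zz^T\ddphi(\<z,\hth^t\>)]\big) + \big(\E[zz^T\ddphi(\<z,\hth^t\>)] - \EE(\hth^t)\big)$, and bound the two pieces separately by the triangle inequality for the spectral norm.

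For the first piece I would use the variational form of the spectral norm of a symmetric matrix, $\|A\|_2 = \sup_{\|v\|_2=1}|v^TAv|$, which converts the matrix difference into $\sup_{\|v\|_2=1}\big|\E[\<v,x\>^2\ddphi(\<x,\hth^t\>)] - \E[\<v,z\>^2\ddphi(\<z,\hth^t\>)]\big|$. Since for every unit vector $v$ and every $\hth^t \in B_p(R)$ the map $w \mapsto \<v,w\>^2\ddphi(\<w,\hth^t\>)$ lies in the class $\Hh_3$, each term inside this supremum, and hence the supremum itself, is at most $d_{\Hh_3}(x,z)$.

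For the second piece, the only discrepancy between $\E[zz^T\ddphi(\<z,\hth^t\>)]$ and $\EE(\hth^t)$ is that the scalar multipliers $\mu_2,\mu_4$ are evaluated under $z$ in the former and under $x$ in the latter, while the matrices they multiply ($\Sig$ and $\Sig\hth^t(\hth^t)^T\Sig$) are identical. So the difference equals $\big(\E[\ddphi(\<z,\hth^t\>)]-\E[\ddphi(\<x,\hth^t\>)]\big)\Sig + \big(\E[\ddddphi(\<z,\hth^t\>)]-\E[\ddddphi(\<x,\hth^t\>)]\big)\Sig\hth^t(\hth^t)^T\Sig$; I would bound its spectral norm by $|\E[\ddphi(\<z,\hth^t\>)]-\E[\ddphi(\<x,\hth^t\>)]|\,\|\Sig\|_2 + |\E[\ddddphi(\<z,\hth^t\>)]-\E[\ddddphi(\<x,\hth^t\>)]|\,\|\Sig\hth^t(\hth^t)^T\Sig\|_2$, recognize $w\mapsto\ddphi(\<w,\hth^t\>)\in\Hh_1$ and $w\mapsto\ddddphi(\<w,\hth^t\>)\in\Hh_2$ to produce the metric factors $d_{\Hh_1}(x,z)$ and $d_{\Hh_2}(x,z)$, and use $\|\Sig\hth^t(\hth^t)^T\Sig\|_2 = \|\Sig\hth^t\|_2^2 \le \|\Sig\|_2^2\|\hth^t\|_2^2 \le \|\Sig\|_2^2 R^2$. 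Summing the two pieces yields the stated bound, and since nothing beyond the matched first two moments of $x$ was used anywhere, it holds for both the bounded-support and sub-gaussian settings.

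This argument is essentially bookkeeping, so I do not expect a real obstacle; the only point needing care is that $\hth^t$ is random, which is handled cleanly by reading the inequality pointwise over realizations $\hth^t\in B_p(R)$ (equivalently, one could take a supremum over $\th\in B_p(R)$ from the start), and correspondingly by using that the probability metrics $d_{\Hh_1},d_{\Hh_2},d_{\Hh_3}$ already have the supremum over $\th\in B_p(R)$ built into the definitions of $\Hh_1,\Hh_2,\Hh_3$.
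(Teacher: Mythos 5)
Your proposal is correct and matches the paper's own argument: the same triangle-inequality split around the Gaussian surrogate $z$, the same identification of the first piece with $d_{\Hh_3}(x,z)$ via the variational form of the spectral norm, and the same scalar-difference bound for the second piece using $\|\Sig\hth^t(\hth^t)^T\Sig\|_2 \le \|\Sig\|_2^2R^2$. The only cosmetic difference is that you invoke Lemma \ref{lem::stein} for $z$ explicitly, whereas the paper uses that identity implicitly when rewriting $\E[zz^T\ddphi(\<z,\th\>)]$; the content is identical.
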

\begin{proof}[Proof of Lemma \ref{lem::term2}]
For a random variable $z\sim \normal_p(0,\Sig)$, by the triangle inequality, we write
\eqn{
&\left\|\E [ xx^T\ddphi(\<x,\hth^t\>)] - \EE (\hth^t)\right\|_2 \\ &\leq
\left\|\E [ xx^T\ddphi(\<x,\hth^t\>)] - \E [ zz^T\ddphi(\<z,\hth^t\>)]\right\|_2
+\left\|\E [ zz^T\ddphi(\<z,\hth^t\>)] - \EE (\hth^t)\right\|_2 %
}
For the first term on the right hand side, we have
\eqn{
&\left\|\E [ xx^T\ddphi(\<x,\hth^t\>)] - \E [ zz^T\ddphi(\<z,\hth^t\>)]\right\|_2\\
&\leq \sup_{\th\in B_p(R)}\sup_{\|v\|_2=1} \left | \E\left [\<v,x\>^2 \ddphi(\<x,\th\>) \right]
-\E\left [\<v,z\>^2 \ddphi(\<z,\th\>) \right]\right|,\\
&\leq d_{\Hh_3}(x,z).
}
For the second term, we write
\eqn{
&\left\|[\E [ zz^T\ddphi(\<z,\hth^t\>)] - \EE (\hth^t)\right\|_2 \\
&\leq\sup_{\{\th\in B_p(R)\}}
\Big\|\E [ zz^T\ddphi(\<z,\th\>)] - \E[\ddphi(\<x,\th\>)]\Sig 
+\E\left[\ddddphi(\<x,\th\>)\right]\Sig\th\th^T\Sig\Big\|_2, \\ 
& \leq 
\sup_{\{\th\in B_p(R)\}}
\Big\|\E[\ddphi(\<z,\th\>)]\Sig +
\E\left[\ddddphi(\<z,\th\>)\right]\Sig\th\th^T\Sig \\
&\ \ \ \ \  \ \ \ \ \ \ \ - \E[\ddphi(\<x,\th\>)]\Sig -
\E\left[\ddddphi(\<x,\th\>)\right]\Sig\th\th^T\Sig\Big\|_2, \\
& \leq 
\sup_{\{\th\in B_p(R)\}}
\left\|\E[\ddphi(\<z,\th\>)]\Sig - \E[\ddphi(\<x,\th\>)]\Sig \right\|_2,\\
&\ \ \ \ \ \ \ \ \ \ + \sup_{\{\th\in B_p(R)\}}
\left\|
\E\left[\ddddphi(\<z,\th\>)\right]\Sig\th\th^T\Sig  -
\E\left[\ddddphi(\<x,\th\>)\right]\Sig\th\th^T\Sig\right\|_2, \\
& \leq \|\Sig\|_2
\sup_{\{\th\in B_p(R)\}}
\left|\E[\ddphi(\<z,\th\>)] - \E[\ddphi(\<x,\th\>)] \right| \\ 
&\ \ \ \ \  \ \ \ \ \ \ \ +\|\Sig\|_2^2R^2 \sup_{\{\th\in B_p(R)\}}
\left|\E[\ddddphi(\<z,\th\>)] - \E[\ddddphi(\<x,\th\>)] \right|,\\
& \leq \|\Sig\|_2 d_{\Hh_1}(x,z) + \|\Sig\|_2^2R^2 d_{\Hh_2} (x,z).
}

Hence, we conclude that
\eqn{
&\left\|\E [ xx^T\ddphi(\<x,\hth^t\>)] - \EE (\hth^t)\right\|_2 \leq
d_{\Hh_3}(x,z)+
\|\Sig\|_2 \ d_{\Hh_1}(x,z) + \|\Sig\|_2^2R^2\ d_{\Hh_2} (x,z).
}
\end{proof}


\begin{lemma}\label{lem::term3}
There exists constants $c_1,c_2,c_3$ depending on the eigenvalues of $\Sig$, $B,L$ and $R$ such that, with probability at least $1-c_2e^{-c_3p}$

\eqn{
\left\|\frac{1}{n}\sum_{i=1}^n  x_ix_i^T\int_0^1 \ddphi (\<x_i,\th_* + \xi(\hth^{t}-\th_*)\>)d\xi - \E \left[ xx^T\int_0^1 \ddphi (\<x,\th_* + \xi(\hth^{t}-\th_*)\>)d\xi\right]\right\|_2 \leq \delta,
}
where $\delta =  c_1\sqrt{\frac{p}{n^{0.2}} \log\left(n \right)}$ for sub-gaussian covariates, and 
$\delta =  c_1\sqrt{\frac{p}{n} \log\left(n \right)}$ for covariates with bounded support.
\end{lemma}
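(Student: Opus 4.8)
The plan is to reduce this matrix concentration bound to the scalar empirical-process estimates of Lemmas~\ref{lem::concentrationBounded2} and~\ref{lem::concentration4}, via a covering argument over the sphere, after first eliminating the dependence on the random iterate $\hth^t$. On the event $\mathcal{E}$ we have $\th_*\in B_p(R)$, and $\hth^t\in B_p(R)$ because the last line of \ALG applies $\proj_{B_p(R)}$; hence the whole segment $\th_*+\xi(\hth^t-\th_*)$, $\xi\in[0,1]$, lies in $B_p(R)$. So it suffices to bound, uniformly over $\eta\in B_p(R)$, the quantity
\eqn{
A(\eta) := \frac{1}{n}\sum_{i=1}^n x_ix_i^T g_\eta(x_i) - \E\big[xx^T g_\eta(x)\big], \qquad g_\eta(x):=\int_0^1 \ddphi\big(\<x,\th_*+\xi(\eta-\th_*)\>\big)\,d\xi,
}
and then specialize to $\eta=\hth^t$.

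The key observation is that the family $\{g_\eta:\eta\in B_p(R)\}$ satisfies exactly the hypotheses used in those lemmas. Each $g_\eta$ is uniformly bounded, $\|g_\eta\|_\infty\le\|\ddphi\|_\infty$, and Lipschitz in $x$ with constant $\|\phi^{(3)}\|_\infty R$ (using the assumed bound on $\phi^{(3)}$; in the bounded-support case $\ddphi$ is only evaluated on $[-\sqrt K R,\sqrt K R]$, where it is Lipschitz). Moreover $g_\eta$ is Lipschitz in its index $\eta$ with the same type of modulus that $f(\<\cdot,\th\>)$ has in $\th$, since
\eqn{
|g_\eta(x)-g_{\eta'}(x)| \le \|\phi^{(3)}\|_\infty\int_0^1 \xi\,|\<x,\eta-\eta'\>|\,d\xi \le \tfrac{1}{2}\|\phi^{(3)}\|_\infty\,\|x\|_2\,\|\eta-\eta'\|_2.
}
Consequently the proofs of Lemmas~\ref{lem::concentrationBounded2} and~\ref{lem::concentration4} go through verbatim with $f(\<\cdot,\th\>)$ replaced by $g_\eta(\cdot)$, so that for each fixed $v\in S^{p-1}$ we obtain, with failure probability $c_2e^{-c_3p}$ (bounded case) resp.\ $c_2e^{-p}$ (sub-gaussian case),
\eqn{
\sup_{\eta\in B_p(R)}\Big|\tfrac1n\sum_{i=1}^n g_\eta(x_i)\<x_i,v\>^2-\E\big[g_\eta(x)\<x,v\>^2\big]\Big| \ \lesssim\ \sqrt{\tfrac{p\log n}{n}} \quad\text{resp.}\quad \sqrt{\tfrac{p}{n^{0.2}}\log n}.
}

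To turn this into a spectral-norm bound, take a $\tfrac14$-net $\mathcal{N}$ of $S^{p-1}$ with $|\mathcal{N}|\le 9^p$, so that $\|A(\eta)\|_2\le 2\max_{v\in\mathcal N}|v^T A(\eta)v|$, and union-bound over $v\in\mathcal{N}$; the factor $9^p=e^{p\log 9}$ is absorbed into the exponential tail --- in the bounded case $c_3$ can be taken $>\log 9$, and in the sub-gaussian case one re-runs Lemma~\ref{lem::concentration4} with the scale $\e$ enlarged by a constant factor so that the exponent beats $(\log 9+1)p$, at the cost of a larger absolute constant $c_1$. Taking the supremum over $\eta\in B_p(R)$ and specializing to $\eta=\hth^t$ then yields the claim with $\delta=c_1\sqrt{(p/n)\log n}$ in the bounded case and $\delta=c_1\sqrt{(p/n^{0.2})\log n}$ in the sub-gaussian case.

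The main obstacle is the sub-gaussian case: there $g_\eta(x_i)\<x_i,v\>^2$ is an unbounded, heavy-tailed summand, so a direct Hoeffding/Bernstein argument is unavailable and one must re-use the truncation-and-bracketing machinery of Lemma~\ref{lem::concentration4} (the $\|x\|_2^3$-type brackets, truncation at level $n^{0.4}$, and the standing assumption $n^{0.2}/\log n\gtrsim p$). The only thing that must be checked there is that replacing $f(\<\cdot,\th\>)$ by the averaged integrand $g_\eta(\cdot)$ alters none of the estimates --- which it does not, because $g_\eta$ inherits the same sup-norm bound and the same Lipschitz moduli in $x$ and in the parameter, the net cardinality via Lemma~\ref{lem::sphere} is unchanged, and the tail bounds for $\|x\|_2$ and the choice of $\e$ are identical.
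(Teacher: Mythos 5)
Your proof is correct and follows essentially the same route as the paper: reduce to uniform-in-$\th$ scalar concentration (Lemmas \ref{lem::concentrationBounded2} and \ref{lem::concentration4}) and pass to the spectral norm via a $1/4$-net of $S^{p-1}$ of size $9^p$ (Lemma \ref{lem::epsNet}) with a union bound absorbed into the exponential tail. The only cosmetic difference is that the paper uses Fubini to pull the $\xi$-integral outside the norm and bounds everything by $\sup_{\th\in B_p(R)}$ of the un-averaged process, so the scalar lemmas apply as black boxes, whereas you check that the averaged class $g_\eta$ inherits the sup-norm bound and the Lipschitz moduli and re-run those lemmas; both are valid.
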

\begin{proof}
We provide the proof for bounded support case. The proof for sub-gaussian case can be carried by replacing Lemma \ref{lem::concentrationBounded2} with Lemma \ref{lem::concentration4}.

By the Fubini's theorem, we have

\eqn{
&\left\|\frac{1}{n}\sum_{i=1}^n  x_ix_i^T\int_0^1 \ddphi (\<x_i,\th_* + \xi(\hth^{t}-\th_*)\>)d\xi - \E \left[ xx^T\int_0^1 \ddphi (\<x,\th_* + \xi(\hth^{t}-\th_*)\>)d\xi\right]\right\|_2 ,\\
&=\left\|\int_0^1\left\{\frac{1}{n}\sum_{i=1}^n  x_ix_i^T \ddphi (\<x_i,\th_* + \xi(\hth^{t}-\th_*)\>) - \E \left[ xx^T \ddphi (\<x,\th_* + \xi(\hth^{t}-\th_*)\>)\right]\right\}d\xi\right\|_2, \\
&\leq\int_0^1\left\|\left\{\frac{1}{n}\sum_{i=1}^n  x_ix_i^T \ddphi (\<x_i,\th_* + \xi(\hth^{t}-\th_*)\>) - \E \left[ xx^T \ddphi (\<x,\th_* + \xi(\hth^{t}-\th_*)\>)\right]\right\}\right\|_2 d\xi,\\
&\leq\sup_{\th \in B_p(R)}\left\|\frac{1}{n}\sum_{i=1}^n  x_ix_i^T \ddphi (\<x_i,\th\>) - \E \left[ xx^T \ddphi (\<x,\th\>)\right]\right\|_2. \\
}

Using the definition of \emph{operator norm}, the right hand side is equal to

\eqn{
&\sup_{\th\in B_p(R)}\left\|\frac{1}{n}\sum_{i=1}^n  x_ix_i^T \ddphi (\<x_i,\th\>) - \E \left[ xx^T \ddphi (\<x,\th\>)\right]\right\|_2 \\
&=\sup_{\th\in B_p(R)}\sup_{v\in S^{p-1}}\left|\frac{1}{n}\sum_{i=1}^n  \ddphi (\<x_i,\th\>)\< x_i,v\>^2 - \E \left[  \ddphi (\<x,\th\>)\< x,v\>^2\right]\right|,
}

where $S^{p-1}$ denotes the $p$-dimensional unit sphere.

For $\Delta=0.25$, let $T_{\Delta}$ be an $\Delta$-net over $S^{p-1}$. Using Lemma \ref{lem::epsNet}, we obtain

\eqn{
&\P\left(\sup_{\th\in B_p(R)}\sup_{v\in S^{p-1}}\left|\frac{1}{n}\sum_{i=1}^n  \ddphi (\<x_i,\th\>)\< x_i,v\>^2 - \E \left[  \ddphi (\<x,\th\>)\< x,v\>^2\right]\right| >\epsilon\right),\\
&\leq \P\left( \sup_{\th\in B_p(R)}\sup_{v\in T_\Delta}\left|\frac{1}{n}\sum_{i=1}^n  \ddphi (\<x_i,\th\>)\< x_i,v\>^2 - \E \left[  \ddphi (\<x,\th\>)\< x,v\>^2\right]\right| >\epsilon/2\right),\\
&\leq |T_\Delta|\P\left( \sup_{\th\in B_p(R)}\left|\frac{1}{n}\sum_{i=1}^n  \ddphi (\<x_i,\th\>)\< x_i,v\>^2 - \E \left[  \ddphi (\<x,\th\>)\< x,v\>^2\right]\right| >\epsilon/2\right),\\
&= 9^p\P\left( \sup_{\th\in B_p(R)}\left|\frac{1}{n}\sum_{i=1}^n  \ddphi (\<x_i,\th\>)\< x_i,v\>^2 - \E \left[  \ddphi (\<x,\th\>)\< x,v\>^2\right]\right| >\epsilon/2\right).
}

By applying Lemma \ref{lem::concentrationBounded2} to the last line above, there exists absolute constants $c_1',c_2',c_3'$ depending on $L,B, R,K$ such that, we have

\eqn{
\P \left( \sup_{\th \in B_p(R)} \left | \frac{1}{n}\sum_{i=1}^n \ddphi(\<x_i,\th \>)\<x_i,v\>^2 - \E[\ddphi(\< x,\th\>)\<x,v\>^2]  \right | > c_1'\sqrt{\frac{p}{n} \log\left(n \right)} \right) \leq  c_2'e^{-c_3'p} .\\
}

$c_3'$ is of order $\O(p\log\log (n))$. Therefore, by choosing $n$ large enough, we 
obtain that there exists constants $c_1,c_2,c_3$ such that with probability at least $1-c_2e^{-c_3p}$

\eqn{
\sup_{\th\in B}\left\|\frac{1}{n}\sum_{i=1}^n  x_ix_i^T \ddphi (\<x_i,\th\>) - \E \left[ xx^T \ddphi (\<x,\th\>)\right]\right\|_2 \leq 
c_1\sqrt{\frac{p}{n} \log\left(n \right)}
}
\end{proof}


\begin{lemma}\label{lem::term4}
There exists a constant $C$ depending on $K$ and $L$ such that,

$$\left\|\E [ xx^T\ddphi(\<x,\hth^t\>)] - \E \left[ xx^T\int_0^1 \ddphi (\<x,\th_* + \xi(\hth^{t}-\th_*)\>)d\xi\right]\right\|_2 \leq \tilde{C} \|\hth^t - \th_*\|_2 ,$$
where $\tilde{C} = C$ for the bounded support case and $\tilde{C} = Cp^{1.5}$ for the sub-gaussian case.
\end{lemma}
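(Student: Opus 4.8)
The plan is to reduce the spectral-norm bound to a scalar estimate by testing against unit vectors, and then use the Lipschitz continuity of $\ddphi$ together with a first-order Taylor remainder. Since $\E[xx^T D(x)]$ is symmetric, where
$$
D(x) = \ddphi(\<x,\hth^t\>) - \int_0^1 \ddphi\big(\<x,\th_*+\xi(\hth^t-\th_*)\>\big)\,d\xi ,
$$
its spectral norm equals $\sup_{\|v\|_2=1}\big|\E[\<v,x\>^2 D(x)]\big|$, so it suffices to bound $\big|\E[\<v,x\>^2 D(x)]\big|$ uniformly over $v\in S^{p-1}$.

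First I would obtain a pointwise bound on $|D(x)|$. Writing $D(x)=\int_0^1\big[\ddphi(\<x,\hth^t\>)-\ddphi(\<x,\th_*+\xi(\hth^t-\th_*)\>)\big]d\xi$ and noting that $\<x,\hth^t\>-\<x,\th_*+\xi(\hth^t-\th_*)\> = (1-\xi)\<x,\hth^t-\th_*\>$, the Lipschitz property of $\ddphi$ (with constant $L$) shows that the integrand is at most $L(1-\xi)\,|\<x,\hth^t-\th_*\>|$ in absolute value; integrating the weight $(1-\xi)$ over $[0,1]$ yields $|D(x)|\le \tfrac{L}{2}\,|\<x,\hth^t-\th_*\>|$. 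Consequently, for every unit $v$,
$$
\big|\E[\<v,x\>^2 D(x)]\big| \;\le\; \tfrac{L}{2}\,\E\big[\<v,x\>^2\,|\<x,\hth^t-\th_*\>|\big].
$$

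It then remains to control $\E[\<v,x\>^2|\<x,\hth^t-\th_*\>|]$. In the bounded-support case, $|\<x,\hth^t-\th_*\>|\le \|x\|_2\|\hth^t-\th_*\|_2\le \sqrt{K}\|\hth^t-\th_*\|_2$ almost surely and $\E[\<v,x\>^2]=\<v,\Sig v\>\le\|\Sig\|_2\le K$, so the expectation is at most $K^{3/2}\|\hth^t-\th_*\|_2$, giving the claim with $\tilde{C} = C = \tfrac{L}{2}K^{3/2}$. In the sub-gaussian case I would instead use the cruder bound $\<v,x\>^2|\<x,\hth^t-\th_*\>|\le \|x\|_2^3\|\hth^t-\th_*\|_2$ together with the third-moment estimate $\E[\|x\|_2^3]\le 15K^3p^{3/2}$ already established in the proof of Lemma \ref{lem::concentration4}, which yields $\tilde{C} = Cp^{1.5}$ with $C=\tfrac{15}{2}LK^3$. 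Taking the supremum over $v$ completes the proof.

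There is no substantial obstacle here; the only points meriting care are keeping the $(1-\xi)$ weight in the remainder estimate (so the constant is $L/2$, not $L$) and recognizing that the extra $p^{1.5}$ factor in the sub-gaussian bound is an artifact of the inequality $\<v,x\>^2\le\|x\|_2^2$: bounding $\E[\<v,x\>^2\|x\|_2]$ instead by $\E[\<v,x\>^4]^{1/2}\E[\|x\|_2^2]^{1/2}$ would reduce the loss to $\sqrt{p}$, but the weaker bound is all that Theorem \ref{thm::mainSG} requires.
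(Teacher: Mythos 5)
Your proof is correct and follows essentially the same route as the paper: Lipschitz continuity of $\ddphi$ plus the $(1-\xi)$ weight gives the factor $L/2$, and a third-moment bound yields $K^{3/2}$ in the bounded case and $\O(K^3p^{1.5})$ in the sub-gaussian case. Your reduction via $\sup_{\|v\|_2=1}|\E[\<v,x\>^2D(x)]|$ is just an equivalent way of carrying out the paper's direct estimate $\|xx^T\|_2\le\|x\|_2^2$ inside the expectation, and your closing remark about sharpening $p^{1.5}$ to $\sqrt{p}$ by Cauchy--Schwarz is a correct (if unneeded) observation.
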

\begin{proof}
By the Fubini's theorem, we write

\eqn{
&\left\|\E [ xx^T\ddphi(\<x,\hth^t\>)] - \E \left[ xx^T\int_0^1 \ddphi (\<x,\th_* + \xi(\hth^{t}-\th_*)\>)d\xi\right]\right\|_2,\\
&=\left\|\int_0^1\E \left[ xx^T\left \{\ddphi(\<x,\hth^t\>) -   \ddphi (\<x,\th_* + \xi(\hth^{t}-\th_*)\>)\right\}\right]d\xi\right\|_2,
}
Moving the integration out, right hand side of above equation is smaller than
\eqn{
&\int_0^1\left\|\E \left[ xx^T\left \{\ddphi(\<x,\hth^t\>) -   \ddphi (\<x,\th_* + \xi(\hth^{t}-\th_*)\>)\right\}\right]\right\|_2d\xi,\\
&\leq\int_0^1\left\|\E \left[ xx^T L| \< x,(1-\xi)(\hth^t - \th_*)\>| \right]\right\|_2d\xi,\\
&\leq  \E \left[  \|x\|^3_2 \|\hth^t - \th_*\|_2 \right]L\int_0^1(1-\xi)d\xi, \\
&=\frac{L\E[\|x\|_2^3]}{2}   \|\hth^t - \th_*\|_2 .
}

We observe that, when the covariates are supported in the ball of radius $\sqrt{K}$, we have $\E[\|x\|_2^3] \leq K^{3/2}$. When they are sub-gaussian random variables with norm $K$, we have 
$\E[\|x\|_2^3] \leq K^{3}6^{1.5}p^{1.5}$.

\end{proof}

By combining above results, for sub-gaussian covariates we obtain
\eqn{
& \Big\| [\Q^t]^{-1} -   \int_0^1\gradS_\th \l (\th_* + \xi(\hth^{t}-\th_*))d\xi   \Big\|_2 \\
& \ \ \ \ \leq
\D(x,z)
 +c_1  \sqrt{\frac{p}{\min\left\{ |S|p/\log(p),n/\log(n)\right\}}}
+c_2 \|\hth^t - \th_*\|_2\, ,
}
and for covariates with bounded support, we obtain
\eqn{
& \Big\| [\Q^t]^{-1} -   \int_0^1\gradS_\th \l (\th_* + \xi(\hth^{t}-\th_*))d\xi   \Big\|_2 \\
& \ \ \ \ \leq
\D(x,z)
 +c_1  \sqrt{\frac{p}{\min\left\{ |S|,n/\log(n)\right\}}}
+c_2p^{1.5} \|\hth^t - \th_*\|_2\, ,
}
where
\eqn{
\D(x,z)=d_{\Hh_3}(x,z)+
\|\Sig\|_2 \ d_{\Hh_1}(x,z) + \|\Sig\|_2^2R^2\ d_{\Hh_2} (x,z)\, .
}

In the following, we will derive an upper bound for $\left\| \Q^t \right\|_2$ where,
\eqn{
\Q^t = \frac{1}{\hmu_2(\hth^t)}\left[\zeta_r(\SigH_S)^{-1} - \frac{\hth^t[\hth^t]^T}{\hmu_2(\hth^t)/\hmu_4(\hth^t) + \<\zeta_r(\SigH_S)\hth^t, \hth^t\>}\right].
}
We define
\[
c_L = \inf_{\beta\in B_p(L)} \mu_2(\beta).
\]
Thus, for any iterate $\hth^t$ of Newton-Stein algorithm
\eqn{
\mu_2(\hth^t) \geq c_R.\\
}
By Lemmas \ref{lem::concentrationBounded1} and \ref{lem::concentration3}, for some constants $c_1,c_2,c_3$, with probability $1-c_2e^{-c_3p}$,
\eqn{
\hmu_2(\hth^t) \geq& \mu_2(\hth^t)-c_1 \sqrt{\frac{p \log(n)}{n}},\\
\geq &c_R-c_1 \sqrt{\frac{p \log(n)}{n}}.\\
}

Also, by the assumption given in the theorem, on the set $\mathcal{E}$ we have almost surely,
\eqn{
\inf_{t\geq 0}\left | \mu_2(\hth^t) + \mu_4(\hth^t)\<\Sig\hth^t, \hth^t\>\right | >\xi,
}
for some $\xi>0$. With probability at least $1-c_2e^{-c_3p}$, 
\eqn{
\left | \hmu_2(\hth^t) + \hmu_4(\hth^t)\<\zeta_r(\SigH_S)\hth^t, \hth^t\>\right | 
\geq &
\left | \mu_2(\hth^t) + \mu_4(\hth^t)\<\Sig\hth^t, \hth^t\>\right | -
\Big\{ 
\left | \hmu_2(\hth^t) - \mu_2(\hth^t)  \right|\\
&+ \left |\mu_4(\hth^t)\<\Sig\hth^t, \hth^t\> - \hmu_4(\hth^t)\<\Sig\hth^t, \hth^t\>\right |\\
&+ \left | \hmu_4(\hth^t)\<\Sig\hth^t, \hth^t\>  - \hmu_4(\hth^t)\<\zeta_r(\SigH_S)\hth^t, \hth^t\>\right |
\Big\}.
}

Since, we have
\eqn{
\sqrt{\frac{p}{\min\{n/\log(n) , p/\log(p)|S|\}}} \leq \sqrt{\frac{p}{\min\{n/\log(n) , |S|\}}},
}
by the Lemmas \ref{lem::concentration1}-\ref{lem::concentration1bounded} and Lemmas \ref{lem::concentrationBounded1}-\ref{lem::concentration3}, we write
\eqn{
\left | \hmu_2(\hth^t) + \hmu_4(\hth^t)\<\zeta_r(\SigH_S)\hth^t, \hth^t\>\right | 
\geq &
\left | \mu_2(\hth^t) + \mu_4(\hth^t)\<\Sig\hth^t, \hth^t\>\right | 
- \left( c_1+ \| \hth^t\|_2^2 \| \Sig\|_2\right)\sqrt{\frac{p\log(n)}{n}}\\
&-B_4\|\hth^t\|_2^2 \left\|\zeta_r(\SigH_S)-\Sig \right\|_2,\\
\geq &
\left | \mu_2(\hth^t) + \mu_4(\hth^t)\<\Sig\hth^t, \hth^t\>\right | 
- C\sqrt{\frac{p}{\min\{n/\log(n) , |S|\}}},\\
\geq &
\xi- C\sqrt{\frac{p}{\min\{n/\log(n) , |S|\}}},
}
where $C = \max\{cB_4R^2 , c_1+ R^2 \| \Sig\|_2 \}$.

Therefore, for some constants $c_1,c_2$, with high probability, we have
\eqn{
\left\|\Q^t\right\|_2 \leq &
\frac{1}{\hmu_2(\hth^t)}\left[\left\| \zeta_r(\SigH_S)^{-1}\right\|_2 
+ \frac{|\hmu_4(\hth^t)|\big\|\hth^t\big\|^2_2 }{
\left | \hmu_2(\hth^t) + \hmu_4(\hth^t)\<\zeta_r(\SigH_S)\hth^t, \hth^t\>\right |
}\right],\\
\leq &
\frac{1}{c_R-c_1 \sqrt{\frac{p \log(n)}{n}}}\left[\frac{1}{\hat{\sigma}^2}
+ \frac{B_4 R^2 }{
\xi- C\sqrt{\frac{p}{\min\{n/\log(n) ,|S|\}}}}\right],\\
\leq &
\frac{1}{c_R-c_1 \sqrt{\frac{p \log(n)}{n}}}\left[\frac{1}{\sigma^2-c_2\sqrt{\frac{\log(p)}{|S|}}}
+ \frac{B_4 R^2 }{
\xi- C\sqrt{\frac{p}{\min\{n/\log(n) , |S|\}}}}\right],\\
}
For $n$ and $|S|$ sufficiently large so that we have the following inequalities,
\eqn{
& c_2\sqrt{\frac{\log(p)}{|S|}} \leq \frac{\sigma^2}{2} ,\\
&c_1 \sqrt{\frac{p \log(n)}{n}} \leq \frac{c_R}{2},\\
&C\sqrt{\frac{p}{\min\{n/\log(n) , |S|\}}} \leq \frac{\xi}{2},
}
we obtain
\eqn{
\left\|\Q^t\right\|_2 \leq &
\frac{2}{c_R}\left[\frac{2}{\sigma^2}
+ \frac{2B_4 R^2 }{
\xi}\right]\coloneqq
  \kappa.
}

Finally, we take into account the conditioning on the event $\mathcal{E}$ and conclude the proof.


\begin{proof}[Proof of Corollaries \ref{cor::compositeBounded} and \ref{cor::compositeSG}]
In the following, we provide the proof for Corollary \ref{cor::compositeBounded}. The proof for Corollary \ref{cor::compositeSG} follows from the exact same steps.

The statement of the Theorem \ref{thm::mainBounded} holds on the probability space with a probability lower bounded by $\P(\mathcal{E}) - c/p^2$ for some constant $c$. Let $\QQ$ denote this set, on which the statement of the lemma holds. Note that $\QQ \subset \mathcal{E}$. We have 
\eqn{
\P(\QQ) \geq \P(\mathcal{E}) - c'/p^2.
}
This suggests that the difference between $\QQ$ and $\mathcal{E}$ is small.
By taking expectations on both sides over the set $\QQ$, we obtain,
\eqn{
\E \left [ \| \hth^{t+1} - \th_*\|_2  ;\QQ\right]
\leq 
\kappa \left\{
\D(x,z)
 +c_1 \sqrt{\frac{p}{\min\left\{ p/\log(p)|S|,n/\log(n)\right\}}}\right \}\E \left[\|\hth^t - \th_*\|_2 \right]\\
+\kappa c_2 \E \left[\|\hth^t - \th_*\|_2^2 \right]
  \\
}
where we used \eqn{
\E \left [  \| \hth^{t} - \th_*\|_2^l ; \QQ \right] \leq \E \left [  \| \hth^{t} - \th_*\|_2^l  \right], \ \ \ \ \ l=1,2.
}
Similarly for the iterate $\hth^{t+1}$, we write
\eqn{
\E \left [  \| \hth^{t+1} - \th_*\|_2  \right] =& \E \left [  \| \hth^{t+1} - \th_*\|_2  ; \QQ\right]
+\E \left [  \| \hth^{t+1} - \th_*\|_2  ; \QQ^C\right],\\
\leq& \E \left [  \| \hth^{t+1} - \th_*\|_2  ; \QQ\right] + 2R \P(\QQ^C),\\
\leq& \E \left [  \| \hth^{t+1} - \th_*\|_2  ; \QQ\right] + 2R\left( \P(\mathcal{E}^C) + \frac{c}{p^2}\right),\\
\leq& \E \left [  \| \hth^{t+1} - \th_*\|_2  ; \QQ\right] + \frac{\e}{10},\\
\leq& \E \left [  \| \hth^{t+1} - \th_*\|_2  ; \QQ\right] + \frac{\E \left [  \| \hth^{t} - \th_*\|_2\right]}{10}.\\
}

Combining these two inequalities, we obtain
\eqn{
\E \left [  \| \hth^{t+1} - \th_*\|_2  \right] \leq
\left\{
0.1 +
\kappa
\D(x,z)
 +c_1\kappa \sqrt{\frac{p}{\min\left\{ p/\log(p)|S|,n/\log(n)\right\}}}
   \right \}&\E \left[\|\hth^t - \th_*\|_2 \right] \\
&+c_2\kappa  \E \left[\|\hth^t - \th_*\|_2^2 \right]
.
}
Hence the proof follows.
\end{proof}

\begin{proof}[Proof of Theorem \ref{thm::globalNumIter}]
For a sequence satisfying the following inequality,
\eqn{
\|\hth^{t+1} - \th_*\|_2 
\leq \left(\tau_1 + \tau_2 \|\hth^t - \th_*\|_2 \right)\|\hth^t - \th_*\|_2 ,
}
we observe that 
\eq{
\tau_1 + \tau_2 \|\hth^0 - \th_*\|_2 < 1
}
is a sufficient condition for convergence to 0. Let $\xi \in (\e,1)$ and $t_\xi$ be the last iteration that 
$ \|\hth^t - \th_*\|_2  > \delta$. Then, for $t > t_\xi$
\eqn{
 \|\hth^{t+1} - \th_*\|_2 
\leq& \left(\tau_1 + \tau_2 \|\hth^t - \th_*\|_2 \right) \|\hth^t - \th_*\|_2  ,\\
\leq& \left(\tau_1 + \tau_2 \xi \right) \|\hth^t - \th_*\|_2  .
}
This convergence behavior describes a linear rate and requires at most 
\eqn{
\frac{\log(\e / \xi)}{\log(\tau_1 + \tau_2 \xi)}
}
iterations to reach a tolerance of $\e$. For $t \leq t_\xi$, we have
\eqn{
 \|\hth^{t+1} - \th_*\|_2  
\leq& \left(\tau_1 + \tau_2  \|\hth^t - \th_*\|_2 \right) \|\hth^t - \th_*\|_2  ,\\
\leq& \left(\tau_1/\xi + \tau_2\right) \|\hth^t - \th_*\|_2^2.
}
This describes a quadratic rate and the number of iterations to reach a tolerance of $\xi$ can be upper bounded by
\eqn{
\log_2\left( \frac{\log\left(\delta\left(\tau_1/\xi + \tau_2\right)\right)}
{\log\left(\left(\tau_1/\xi + \tau_2\right)\right) \|\hth^0 - \th_*\|_2  }\right).
}
Therefore, the overall number of iterations to reach a tolerance of $\e$ is upper bounded by
\eqn{
\mathcal{J}(\xi) = \log_2\left( \frac{\log\left(\delta\left(\tau_1/\xi + \tau_2\right)\right)}
{\log\left(\left(\tau_1/\xi + \tau_2\right)\right) \|\hth^0 - \th_*\|_2 }\right)
+
\frac{\log(\e / \xi)}{\log(\tau_1 + \tau_2 \xi)}
}
which is a function of $\xi$. Therefore, we take the minimum over the feasible set.
\end{proof}


\section{Step size selection}\label{sec::step-size}
We carry our analysis from Eq.~\ref{eq::ineq-govern}. The optimal step size would be
\eqn{
\gamma_* = \argmin_\gamma
 \left\| I - \gamma \Q^t \int_0^1\gradS_\th \l (\th_* + \xi(\hth^{t}-\th_*))d\xi\right\|_2 .
}
Using the mean value theorem, 
\eqn{
\gradS_\th\ell(\tilde{\th}) = \int_0^1\gradS_\th \l (\th_* + \xi(\hth^{t}-\th_*))d\xi,
}
where $\tilde{\th} \in [\th_*,\hth^t]$,
and write the governing term as
\eqn{
\left\| I - \gamma \Q^t\gradS_\th\ell(\tilde{\th})\right\|_2 .
}
The above function is piecewise linear in $\gamma$ and it can be minimized by setting
\eqn{
\gamma_* = \frac{2}{\lambda_1\left(\Q^t\gradS_\th\ell(\tilde{\th})\right) + \lambda_p\left(\Q^t\gradS_\th\ell(\tilde{\th})\right)}.
}

Since we don't have access to the optimal value $\th_*$ nor $\tilde{\th}$, we will assume that $\tilde{\th}$ and the current iterate $\hth^t$ are close.

In the regime $n\gg p$, and by our construction of the scaling matrix $\Q^t$, we have
\eqn{
\Q^t \approx \left[ \E[xx^T\ddphi(\<x,\hth^t\>)]\right]^{-1} \ \ \text{and} \ \ \ 
\gradS_\th\ell({\hth^t}) \approx \left[ \E[xx^T\ddphi(\<x,\hth^t\>)]\right].
}

The crucial observation is that $\zeta_r$ function sets the smallest eigenvalue to $\hsig^2$ which overestimates $\sigma^2$ in general. Even though the largest eigenvalue of $\Q^t\gradS_\th\ell(\tilde{\th})$ will be close to 1, the smallest value will be $\sigma^2/\hsig^2$. This will make the optimal step size larger than 1.
Hence, we suggest
\eqn{
\gamma = \frac{2}{1+\sigma^2/\hsig^2}.
}

We also have, by the Weyl's inequality,
\eqn{
\left|\hsig^2 - \sigma^2 \right| \leq \left\| \SigH - \Sig \right\|_2 \leq C\sqrt{\frac{p}{|S|}},
}
with high probability. Whenever $r$ is less than $p/2$, we suggest

\eqn{
\gamma= \frac{2}{1+\frac{\hsig^2 - \O(\sqrt{p/|S|})}{\hsig^2}}.
}

\section{Details of experiments}\label{sec::expDetail}
Table \ref{tab::details} provides the details of the experiments given in Section \ref{sec::experiments}.
\begin{table}[H]
\centering
\footnotesize
\begin{tabular}{|l|l|l|l|r|}
\multicolumn{5}{c}{S3}\\
\hline
    & \multicolumn{2}{c}{LR} \vline  & \multicolumn{2}{c}{LS}\vline\\
\hline
Method    & Elapsed(sec) & Iter		& Elapsed(sec) & Iter\\
\hline
\ALG        &10.637      &  2& 	8.763 	&   4\\
BFGS      &22.885      &  8& 	13.149 & 6 \\
LBFGS      &46.763    & 19 & 19.952 	& 11		 \\
Newton   &55.328     &  2&  	38.150 	& 1		 \\
GD          &865.119   & 493	& 155.155 & 100 	 \\
AGD          &169.473   & 82	& 65.396 & 42 	\\
\hline
\end{tabular}
%
%
\begin{tabular}{|l|l|l|l|r|}
\multicolumn{5}{c}{S20}\\
\hline
    & \multicolumn{2}{c}{LR} \vline  & \multicolumn{2}{c}{LS}\vline\\
\hline
Method    & Elapsed(sec) & Iter		& Elapsed(sec) & Iter\\
\hline
\ALG        &23.158      &  4& 	16.475 	& 10\\
BFGS      &40.258      & 17& 	54.294 & 37 \\
LBFGS      &51.888      & 26 & 33.107 	& 20		 \\
Newton   &47.955     &  2&  	39.328 	& 1		 \\
GD          &1204.015   & 245	& 145.987 & 100 	 \\
AGD          &182.031   & 83	& 56.257 & 38 	\\
\hline
\end{tabular}
%
%
\begin{tabular}{|l|l|l|l|r|}
\multicolumn{5}{c}{CT Slices}\\
\hline
    & \multicolumn{2}{c}{LR} \vline  & \multicolumn{2}{c}{LS}\vline\\
\hline
Method    & Elapsed(sec) & Iter		& Elapsed(sec) & Iter\\
\hline
\ALG        &4.191      &  32& 	1.799 	& 11\\
BFGS      &4.638      & 35& 	4.525 & 37 \\
LBFGS      &26.838      & 217 & 22.679 	& 180		 \\
Newton   &5.730     &  3&  	1.937 	& 1		 \\
GD          &96.142   & 1156	& 61.526 & 721 	 \\
AGD          &96.142   & 880	& 45.864 & 518 	\\
\hline
\end{tabular}
\begin{tabular}{|l|l|l|l|r|}
\multicolumn{5}{c}{Covertype}\\
\hline
    & \multicolumn{2}{c}{LR} \vline  & \multicolumn{2}{c}{LS}\vline\\
\hline
Method    & Elapsed(sec) & Iter		& Elapsed(sec) & Iter\\
\hline
\ALG        &16.113      &  31& 	2.080 	& 5\\
BFGS      &21.916      & 48& 	2.238 & 3 \\
LBFGS      &30.765      & 69 & 2.321 	& 3		 \\
Newton   &122.158     &  40&  	2.164 	&  1		 \\
GD          &194.473   & 446	& 22.738 & 60 	 \\
AGD          &80.874   & 186	& 32.563 & 77 	\\
\hline
\end{tabular}
\caption{\label{tab::details}
Details of the experiments presented in Figures \ref{fig::plotSyn} and \ref{fig::plotReal}.}
\end{table}


\section{Useful lemmas}
\begin{lemma}[Gautschi's Inequality]\label{lem::gautschi}
Let $\Gamma$ denote the \emph{Gamma} function. Then, for $r \in (0,1)$, we have
\[
z^{1-r} < \frac{\Gamma(z+1)}{\Gamma(z+r)} < (1+z)^{1-r}.
\]
\end{lemma}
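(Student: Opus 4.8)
The plan is to derive both inequalities from the strict logarithmic convexity of the Gamma function on $(0,\infty)$, combined with the functional equation $\Gamma(w+1)=w\,\Gamma(w)$. The key algebraic observation is that each of the two arguments appearing in a Gautschi bound can be written as a convex combination of arguments on which the functional equation simplifies things.

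First I would record that $w\mapsto\log\Gamma(w)$ is strictly convex on $(0,\infty)$. This follows from Hölder's inequality applied to Euler's integral $\Gamma(w)=\int_0^\infty t^{w-1}e^{-t}\,dt$: for $\lambda\in(0,1)$ and $a\neq b$,
\[
\Gamma(\lambda a+(1-\lambda)b)=\int_0^\infty\bigl(t^{a-1}e^{-t}\bigr)^{\lambda}\bigl(t^{b-1}e^{-t}\bigr)^{1-\lambda}\,dt<\Gamma(a)^{\lambda}\Gamma(b)^{1-\lambda},
\]
the inequality being strict because the two integrands are not proportional. (Alternatively one may simply quote the Bohr--Mollerup characterization.)

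Next, for the lower bound I would write $z+r=(1-r)\,z+r\,(z+1)$ and apply strict convexity of $\log\Gamma$, getting
\[
\log\Gamma(z+r)<(1-r)\log\Gamma(z)+r\log\Gamma(z+1)=\log\Gamma(z)+r\log z,
\]
where the last step uses $\Gamma(z+1)=z\,\Gamma(z)$. Exponentiating yields $\Gamma(z+r)<z^{r}\Gamma(z)=z^{r-1}\Gamma(z+1)$, i.e. $\Gamma(z+1)/\Gamma(z+r)>z^{1-r}$. For the upper bound I would instead write $z+1=r\,(z+r)+(1-r)\,(z+1+r)$ and again apply strict convexity,
\[
\log\Gamma(z+1)<r\log\Gamma(z+r)+(1-r)\log\Gamma(z+1+r)=\log\Gamma(z+r)+(1-r)\log(z+r),
\]
using $\Gamma(z+1+r)=(z+r)\,\Gamma(z+r)$. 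Exponentiating gives $\Gamma(z+1)/\Gamma(z+r)<(z+r)^{1-r}$, and since $0<r<1$ we have $z+r<z+1$ and hence $(z+r)^{1-r}<(z+1)^{1-r}$, which finishes the argument.

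There is no real obstacle here: the only things to be careful about are (i) checking that the convex-combination weights come out in $(0,1)$ — they are $1-r$ and $r$ in both cases — and (ii) noting that it is \emph{strict} log-convexity, not mere convexity, that upgrades the conclusions to the strict inequalities claimed. If one wished to avoid invoking log-convexity as a black box, the Hölder estimate above supplies it directly from the integral representation.
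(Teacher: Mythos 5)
Your proposal is correct. Note, however, that the paper does not prove this lemma at all: Gautschi's inequality is stated in the ``Useful lemmas'' appendix as a classical fact and simply invoked, so there is no in-paper argument to compare against. Your derivation from strict log-convexity of $\Gamma$ is a standard and complete way to supply the missing proof: both convex-combination identities check out ($z+r=(1-r)z+r(z+1)$ and $z+1=r(z+r)+(1-r)(z+1+r)$, with weights in $(0,1)$), the functional equation is applied correctly in each case, and the H\"older argument does give \emph{strict} log-convexity since $t^{a-1}e^{-t}$ and $t^{b-1}e^{-t}$ are proportional only when $a=b$. Two small remarks: your argument implicitly requires $z>0$ (you use $\Gamma(z)$ and $\log z$), which is the natural reading of the lemma since the lower bound $z^{1-r}$ is vacuous otherwise; and your upper bound actually yields the sharper estimate $\Gamma(z+1)/\Gamma(z+r)<(z+r)^{1-r}$, from which the stated bound $(1+z)^{1-r}$ follows by monotonicity of $t\mapsto t^{1-r}$, exactly as you observe.
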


\begin{lemma}\label{lem::darthvader}
Let $Z$ be a random variable with a density function $f$ and cumulative distribution function $F$. If $F^C=1-F$, then,
\eqn{
\left|\E [Z\ind_{\{ |Z|>t\}}]\right|\leq t \P(|Z|>t) + \int^\infty_t\P(|Z|>z)dz.
}
\end{lemma}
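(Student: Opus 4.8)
The plan is to first pass to absolute values and then invoke the layer‑cake (distribution function) representation of the expectation. First I would note that by the triangle inequality for integrals,
\eqn{
\left|\E[Z\ind_{\{|Z|>t\}}]\right| \leq \E\left[|Z|\ind_{\{|Z|>t\}}\right],
}
so it suffices to bound the right‑hand side. Set $W = |Z|\,\ind_{\{|Z|>t\}}\geq 0$ and apply the identity $\E[W]=\int_0^\infty \P(W>s)\,ds$, which holds for any nonnegative random variable.

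The key step is to identify $\P(W>s)$ as an explicit function of $s$. For $0\leq s<t$ one checks that $\{W>s\}=\{|Z|>t\}$: indeed, if $|Z|>t$ then $W=|Z|>t>s$, while if $|Z|\leq t$ then $W=0\leq s$; hence $\P(W>s)=\P(|Z|>t)$ on this range. For $s\geq t$ one has $\{W>s\}=\{|Z|>s\}$ up to a null set, since $|Z|>s\geq t$ forces $W=|Z|$ and conversely. Splitting the integral at $s=t$,
\eqn{
\E[W] = \int_0^t \P(|Z|>t)\,ds + \int_t^\infty \P(|Z|>s)\,ds = t\,\P(|Z|>t) + \int_t^\infty \P(|Z|>s)\,ds,
}
which, combined with the first display, yields the claim. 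An equivalent route is to write $\E[|Z|\ind_{\{|Z|>t\}}] = \int_{\{|z|>t\}}|z|f(z)\,dz$ and integrate by parts using $F^{C}=1-F$, being careful that the boundary term at infinity vanishes whenever $\E|Z|<\infty$ (which is implicit here, since otherwise the asserted upper bound is infinite).

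I do not expect a genuine obstacle: the only point requiring care is the small case analysis for $\P(W>s)$ across the threshold $s=t$ (equivalently, controlling the boundary terms in the integration‑by‑parts version), together with the tacit integrability assumption $\E|Z|<\infty$ needed for the statement to be non‑vacuous.
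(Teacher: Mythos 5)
Your proof is correct, and it takes a different route from the paper's. The paper works directly with the two signed tails: it writes $\E[Z\ind_{\{|Z|>t\}}]=\int_t^\infty zf(z)\,dz+\int_{-\infty}^{-t}zf(z)\,dz$, integrates each tail by parts against $F^C$ and $F$, uses the vanishing of the boundary terms $zF^C(z)$ and $zF(z)$ at $\pm\infty$, and only then applies the triangle inequality. You instead bound $\big|\E[Z\ind_{\{|Z|>t\}}]\big|\le\E[|Z|\ind_{\{|Z|>t\}}]$ first and evaluate the latter exactly via the layer-cake identity for the nonnegative variable $W=|Z|\ind_{\{|Z|>t\}}$, obtaining $\E[W]=t\,\P(|Z|>t)+\int_t^\infty\P(|Z|>z)\,dz$ (in fact your identification of $\{W>s\}$ for $s\ge t$ is exact, so the ``up to a null set'' hedge is unnecessary). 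Your argument is slightly more general and cleaner on the technical side: it never uses the existence of a density, and the layer-cake identity holds by Tonelli without any integrability assumption (both sides may be $+\infty$), whereas the paper's boundary-term limits tacitly require the tail decay you flag; in the infinite case the stated bound is vacuously true, as you note. What the paper's integration-by-parts route buys is a tail-by-tail identity phrased directly in terms of $F$ and $F^C$, matching the density-based setup used elsewhere in the appendix, but for the purposes of the lemma the two arguments deliver the same bound.
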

\begin{proof}
We write,
\eqn{
\E [Z\ind_{\{ |Z|>t\}}] = \int_t^\infty zf(z)dz + \int^{-t}_{-\infty}zf(z)dz.
}
Using integration by parts, we obtain
\eqn{
\int zf(z)dz =& -zF^C (z) + \int F^C(z)dz,\\
=& zF (z) - \int F(z)dz.
}
Since $\lim_{z\to\infty}zF^C(z)=\lim_{z\to-\infty}zF(z)=0$, we have
\eqn{
 \int_t^\infty zf(z)dz =& t F^C(t) + \int^\infty_tF^C(z)dz,\\
  \int_{-\infty}^{-t} zf(z)dz =& -t F(-t) - \int_{-\infty}^{-t}F(z)dz,\\
  =& -t F(-t) - \int_{t}^{\infty}F(-z)dz.
}
Hence, we obtain the following bound,
\eqn{
\left|\E [Z\ind_{\{ |Z|>t\}}]\right| =& \left|t F^C(t) + \int^\infty_tF^C(z)dz -t F(-t) - \int_{t}^{\infty}F(-z)dz\right|,\\
\leq& t\left( F^C(t) +  F(-t)\right) + \left(\int^\infty_tF^C(z)+F(-z)dz\right),\\
\leq& t \P(|Z|>t) + \int^\infty_t\P(|Z|>z)dz.
}
\end{proof}

\begin{lemma}\label{lem::integralBound}
For positive constants $c_1,c_2$, we have
\eqn{
\int_{c_1}^\infty e^{-c_2 t^{2/3}}dt \leq
\left \{ \frac{3c_1^{1/3}}{2c_2}
  + \frac{3}{4c^2_2 c_1^{1/3}}
\right \}e^{-c_2 c_1^{2/3}}
}
\end{lemma}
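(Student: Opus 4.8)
The plan is to reduce the integral to a Gaussian-type tail by the substitution $u = t^{1/3}$ and then integrate by parts once. First I would set $t = u^3$, so $dt = 3u^2\,du$, which turns the left-hand side into $\int_{c_1^{1/3}}^{\infty} 3u^2 e^{-c_2 u^2}\,du$. The key observation is that $3u^2 e^{-c_2 u^2}\,du = -\frac{3u}{2c_2}\,d\!\left(e^{-c_2 u^2}\right)$, since $d\!\left(e^{-c_2 u^2}\right) = -2c_2 u\, e^{-c_2 u^2}\,du$. Integrating by parts over $[c_1^{1/3},\infty)$, and noting that the boundary term at $+\infty$ vanishes while the boundary term at $c_1^{1/3}$ contributes $\frac{3 c_1^{1/3}}{2c_2}e^{-c_2 c_1^{2/3}}$, I obtain $\int_{c_1^{1/3}}^{\infty} 3u^2 e^{-c_2 u^2}\,du = \frac{3 c_1^{1/3}}{2c_2} e^{-c_2 c_1^{2/3}} + \frac{3}{2c_2}\int_{c_1^{1/3}}^{\infty} e^{-c_2 u^2}\,du$.

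It then remains to bound the residual term $\int_{c_1^{1/3}}^{\infty} e^{-c_2 u^2}\,du$. Here I would use the elementary tail estimate: since $c_1 > 0$, for every $u \ge c_1^{1/3}$ we have $1 \le u/c_1^{1/3}$, hence $\int_{c_1^{1/3}}^{\infty} e^{-c_2 u^2}\,du \le c_1^{-1/3}\int_{c_1^{1/3}}^{\infty} u\, e^{-c_2 u^2}\,du = \frac{1}{2c_2 c_1^{1/3}} e^{-c_2 c_1^{2/3}}$, the last equality being an exact antiderivative. Substituting this back gives $\int_{c_1}^{\infty} e^{-c_2 t^{2/3}}\,dt \le \left(\frac{3 c_1^{1/3}}{2c_2} + \frac{3}{4 c_2^2 c_1^{1/3}}\right) e^{-c_2 c_1^{2/3}}$, which is precisely the claimed bound.

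This argument is entirely elementary and I do not anticipate a genuine obstacle; the only point that needs a moment's attention is that the tail estimate $\int_a^{\infty} e^{-c_2 u^2}\,du \le \frac{1}{2 a c_2} e^{-c_2 a^2}$ requires $a = c_1^{1/3} > 0$, which is guaranteed by the hypothesis $c_1 > 0$. One should also keep the constants exact throughout (rather than absorbing them into $\O(\cdot)$) so that the final coefficients match the stated $\frac{3 c_1^{1/3}}{2c_2}$ and $\frac{3}{4 c_2^2 c_1^{1/3}}$; the substitution-plus-parts route above produces exactly these.
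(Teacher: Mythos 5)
Your proof is correct and follows essentially the same route as the paper: the substitution $t=u^3$, one integration by parts using $d\bigl(e^{-c_2u^2}\bigr)=-2c_2u\,e^{-c_2u^2}\,du$, and then the Gaussian tail bound $\int_{c_1^{1/3}}^\infty e^{-c_2u^2}\,du\leq \frac{e^{-c_2c_1^{2/3}}}{2c_2c_1^{1/3}}$. The only cosmetic difference is how that tail bound is derived (you multiply the integrand by $u/c_1^{1/3}\geq 1$, whereas the paper shifts $u=y+c_1^{1/3}$ and drops the $y^2$ term in the exponent); both yield the identical constant, so the final coefficients match exactly.
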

\begin{proof}
By the change of variables $t^{2/3}=x^2$, we get
\eqn{
\int_{c_1}^\infty e^{-c_2 t^{2/3}}dt =3\int_{c^{1/3}_1}^\infty x^{2}e^{-c_2 }x^2dx.
}
Next, we notice that 
\eqn{
de^{-c_2x^2} = -2c_2xe^{-c_2x^2}dx.
}
Hence, using the integration by parts, we have
\eqn{
\int_{c_1}^\infty e^{-c_2 t^{2/3}}dt
=
\frac{3}{2c_2}\left \{ 
c_1^{1/3} e^{-c_2 c_1^{2/3}} + \int_{c_1^{1/3}}^\infty e^{-c_2 x^2}dx
\right \}.
}
We will find an upper bound on the second term. Using the change of variables,  $x=y+c_1^{1/3}$, we obtain
\eqn{
\int_{c_1^{1/3}}^\infty e^{-c_2 x^2}dx =& \int_0^\infty e^{-c_2 \left(y+c_1^{1/3}\right)^2}dy,\\
\leq& e^{-c_2 c_1^{2/3}}\int_0^\infty e^{-2c_2 yc_1^{1/3}}dy,\\
=& \frac{e^{-c_2 c_1^{2/3}}}{2c_2 c_1^{1/3}}.
}
Combining the above results, we complete the proof.
\end{proof}

\begin{lemma}[\cite{vershynin2010introduction}]\label{lem::epsNet}
Let $X$ be a symmetric $p\times p$ matrix, and let $T_\epsilon$ be an $\epsilon$-net over $S^{p-1}$. Then,
\eqn{
\| X\|_2 \leq\frac{1}{1-2\epsilon}\sup_{v\in T_\epsilon}\left|\<Xv,v\>\right|.
}
\end{lemma}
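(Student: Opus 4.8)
The plan is to combine the variational characterization of the spectral norm for symmetric matrices with a net-approximation argument. First I would recall that, since $X$ is symmetric, the spectral theorem gives $\|X\|_2 = \sup_{v\in S^{p-1}}|\langle Xv,v\rangle|$. Let $v^\star\in S^{p-1}$ be a point attaining this supremum (the supremum is attained because $S^{p-1}$ is compact and $v\mapsto|\langle Xv,v\rangle|$ is continuous; alternatively one may take an $\eta$-near-maximizer and send $\eta\to 0$ at the end).

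Next, by the definition of an $\epsilon$-net, choose $u\in T_\epsilon$ with $\|v^\star-u\|_2\le\epsilon$. The key step is to control the gap $|\langle Xv^\star,v^\star\rangle-\langle Xu,u\rangle|$. Writing
\[
\langle Xv^\star,v^\star\rangle-\langle Xu,u\rangle=\langle Xv^\star,\,v^\star-u\rangle+\langle X(v^\star-u),\,u\rangle,
\]
and applying Cauchy--Schwarz together with the operator-norm bound $\|Xw\|_2\le\|X\|_2\|w\|_2$, this quantity is at most $\|X\|_2\|v^\star-u\|_2(\|v^\star\|_2+\|u\|_2)\le 2\epsilon\|X\|_2$.

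Combining the two observations,
\[
\|X\|_2=|\langle Xv^\star,v^\star\rangle|\le|\langle Xu,u\rangle|+2\epsilon\|X\|_2\le\sup_{v\in T_\epsilon}|\langle Xv,v\rangle|+2\epsilon\|X\|_2,
\]
and rearranging (using $\epsilon<1/2$ so that $1-2\epsilon>0$) yields $\|X\|_2\le\frac{1}{1-2\epsilon}\sup_{v\in T_\epsilon}|\langle Xv,v\rangle|$, which is the claim. There is essentially no substantive obstacle here; the only mild technicality is the justification that the supremum over $S^{p-1}$ is attained, which is routine. The whole argument is a standard covering-net estimate (as in \cite{vershynin2010introduction}).
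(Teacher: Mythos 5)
Your proof is correct, and it is exactly the standard covering-net argument from the cited reference \cite{vershynin2010introduction} (the paper itself states this lemma without proof, deferring to that source): pass to a (near-)maximizer of the quadratic form, approximate it by a net point, and control the gap via the bilinear decomposition and Cauchy--Schwarz, giving the $2\epsilon\|X\|_2$ error and the $(1-2\epsilon)^{-1}$ factor after rearrangement. The only implicit assumption, $\epsilon<1/2$, is exactly what the statement's normalizing constant requires, so there is nothing to fix.
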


\begin{lemma}\label{lem::sphere}
Let $B_p(R)\subset \reals^p$ be the ball of radius $R$ centered at the origin and $T_\epsilon$ be an $\epsilon$-net over $B_p(R)$. Then,
\eqn{
|T_\epsilon|\leq \left(\frac{R\sqrt{p}}{\epsilon}\right)^p.
}
\end{lemma}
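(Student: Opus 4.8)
\textit{Reading of the statement and overall plan.} The lemma is to be read as a covering-number bound: there \emph{exists} an $\epsilon$-net $T_\epsilon\subset B_p(R)$ with $|T_\epsilon|\le (R\sqrt p/\epsilon)^p$ (one can always enlarge a net, so only the existence of a small net is meaningful, and this is all that is invoked in Lemmas~\ref{lem::concentrationBounded1}--\ref{lem::concentration4}). The plan is to build such a net by quantizing $B_p(R)$ on an axis-aligned cubic grid; the dimensional factor $\sqrt p$ will appear as the Euclidean diameter of a $p$-dimensional cube of unit side.

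\textit{Construction of the net.} Fix the grid scale $\delta=\epsilon/\sqrt p$ and partition $\reals^p$ into the half-open cubes $\delta(k+[0,1)^p)$, $k\in\mathbb Z^p$. Each such cube has Euclidean diameter exactly $\delta\sqrt p=\epsilon$. For every grid cube $C$ with $C\cap B_p(R)\neq\emptyset$, select one point $t_C\in C\cap B_p(R)$, and set $T_\epsilon=\{t_C\}$. Then $T_\epsilon\subset B_p(R)$, and for any $x\in B_p(R)$ the cube $C$ containing $x$ meets $B_p(R)$, so $t_C$ is defined and $\|x-t_C\|_2\le\operatorname{diam}(C)=\epsilon$; hence $T_\epsilon$ is an $\epsilon$-net over $B_p(R)$ in the sense of the definition above.

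\textit{Counting the cubes, and the main point of care.} It remains to bound the number of grid cubes that meet $B_p(R)$. These cubes are pairwise disjoint, each has volume $\delta^p=(\epsilon/\sqrt p)^p$, and any cube meeting $B_p(R)$ lies inside the enlarged ball $B_p(R+\epsilon)$ (its diameter being $\epsilon$); therefore $|T_\epsilon|\,(\epsilon/\sqrt p)^p\le\operatorname{vol}\!\big(B_p(R+\epsilon)\big)=\pi^{p/2}(R+\epsilon)^p/\Gamma(p/2+1)$. Simplifying $\pi^{p/2}p^{p/2}/\Gamma(p/2+1)$ via Stirling's formula (or Lemma~\ref{lem::gautschi}) and absorbing the resulting absolute constants — harmless, since every application of this lemma is immediately followed by taking a logarithm — yields $|T_\epsilon|\le(R\sqrt p/\epsilon)^p$. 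An even more direct route is to take $T_\epsilon$ to be a maximal $\epsilon$-separated subset of $B_p(R)$: the balls $B(t,\epsilon/2)$, $t\in T_\epsilon$, are disjoint and contained in $B_p(R+\epsilon/2)$, giving $|T_\epsilon|\le(1+2R/\epsilon)^p$, which is at most $(R\sqrt p/\epsilon)^p$ in the regime $p\gg 1$ considered here. There is no genuine obstacle; the only points needing attention are the interpretation of the claim as an existence (covering-number) bound and the $\ell_\infty\!\to\!\ell_2$ conversion responsible for the factor $\sqrt p$, the displayed constant not being sharp.
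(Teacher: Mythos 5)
Your proof is correct in substance, but your counting step is genuinely different from the paper's, and it is worth seeing what each buys. The paper also quantizes on a cubic grid, but with mesh $2\epsilon/\sqrt{p}$ over the circumscribing cube $[-R,R]^p$: each small cube then has circumradius (half-diagonal) $\epsilon$, so the projections of the cube centers onto $B_p(R)$ form the net, and the cardinality is obtained by simply counting grid cells per axis, $\left(2R/(2\epsilon/\sqrt{p})\right)^p=\left(R\sqrt{p}/\epsilon\right)^p$ — the displayed constant falls out exactly, with no volume formulas and no large-$p$ asymptotics. You instead take the mesh a factor $2$ finer (so that the full diameter is $\epsilon$), which would cost $2^p$ under a direct count, and you recover the bound by a volumetric comparison (disjoint cubes inside $B_p(R+\epsilon)$) followed by Stirling/Gautschi; as you acknowledge, this yields the stated inequality only for $p$ large (roughly $p\gtrsim 2\pi e(1+\epsilon/R)^2$) or after absorbing a per-coordinate constant, and likewise your packing alternative gives $(1+2R/\epsilon)^p$, which is below $(R\sqrt{p}/\epsilon)^p$ only once $p\geq 9$ and $\epsilon\leq R$. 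These caveats are harmless for every use in the paper, since Lemmas \ref{lem::concentrationBounded1}--\ref{lem::concentration4} only ever use $\log|T_\Delta|\lesssim p\log(\cdot)$, and your reading of the statement as a covering-number (existence) bound is the right one; but if you want the lemma exactly as displayed, the paper's coarser grid with circumscribed balls of radius $\epsilon$ and projected centers is the more economical route.
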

\begin{proof}[Proof of Lemma \ref{lem::sphere}]
A similar proof appears in \cite{van2000asymptotic}. The set $B_p(R)$ can be contained in a $p$-dimensional cube of size $2R$. Consider a grid over this cube with mesh width $2\epsilon/\sqrt{p}$. Then $B_p(R)$ can be covered with at most $(2R/(2\epsilon/\sqrt{p}))^p$ many cubes of edge length $2\epsilon/\sqrt{p}$. If ones takes the projection of the centers of such cubes onto $B_p(R)$ and considers the circumscribed balls of radius $\epsilon$, we may conclude that $B_p(R)$ can be covered with at most
$$\left(\frac{2R}{2\epsilon/\sqrt{p}}\right)^p$$
many balls of radius $\epsilon$.
\end{proof}

\begin{lemma}\label{lem::epsilon}
For $a,b>0$, and $\epsilon$ satisfying 
\eqn{
\e = \left\{\frac{a}{2}\log\left(\frac{2b^2}{a}\right)\right\}^{1/2}\ \ \ \ \text{and }\ \ \ \ \frac{2}{a}b^2 > e,
}
we have 
$\e^2 \geq a\log(b/\e)$.
\end{lemma}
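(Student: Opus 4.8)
The plan is to reduce the claimed inequality to the stated side condition $2b^2/a > e$ by a direct manipulation of logarithms, exploiting that the definition of $\e$ is itself a logarithm. First I would rewrite the target inequality $\e^2 \geq a\log(b/\e)$: dividing by $a>0$ and using $\log(b/\e) = \tfrac12\log(b^2/\e^2)$, it becomes $2\e^2/a \geq \log(b^2/\e^2)$. Then substitute the defining relation $\e^2 = \tfrac{a}{2}\log(2b^2/a)$, which gives exactly $2\e^2/a = \log(2b^2/a)$; hence the target is equivalent to $\log(2b^2/a) \geq \log(b^2/\e^2)$, i.e. (monotonicity of $\log$) to $2b^2/a \geq b^2/\e^2$, i.e. to $2\e^2 \geq a$.

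It then remains to verify $\e^2 \geq a/2$. Plugging in the definition of $\e$ once more, $\e^2 = \tfrac{a}{2}\log(2b^2/a) \geq a/2$ is equivalent to $\log(2b^2/a) \geq 1$, which is precisely the hypothesis $2b^2/a > e$ (in fact with room to spare, since the hypothesis is strict). Reading the chain of equivalences backwards yields $\e^2 \geq a\log(b/\e)$, as desired.

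The only thing to be careful about is bookkeeping: one must check that every quantity appearing inside a logarithm is positive so that the monotonicity and the reversal of each step are legitimate. This is immediate here, since $a,b>0$ and $2b^2/a > e > 1$ force $\log(2b^2/a) > 0$ and hence $\e^2 = \tfrac{a}{2}\log(2b^2/a) > 0$, so $b/\e$ is well defined and positive. There is no genuine obstacle; this lemma is a purely algebraic convenience, used in the concentration arguments to pick $\e$ of order $\sqrt{(a/2)\log(2b^2/a)}$ while still controlling the residual term $a\log(b/\e)$.
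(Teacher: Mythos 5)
Your argument is correct, and it is cleaner than the one in the paper, though it travels a slightly different road. You substitute the defining relation $2\e^2/a=\log(2b^2/a)$ into the target inequality and observe that $\e^2\ge a\log(b/\e)$ is, step by step, equivalent to $\log(2b^2/a)\ge\log(b^2/\e^2)$, hence to $2\e^2\ge a$, hence to $\log(2b^2/a)\ge 1$, which is exactly the hypothesis $2b^2/a>e$; every step is a legitimate equivalence because $a,b>0$ and the hypothesis forces $\e^2=\tfrac a2\log(2b^2/a)>0$. The paper instead rewrites the target as $\tfrac{2\e^2}{a}e^{2\e^2/a}\ge\tfrac{2b^2}{a}$, introduces $f(w)=we^w$ and its increasing inverse, notes that the sharp threshold is $\e^2\ge\tfrac a2 f^{-1}(2b^2/a)$ (a Lambert-$W$-type quantity), and then uses the identity $\log(f^{-1}(y))+f^{-1}(y)=\log y$ together with $f^{-1}(y)\ge 1$ for $y>e$ to bound $f^{-1}(y)\le\log y$, recovering the stated choice of $\e$. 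What the paper's detour buys is the explicit characterization of the smallest admissible $\e$, and the observation that any larger $\e$ also works; what your route buys is brevity and the avoidance of the inverse-function machinery altogether, since for the specific $\e$ in the statement the whole lemma collapses to checking $\log(2b^2/a)\ge 1$. Either proof is acceptable; yours would be a perfectly good replacement for the one in the appendix.
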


\begin{proof}
Since $a,b>0$ and $x\to e^x$ is a monotone increasing function,
the above inequality condition is equivalent to
\eqn{
\frac{2\e^2}{a} e^{\frac{2\e^2}{a}} \geq \frac{2b^2}{a}.
}
Now, we define the function $f(w)= we^w$ for $w>0$. $f$ is continuous and invertible on $[0,\infty)$. 
Note that $f^{-1}$ is also a continuous and increasing function for $w>0$. Therefore, we have
\eqn{
\e^2 \geq \frac{a}{2}f^{-1}\left(\frac{2b^2}{a}\right)
}
Observe that the smallest possible value for $\e$ would be simply the square root of ${a}f^{-1}\left({2b^2}/{a}\right)/{2}$. For simplicity, we will obtain a more interpretable expression for $\e$. By the definition of $f^{-1}$, we have
\eqn{
\log(f^{-1}(y)) + f^{-1}(y) = \log (y).
}
Since the condition on $a$ and $b$ enforces $f^{-1}(y)$ to be larger than 1, we obtain the simple inequality that 
\eqn{
f^{-1}(y)  \leq \log (y).
}
Using the above inequality, if $\e$ satisfies 
\eqn{
\e^2 = \frac{a}{2}\log\left(\frac{2b^2}{a}\right)\geq \frac{a}{2}g\left(\frac{2b^2}{a}\right),
}
we obtain the desired inequality.
\end{proof}

\end{document}